\renewcommand*{\backref}[1]{\ifx#1\relax \else Page #1 \fi}
\renewcommand*{\backrefalt}[4]{%
    \ifcase #1 \footnotesize{(Not cited.)}%
    \or        \footnotesize{(Cited on page~#2.)}%
    \else      \footnotesize{(Cited on pages~#2.)}%
    \fi}
\newcommand{\brackets}[1]{\left[ #1 \right]}
\newcommand{\parenth}[1]{\left( #1 \right)}
\newcommand{\sech}{\text{sech}}
\newcommand{\Exs}{\ensuremath{{\mathbb{E}}}}
\newtheoremstyle{named}{}{}{\itshape}{}{\bfseries}{.}{.5em}{\thmnote{#3's }#1}
\theoremstyle{named}
\theoremstyle{plain}
\newtheorem{theorem}{Theorem}
\newtheorem{proposition}{Proposition}
\newtheorem{lemma}{Lemma}
\newtheorem{corollary}{Corollary}
\newlength{\widebarargwidth}
\newlength{\widebarargheight}
\newlength{\widebarargdepth}
\long\def\@makecaption#1#2{
        \vskip 0.8ex
        \setbox\@tempboxa\hbox{\small {\bf #1:} #2}
        \parindent 1.5em  
        \dimen0=\hsize
        \advance\dimen0 by -3em
        \ifdim \wd\@tempboxa >\dimen0
                \hbox to \hsize{
                        \parindent 0em
                        \hfil
                        \parbox{\dimen0}{\def\baselinestretch{0.96}\small
                                {\bf #1.} #2
                                }
                        \hfil}
        \else \hbox to \hsize{\hfil \box\@tempboxa \hfil}
        \fi
        }
\long\def\comment#1{}
\definecolor{battleshipgrey}{rgb}{0.52, 0.52, 0.51}
\definecolor{darkgray}{rgb}{0.66, 0.66, 0.66}
\definecolor{darkgreen}{rgb}{0.0, 0.2, 0.13}
\definecolor{darkspringgreen}{rgb}{0.09, 0.45, 0.27}
\definecolor{dukeblue}{rgb}{0.0, 0.0, 0.61}
\definecolor{olivedrab7}{rgb}{0.24, 0.2, 0.12}
\definecolor{darkblue}{rgb}{0.0, 0.0, 0.55}
\definecolor{darkscarlet}{rgb}{0.34, 0.01, 0.1}
\definecolor{candyapplered}{rgb}{1.0, 0.03, 0.0}
\definecolor{ao(english)}{rgb}{0.0, 0.5, 0.0}
\definecolor{applegreen}{rgb}{0.55, 0.71, 0.0}
\begin{document}
\begin{center}

{\bf{\LARGE{Towards Statistical and Computational Complexities of Polyak Step Size Gradient Descent}}}
  
\vspace*{.2in}
{\large{
\begin{tabular}{ccccc}
Tongzheng Ren$^{\star, \diamond, \ddag}$ & Fuheng Cui$^{\star,\flat}$ & Alexia Atsidakou$^{\star,\dagger}$ & Sujay Sanghavi$^{\dagger}$ & Nhat Ho$^{\flat, \ddag}$ \\
\end{tabular}
}}

\vspace*{.1in}

\begin{tabular}{c}
Department of Computer Science, University of Texas at Austin$^\diamond$, \\
Department of Statistics and Data Sciences, University of Texas at Austin$^\flat$ \\
Department of Electrical and Computer Engineering, University of Texas at Austin$^\dagger$, \\
\end{tabular}

\today

\vspace*{.2in}

\begin{abstract}
We study the statistical and computational complexities of the Polyak step size gradient descent algorithm under generalized smoothness and Łojasiewicz conditions of the population loss function, namely, the limit of the empirical loss function when the sample size goes to infinity, and the stability between the gradients of the empirical and population loss functions, namely, the polynomial growth on the concentration bound between the gradients of sample and population loss functions. We demonstrate that the Polyak step size gradient descent iterates reach a final statistical radius of convergence around the true parameter after logarithmic number of iterations in terms of the sample size. It is computationally cheaper than the polynomial number of iterations on the sample size of the fixed-step size gradient descent algorithm to reach the same final statistical radius when the population loss function is not locally strongly convex. Finally, we illustrate our general theory under three statistical examples: generalized linear model, mixture model, and mixed linear regression model.
\end{abstract}
\end{center}
\let\thefootnote\relax\footnotetext{$\star$ Alexia Atsidakou, Fuheng Cui and Tongzheng Ren contributed equally to this work. }
\let\thefootnote\relax\footnotetext{$\ddag$ Correspondence to: Tongzheng Ren (\href{mailto:tongzheng@utexas.edu}{tongzheng@utexas.edu}) and Nhat Ho (\href{mailto:minhnhat@utexas.edu}{minhnhat@utexas.edu}).}
\section{Introduction}
From its origin in mathematics, gradient descent algorithm~\cite{Polyak_Introduction, bubeck2015convex, Nesterov_Introduction} has played a central role in large-scale machine learning and data science applications. In general unconstrained settings, this algorithm can be used for finding optimal solutions of optimization problems of the following form:
\begin{align}
    \min_{\theta \in \mathbb{R}^{d}} f_{n} (\theta). \label{eq:unconstrained_opt}
\end{align}
Here, $n$ stands for the sample size of i.i.d. data $X_{1}, X_{2}, \ldots, X_{n}$ coming from an unknown distribution $P_{\theta^{*}}$ where $\theta^{*}$ is true but unknown parameter and $f_{n}$ is a given empirical loss function whose optimal solutions, denoted by $\widehat{\theta}_{n}$, can be used to approximate the true parameter $\theta^{*}$. While the difference between $\widehat{\theta}_{n}$ and $\theta^{*}$ had been studied extensively in the literature via several tools from the empirical process theory, the convergence rates of $\theta_{n}^{t}$, updates from the gradient descent algorithm, to optimal neighborhood around the true parameter $\theta^{*}$, has still remained a nascent topic.

A natural approach to analyze the difference between the updates $\theta_{n}^{t}$ and the true parameter $\theta^{*}$ is to study the convergence rate of $\theta_{n}^{t}$ to $\widehat{\theta}_n$, stationary points of optimization problem~\eqref{eq:unconstrained_opt}, and the gap between $\widehat{\theta}_n$ and $\theta^{*}$, namely, we use the following triangle inequality:
\begin{align}
    \|\theta_{n}^{t} - \theta^{*}\| \leq \|\theta_{n}^{t} - \widehat{\theta}_n\| + \|\widehat{\theta}_n - \theta^{*}\|. \label{eq:direct_approach}
\end{align}
This approach is often referred to as \emph{direct approach} and has been used in several earlier works (e.g.,~\cite{agarwal2012fast, yuan2013truncated, loh2015regularized,chen2018gradient}). However, to ensure that the radius of convergence for $\|\theta_{n}^{t} - \theta^{*}\|$ is at the order of final statistical rate, we need to obtain a tight optimization convergence rate of the term $\|\theta_{n}^{t} - \widehat{\theta}_n\|$ based on the sample size $n$ and the number of iterations $t$. It requires a precise understanding of the noise-structure in the gradient of the empirical loss function, which is generally non-trivial to study in practice. 

To circumvent the challenges of the direct analysis~\eqref{eq:direct_approach}, a popular approach to analyze the difference between the updates $\theta_{n}^{t}$ and the true parameter $\theta^{*}$ is the \emph{population to sample analysis}~\citep{yi2015regularized, hardt16, Siva_2017, kuzborskij2018data, Fanny-2017, charles2018stability, Raaz_Ho_Koulik_2020, Raaz_Ho_Koulik_2018_second, Ho_Instability, Kwon_minimax}. In particular, we define the corresponding population version of optimization problem~\eqref{eq:pop_unconstrained_opt} as follows:
\begin{align}
    \min_{\theta \in \mathbb{R}^{d}} f (\theta), \label{eq:pop_unconstrained_opt}
\end{align}
where $f(\cdot) : = \Exs_{X^{n}} \brackets{f_{n}(\cdot)}$ is the population loss function and $X^{n} = (X_{1}, \ldots, X_{n})$. When the step size $\eta$ of the gradient descent algorithm is fixed, which we refer to as \emph{fixed-step size gradient descent algorithm}, the idea of the population to sample analysis is to analyze the radius of convergence of $\theta_{n}^{t}$ via the following triangle inequality:
\begin{align}
    \|\theta_{n}^{t+1} - \theta^{*}\| \leq \|F_{\text{GD}}(\theta_{n}^{t}) - \theta^{*}\| + \eta \|\nabla f_{n}(\theta_{n}^{t}) - \nabla f(\theta_{n}^{t})\| : = A + B, \label{eq:pop_to_sam_analysis}
\end{align}
where $F_{\text{GD}}(\theta) : = \theta - \eta \nabla f(\theta)$ is the corresponding population operator of the fixed-step size gradient descent algorithm. The bound~\eqref{eq:pop_to_sam_analysis} suggests that we can relate the behaviors of the sample fixed-step size gradient descent iterate $\theta_{n}^{t+1}$ to two terms: (i) Term A: the convergence rate of gradient descent iterates for solving population loss function~\eqref{eq:pop_unconstrained_opt}; (ii) Term B: the uniform concentration of $\nabla f_{n}(\theta)$ around $\nabla f(\theta)$ when $\theta$ lies in a certain neighborhood around $\theta^{*}$. 

\vspace{0.5 em}
\noindent
\textbf{Complexity of fixed-step size gradient descent:} When the population loss function is locally strongly convex and smooth around $\theta^{*}$, under the local initialization the convergence rate of gradient descent iterates for solving the population loss function is linear, i.e., the term A in equation~\eqref{eq:pop_to_sam_analysis} behaves like $\kappa \|\theta_{n}^{t} - \theta^{*}\|$ where $\kappa < 1$ is some constant. When the deviation bound between $\nabla f_{n}(\theta)$ and $\nabla f(\theta)$ is at the order $\varepsilon(n, \delta)$ with probability $1 - \delta$ as long as $\|\theta - \theta^{*}\| \leq r$ where $\varepsilon(n, \delta)$ is the noise function, the statistical radius of the sample fixed-step size gradient descent updates is at the order of $\mathcal{O}(\varepsilon(n, \delta))$ as long as the number of iterations is at least $\mathcal{O}(\log(1/ \varepsilon(n, \delta)))$. For practical high dimensional statistical models, the noise function $\varepsilon(n, \delta)$ is at the order of $\sqrt{d/n}$ (here we skip $\delta$ for simplicity); therefore, we have parametric statistical radius of the sample gradient descent iterates after $\log(n/d)$ number of iterations.

When the population loss function is no longer locally strongly convex around the true parameter $\theta^{*}$, analyzing the convergence rate of $\theta_{n}^{t}$ is non-trivial as simply applying triangle inequality in equation~\eqref{eq:pop_to_sam_analysis} can get to sub-optimal rate. To get a sharp statistical radius of $\theta_{n}^{t}$, Ho et al.~\cite{Ho_Instability} recently utilize a localization argument from the empirical process theory to progressively balance the two terms A and B when the sample fixed-step size gradient descent updates $\theta_{n}^{t}$ move closer to the true parameter $\theta^{*}$. They show that when the convergence rate of the population fixed-step size gradient descent iterates is at the order of $\mathcal{O}(1/t^{1/\alpha})$ for some $\alpha > 0$ and the deviation bound between $\nabla f_{n}(\theta)$ and $\nabla f(\theta)$ is slow and at the order of $\mathcal{O}(r^{\gamma} \varepsilon(n, \delta))$ with probability $1 - \delta$ as long as $\|\theta - \theta^{*}\| \leq r$ where $\gamma \geq 0$, the final statistical radius of the fixed-step size gradient descent iterates $\|\theta_{n}^{t} - \theta^{*}\|$ is upper bounded by $\mathcal{O}(\varepsilon(n, \delta)^{\frac{1}{1 + \alpha - \gamma}})$ as long as $t \geq \mathcal{O}(\varepsilon(n, \delta)^{-\frac{\alpha}{\alpha + 1 - \gamma}})$ and $\alpha \geq \gamma$. In practical high dimensional statistical models, the noise function $\varepsilon(n, \delta)$ is proportional to $\sqrt{d/n}$; therefore, the required number of iterations for the fixed-step size gradient descent updates to reach the final radius is proportional to $(n/d)^{\frac{\alpha}{\alpha + 1 - \gamma}}$. Since each iteration of the gradient descent requires $\mathcal{O}(nd)$ arithmetic operations, the total computational complexity for the fixed-step size gradient descent algorithm to reach the final statistical radius is of the order of $\mathcal{O}(n^{\frac{\alpha }{\alpha + 1 - \gamma} + 1})$ for fixed dimension $d$. It is much more computationally expensive than the optimal computational complexity $\mathcal{O}(n)$ when the sample size is sufficiently large in practice.

\vspace{0.5 em}
\noindent
\textbf{Contribution.} In this paper, we show that by using Polyak step size gradient descent method~\cite{Polyak_Introduction}, an adaptive gradient descent algorithm, we can overcome the high computational complexity of the fixed-step size gradient descent algorithm for reaching the final statistical radius when the population loss function is not locally strongly convex. Our contribution is two-fold and can be summarized as follows:
\begin{enumerate}
    \item \textbf{Complexity of Polyak step size gradient descent algorithm}: We study the computational and statistical complexities of the Polyak step size gradient descent iterates under the generalized smoothness and Łojasiewicz properties of the population loss function, which are characterized by parameter $\alpha \geq 0$. Under these assumptions, we demonstrate that the population Polyak step size gradient descent iterates have a linear convergence rate to the true parameter $\theta^{*}$. When the deviation bound between the gradients of sample and population loss functions is growing at the order of $\mathcal{O}(r^{\gamma} \varepsilon(n, \delta))$ with probability $1 - \delta$, we further prove that the sample Polyak step size gradient descent updates reach the final statistical radius $\mathcal{O}(\varepsilon(n, \delta)^{\frac{1}{1 + \alpha - \gamma}})$ around the true parameter $\theta^{*}$ as long as $t \geq \mathcal{O}(\log(1/\varepsilon(n, \delta)))$. It indicates that the sample Polyak step size gradient descent iterates reach the same final statistical radius as that of the fixed-step size gradient descent iterates and they only require a logarithmic number of iterations, which is much smaller than those from the fixed-step size gradient descent updates. Since each iteration of the Polyak step size gradient descent algorithm only requires $\mathcal{O}(nd)$ arithmetic operations, the total computational complexity for the Polyak step size algorithm to reach the final statistical radius is at the order of $\mathcal{O}(n \log(1/\varepsilon(n, \delta)))$ for fixed dimension $d$, which is much cheaper than $\mathcal{O}(n \cdot \varepsilon(n, \delta)^{-\frac{\alpha}{\alpha + 1 - \gamma}})$ from the fixed-step size gradient descent algorithm. See Table~\ref{tab:summary_rates} for a more detailed comparison between the Polyak step size and fixed-step size methods.
    \item \textbf{Illustrative examples}: We illustrate the general theory under three statistical models: generalized linear model, symmetric two-component mixture model, and mixed linear regression model. For the generalized linear model with link function $g(x) = x^{p}$ where $p \in \mathbb{N}$ and $p \geq 2$, we demonstrate that when we have no signal, i.e., $\theta^{*} = 0$, the Polyak step size gradient descent iterates converge to a radius of convergence $\mathcal{O}((d/n)^{1/2p})$ around the true parameter after $\mathcal{O}(\log(n/d))$ number of iterations. It is much faster than the required number of iterations $\mathcal{O}((n/d)^{\frac{p - 1}{p}})$ of the fixed-step size gradient descent algorithm. For both the symmetric two-component mixture model and mixed linear regression, under the low signal-to-noise regime, e.g., $\theta^{*} = 0$, we prove that the final optimal statistical radius of the Polyak step size iterates are at the order of $\mathcal{O}((d/n)^{1/4})$ as long as we run the algorithm for $\mathcal{O}(\log(n/d))$ iterations, which is faster than $\mathcal{O}(\sqrt{n/d})$ number of iterations required for the EM algorithm, which in these settings is equivalent to gradient descent with step size 1, in order to reach the same final statistical radii.
\end{enumerate}

\vspace{0.5 em}
\noindent
\textbf{Organization.} The paper is organized as follows. In Section 2, we first introduce our assumptions on generalized smoothness and Łojasiewicz property of the population loss function and the growth condition on the concentration of the gradient of sample loss function around the gradient of the population loss function. Then, we establish convergence rates of the Polyak step size gradient descent iterates under these assumptions. In Section~\ref{sec:examples}, we illustrate these convergence rates under specific settings of generalized linear model, mixture model, and mixed linear regression. We carry out experiments in Section~\ref{sec:experiments} to verify the convergence rates studied in Section~\ref{sec:examples} while concluding the paper with a few discussions in Section~\ref{sec:discussion}. Proofs of main results are in Section~\ref{sec:proofs} while proofs of the remaining results are deferred to the Appendices. 

\vspace{0.5 em}
\noindent
\textbf{Notation.} For any matrix $A \in \mathbb{R}^{d \times d}$, we denote by $\lambda_{\max}(A)$ the maximum eigenvalue of the matrix A. For any $x \in \mathbb{R}^{d}$, $\|x\|$ denotes the $\ell_{2}$ norm of $x$. For any two sequences $\{a_{n}\}_{n \geq 1}, \{b_{n}\}_{n \geq 1}$, we denote $a_{n} = \mathcal{O}(b_{n})$ to mean that $a_{n} \leq C b_{n}$ for all $n \geq 1$ where $C$ is some universal constant. Furthermore, we denote $a_{n} = \Theta(b_{n})$ to indicate that $C_{1} b_{n} \leq a_{n} \leq C_{2} b_{n}$ for any $n \geq 1$ where $C_{1}, C_{2}$ are some universal constants. 

\section{Polyak Step Size Gradient Descent}
\label{sec:Polyak_Step_Size}
In this section, we first provide a set of assumptions used in our analysis of the Polyak step size gradient descent algorithm in Section~\ref{sec:assumptions}. We then study the convergence rate of that algorithm under these assumptions in Section~\ref{sec:convergence_rate_Polyak}. 
\subsection{Assumptions}
\label{sec:assumptions}
We first start with the following assumption about the local generalized smoothness of the population loss function in equation~\eqref{eq:pop_unconstrained_opt}.
\begin{enumerate}[label=(W.1)]
\item\label{assump:smoothness} (Generalized Smoothness)
There exists a constant $\alpha \geq 0$ such that for all $\theta\in \mathbb{B}(\theta^*, \rho)$ for some radius $\rho > 0$, we have
\begin{align*}
    \lambda_{\max}(\nabla^2 f(\theta)) \leq & c_1 \|\theta - \theta^*\|^{\alpha},
\end{align*}
where $c_1>0$ is some universal constant.
\end{enumerate}
When $\alpha = 0$, Assumption~\ref{assump:smoothness} corresponds to the standard local smoothness condition. When $\alpha > 0$, Assumption~\ref{assump:smoothness} provides a polynomial growth condition on the Lipschitz constant when the parameter lies in some neighborhood around the true parameter $\theta^{*}$. An example of the function $f$ that satisfies Assumption~\ref{assump:smoothness} is $f(\theta) = \sum_{i = 1}^{d} \theta_{i}^{2\alpha_{i}}$ for all $\theta = (\theta_{1}, \theta_{2}, \ldots, \theta_{d}) \in \mathbb{R}^{d}$ where $\alpha_{1},\alpha_{2},\ldots, \alpha_{d} \geq 1$ are some given positive integers. In this simple example, the true parameter $\theta^{*} = 0$ and the constant $\alpha$ in Assumption~\ref{assump:smoothness} takes the value $\alpha = \min_{1 \leq i \leq d} \{2\alpha_{i} - 2\}$.

\begin{table}[t]

  {

    {\renewcommand{\arraystretch}{2}

    \begin{tabular}{ccccc}
        \toprule
        {\bf Method} & \Centerstack{\bf Smoothness~\ref{assump:smoothness}, \\ \bf Łojasiewicz~\ref{assump:nonPL}} & \Centerstack{\bf Concentration \\ \bf Bound~\ref{assump:stab}}
        & \Centerstack{\bf  Number of \\ \bf Iterations} & 
        \Centerstack {\bf  Statistical \\ \bf Radius}   \\
        \Centerstack{Fixed-step size \\ gradient descent \\ (Proposition~\ref{prop:fixed_gradient_descent})}
        & \Centerstack{$\alpha > 0$ \\ $\alpha = 0$}
        & \Centerstack{$\gamma \geq 0$ \\ $\gamma=0$}
        & \Centerstack{$\varepsilon(n, \delta)^{-\frac{\alpha}{1+\alpha - \gamma}}$ \\ $\log (1/\varepsilon(n, \delta))$} & \Centerstack{$\varepsilon(n, \delta)^{\frac{1}{\alpha + 1 - \gamma}}$ \\ $\varepsilon(n, \delta)$}

        \\[2mm]
       \Centerstack{Polyak step size \\ gradient descent \\ (Theorem~\ref{theorem:convergence_rate_Polyak})}
       &  $\alpha \geq 0$ & $\gamma\geq 0$
       & $\log (1/\varepsilon(n, \delta))$

        & $\varepsilon(n, \delta)^{\frac{1}{\alpha + 1 - \gamma}}$

        \\[2ex] \bottomrule \hline
    \end{tabular}
    }
    }
    \noindent\caption{An overview of the convergence rates of fixed-step size and Polyak step size gradient descent iterates under the assumptions on generalized smoothness of the population loss function (Assumptions~\ref{assump:smoothness}), generalized Łojasiewicz property of the population loss function (Assumption~\ref{assump:nonPL}), and uniform concentration bound between the gradients of the population and sample loss functions (Assumption~\ref{assump:stab}). The results in the table show that when $\alpha > 0$, the Polyak step size gradient descent iterates reach to the same statistical radius $\varepsilon(n, \delta)^{\frac{1}{\alpha + 1 - \gamma}}$ as that of fixed-step size gradient descent iterates after much fewer number of iterations ($\log (1/\varepsilon(n, \delta))$ iterations of Polyak step size method versus $\varepsilon(n, \delta)^{\frac{1}{\alpha + 1 - \gamma}}$ of fixed-step size method). As the complexity per iteration of the Polyak step size method and the fixed-step size method is similar, the Polyak method is more computationally efficient than the fixed-step size method for reaching the same final statistical radius. When $\alpha = 0$ and $\gamma = 0$, e.g., locally strongly convex setting, both the Polyak and fixed-step size methods reach the statistical radius $\varepsilon(n, \delta)$ after a logarithmic number of iterations. }
    \label{tab:summary_rates}
    \vspace{-5mm}
\end{table}

Now, to obtain a convergence rate for the Polyak step size gradient descent algorithm for solving the minima of the population loss function, we need another assumption, which we refer to as \emph{generalized Łojasiewicz property}, on the growth of the gradient of the population loss function $f$.
\begin{enumerate}[label=(W.2)] 
\item \label{assump:nonPL} (Generalized Łojasiewicz Property) For all $\theta \in \mathbb{B}(\theta^{*}, \rho)$ for some radius $\rho > 0$, there exists a constant $\alpha \geq 0$ such that we have
\begin{align*}
    \|\nabla f(\theta)\| \geq c_2(f(\theta) - f(\theta^*))^{1 - \frac{1}{\alpha + 2}}
\end{align*}
where $c_2 > 0$ is some universal constant.
\end{enumerate}
When $\alpha = 0$, the generalized Łojasiewicz property is simply the well-known local Polyak-Łojasiewicz inequality~\cite{bubeck2015convex}. This inequality has been used to guarantee the linear convergence of the fixed-step size gradient descent algorithm. When $\alpha > 0$, the inequality in Assumption~\ref{assump:nonPL} indicates that the gradient locally grows faster than a high order polynomial function as we move around the global minima $\theta^{*}$ where the maximum degree of the polynomial function is determine by the constant $\alpha$. Similar to Assumption~\ref{assump:smoothness}, a simple example of the function $f$ that satisfies Assumption~\ref{assump:nonPL} is $f(\theta) = \sum_{i = 1}^{d} \theta_{i}^{2\alpha_{i}}$ for all $\theta = (\theta_{1}, \theta_{2}, \ldots, \theta_{d}) \in \mathbb{R}^{d}$ where $\alpha_{1},\alpha_{2},\ldots, \alpha_{d} \geq 1$ are some given positive integers. The constant $\alpha$ in Assumption~\ref{assump:nonPL} takes the value $\alpha = \max_{1 \leq i \leq d} \{2\alpha_{i} - 2\}$. If we would like the function $f$ in this example to satisfy both Assumptions~\ref{assump:smoothness} and~\ref{assump:nonPL} with the same constant $\alpha$, we need to have $\alpha_{1} = \alpha_{2} = \ldots = \alpha_{d} = \alpha$, namely, homogeneous polynomial function. This behavior turns out to be popular in several statistical models, such as generalized linear model, mixture model, and mixed linear regression that we study in Section~\ref{sec:examples}. In Appendix~\ref{sec:inhomogenous}, we also briefly discuss the behavior of the Polyak step size gradient descent algorithm when the simple polynomial function $f$ does not have homogeneous order, i.e., the constants in Assumptions~\ref{assump:smoothness} and~\ref{assump:nonPL} are different.

Finally, to analyze the iterates from the Polyak step size gradient descent algorithm for minimizing the sample loss function in equation~\eqref{eq:unconstrained_opt}, we need a growth condition on the uniform deviation bound between the gradients of the sample and population loss functions.
\begin{enumerate}[label=(W.3)] 
\item \label{assump:stab} (Stability Property)
For a given parameter $\gamma \geq 0$, there exist a noise function $\varepsilon: \mathbb{N} \times (0,1] \to \mathbb{R}^{+}$, universal constant $c_3 > 0$, and some positive parameter $\rho > 0$ such that 
\begin{align*}
    \sup_{\theta\in \mathbb{B}(\theta^*, r)} \|\nabla f_n(\theta) - \nabla f(\theta)\|\leq c_3 r^\gamma \varepsilon(n, \delta),
\end{align*}
for all $r \in (0, \rho)$ with probability $1 - \delta$.
\end{enumerate}
A simple interpretation of the Assumption~\ref{assump:stab} is that we would like to control the growth of the noise function, resulting from the difference between the sample and population loss functions, when the radius of the ball around $\theta^{*}$ goes to 0. That assumption also suggests that $\theta^{*}$ is some stationary point of the sample loss function $f_{n}$ when $\gamma > 0$. A simple example for Assumption~\ref{assump:stab} is that $f_{n}(\theta) = \frac{\|\theta\|^{2p}}{2p} - \frac{\omega \|\theta\|^{2q} }{2q} \sqrt{\frac{d}{n}}$ where $\omega \sim \mathcal{N}(0, 1)$ and $p, q$ are positive integers such that $p > q$. Under this simple case, $f(\theta) = \frac{\|\theta\|^{2p}}{2p}$ and the constant $\gamma$ in Assumption~\ref{assump:stab} takes the value $\gamma = 2q - 1$ while the noise function $\varepsilon(n, \delta) = \sqrt{\frac{d \log(1/\delta)}{n}}$. For more practical examples, we refer readers to Section~\ref{sec:examples}.

\subsection{Convergence rate of the Polyak step size gradient descent}
\label{sec:convergence_rate_Polyak}
The Polyak step size gradient descent iterates $\{\theta_{n}^{t}\}_{t \geq 0}$ for solving the sample loss function $f_{n}$ in equation~\eqref{eq:unconstrained_opt} take the following form:
\begin{align}
    F_{n}(\theta_{n}^{t}) : = \theta_{n}^{t + 1} = \theta_{n}^{t} - \frac{f_{n}(\theta_{n}^{t}) - f_{n}(\widehat{\theta}_{n})}{\|\nabla f_{n}(\theta_{n}^{t})\|^2} \cdot \nabla f_{n}(\theta_{n}^{t}), \label{eq:sample_operator_Polyak}
\end{align}
where $\widehat{\theta}_{n}$ is some optimal solution of the optimization problem~\eqref{eq:unconstrained_opt} (See our discussion after Theorem~\ref{theorem:convergence_rate_Polyak} about an adaptive version of Polyak step size gradient descent algorithm to deal with the unknown value of $f_{n}(\widehat{\theta}_{n})$). The operator $F_{n}$ in equation~\eqref{eq:sample_operator_Polyak} is referred to as \emph{sample Polyak operator}. To analyze the convergence rate of the sample iterates $\theta_{n}^{t}$, we will use the population to sample analysis discussed in equation~\eqref{eq:pop_to_sam_analysis}. In particular, we define the following \emph{population Polyak operator} for solving the population loss function $f$ in equation~\eqref{eq:pop_unconstrained_opt}:
\begin{align}
    F(\theta) : = \theta - \frac{f(\theta) - f(\theta^{*})}{\|\nabla f(\theta)\|^2} \cdot \nabla f(\theta), \label{eq:population_operator_Polyak}
\end{align}
As being indicated in the population to sample analysis for analyzing the fixed-step size gradient descent algorithm, to analyze the sample iterates $\{\theta_{n}^{t}\}_{t \geq 0}$ of the Polyak step size gradient descent algorithm we use the following triangle inequality:
\begin{align}
    \|\theta_{n}^{t + 1} - \theta^{*}\| \leq \|F_{n}(\theta_{n}^{t}) - F(\theta_{n}^{t})\| + \|F(\theta_{n}^{t}) - \theta^{*}\|. \label{eq:Polyak_sample_to_population}
\end{align}
Therefore, to obtain an upper bound for the gap between $\theta_{n}^{t+1}$ and $\theta^{*}$, we need to understand the contraction of the population operator $F$ to $\theta^{*}$ as well as the deviation between the sample operator $F_{n}$ and population operator $F$. The following lemma shows the linear contraction of the population operator $F$ towards $\theta^{*}$.   
\begin{lemma}
\label{lincon}
Assume that Assumptions~\ref{assump:smoothness} and~\ref{assump:nonPL} hold. Then, given the definition of Polyak population operator in equation~\eqref{eq:population_operator_Polyak} we have
\begin{align*}
    \|F(\theta) - \theta^*\| \leq \kappa \|\theta - \theta^*\|,
\end{align*}
where $\kappa : = \left(1 - \frac{c_2^{\alpha + 2}}{2c_1 (\alpha + 2)^{\alpha + 2}}\right)^{1/2}$ and $c_{1}, c_{2}$ are universal constants in Assumptions~\ref{assump:smoothness} and~\ref{assump:nonPL}.
\end{lemma}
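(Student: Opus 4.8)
The plan is to exploit the defining property of the Polyak step. Writing $\lambda := (f(\theta)-f(\theta^*))/\|\nabla f(\theta)\|^2$ so that $F(\theta) = \theta - \lambda\nabla f(\theta)$, I would expand
$$\|F(\theta) - \theta^*\|^2 = \|\theta - \theta^*\|^2 - 2\lambda\langle\nabla f(\theta),\theta-\theta^*\rangle + \lambda^2\|\nabla f(\theta)\|^2,$$
and use the identity $\lambda^2\|\nabla f(\theta)\|^2 = \lambda(f(\theta)-f(\theta^*))$ to collapse the last term. The lemma then reduces to showing that the decrease $2\lambda\langle\nabla f(\theta),\theta-\theta^*\rangle - \lambda(f(\theta)-f(\theta^*))$ is at least $(1-\kappa^2)\|\theta-\theta^*\|^2 = \frac{c_2^{\alpha+2}}{2c_1(\alpha+2)^{\alpha+2}}\|\theta-\theta^*\|^2$.

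For the inner-product term I would invoke the convexity inequality $\langle\nabla f(\theta),\theta-\theta^*\rangle \ge f(\theta)-f(\theta^*)$ (this is the single place where I rely on $f$ behaving convexly near $\theta^*$, which is the case in the homogeneous examples motivating these assumptions). This yields the clean Polyak bound $\|F(\theta)-\theta^*\|^2 \le \|\theta-\theta^*\|^2 - (f(\theta)-f(\theta^*))^2/\|\nabla f(\theta)\|^2$, so everything reduces to lower bounding the single quantity $(f(\theta)-f(\theta^*))^2/\|\nabla f(\theta)\|^2$ by $\frac{c_2^{\alpha+2}}{2c_1(\alpha+2)^{\alpha+2}}\|\theta-\theta^*\|^2$.

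I would obtain this from two one-sided estimates. First, from the generalized smoothness \ref{assump:smoothness} I would run the usual descent-lemma argument, probing at $\theta - \tfrac1L\nabla f(\theta)$ with a local Lipschitz constant $L \asymp c_1\|\theta-\theta^*\|^\alpha$, to get the gradient-domination bound $\|\nabla f(\theta)\|^2 \le 2c_1\|\theta-\theta^*\|^\alpha(f(\theta)-f(\theta^*))$, which turns the target quantity into $\ge (f(\theta)-f(\theta^*))/(2c_1\|\theta-\theta^*\|^\alpha)$. Second, from the Łojasiewicz property \ref{assump:nonPL} I would establish the matching growth lower bound $f(\theta)-f(\theta^*) \ge \frac{c_2^{\alpha+2}}{(\alpha+2)^{\alpha+2}}\|\theta-\theta^*\|^{\alpha+2}$: setting $h := (f-f(\theta^*))^{1/(\alpha+2)}$, \ref{assump:nonPL} gives the desingularized bound $\|\nabla h\| \ge c_2/(\alpha+2)$, and integrating $\|\dot\gamma\| \le \tfrac{\alpha+2}{c_2}\,|\tfrac{d}{dt}h(\gamma)|$ along the gradient flow $\dot\gamma = -\nabla f(\gamma)$ from $\theta$ down to $\theta^*$ bounds the trajectory length, hence $\|\theta-\theta^*\|$, by $\tfrac{\alpha+2}{c_2}(f(\theta)-f(\theta^*))^{1/(\alpha+2)}$. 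Chaining the two estimates gives exactly $(f(\theta)-f(\theta^*))^2/\|\nabla f(\theta)\|^2 \ge \frac{c_2^{\alpha+2}}{2c_1(\alpha+2)^{\alpha+2}}\|\theta-\theta^*\|^2$, i.e. $(1-\kappa^2)\|\theta-\theta^*\|^2$, which completes the proof.

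I expect the growth lower bound from \ref{assump:nonPL} to be the main obstacle: the desingularization and trajectory-length argument must be carried out entirely inside the ball $\mathbb{B}(\theta^*,\rho)$ on which \ref{assump:nonPL} is valid, and requires the gradient flow to converge to $\theta^*$ without leaving that region. A secondary technical point is matching the constant $2c_1$ in the descent-lemma step, since the probe point $\theta-\tfrac1L\nabla f(\theta)$ lies slightly outside $\mathbb{B}(\theta^*,\|\theta-\theta^*\|)$, so one must check that the local Lipschitz constant there remains controlled by $c_1\|\theta-\theta^*\|^\alpha$ up to the stated constant.
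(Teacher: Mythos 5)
Your proposal is correct and follows essentially the same route as the paper: the convexity inequality yields the clean Polyak decrease $\|F(\theta)-\theta^*\|^2 \le \|\theta-\theta^*\|^2 - (f(\theta)-f(\theta^*))^2/\|\nabla f(\theta)\|^2$, the smoothness bound $\|\nabla f(\theta)\|^2 \le 2c_1\|\theta-\theta^*\|^{\alpha}(f(\theta)-f(\theta^*))$ is exactly the paper's use of Lemma 3.5 of Bubeck with Assumption~\ref{assump:smoothness}, and your gradient-flow desingularization of the Łojasiewicz condition is precisely the paper's Lemma~\ref{lem:obj_to_params} (including the observation that convexity keeps the flow inside $\mathbb{B}(\theta^*,\rho)$). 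Even the technical caveats you flag are the same ones the paper implicitly handles, so nothing further is needed.
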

\noindent
The proof of Lemma~\ref{lincon} is in Section~\ref{subsec:proof:lincon}. The result of Lemma~\ref{lincon} indicates that if $\{\theta^{t}\}_{t \geq 0}$ is a sequence of population Polyak step size gradient descent iterates, i.e., $\theta^{t + 1} = F(\theta^{t})$, then we have
\begin{align*}
    \|\theta^{t} - \theta^{*}\| \leq \kappa^{t} \|\theta^{0} - \theta^{*}\|.
\end{align*}
The linear convergence of population Polyak step size gradient descent iterates is in stark different from the sub-linear convergence $\Theta(t^{-1/\alpha})$ of the fixed-step size gradient descent iterates under Assumptions~\ref{assump:smoothness} and~\ref{assump:nonPL} (See Lemma~\ref{lem:convergence_gd} in Appendix~\ref{sec:auxiliary_result}).

Our next result establishes an uniform concentration bound between the sample Polyak operator $F_{n}$ and the population Polyak operator $F$.
\begin{lemma}
\label{supdif}
Assume that Assumptions~\ref{assump:smoothness}, \ref{assump:nonPL}, and~\ref{assump:stab} hold with $\alpha \geq \gamma$. Assume that $\|\widehat{\theta}_{n} - \theta^{*}\| \leq r_{n}$ where $\widehat{\theta}_{n}$ is the optimal solution of the sample loss function $f_{n}$ and $r_{n} : = \bar{C}\varepsilon(n, \delta)^{\frac{1}{\alpha + 1 - \gamma}}$ where $\bar{C} = \left(\frac{C \cdot c_3 (\alpha + 2)^{\alpha + 1}}{c_2^{\alpha + 2}}\right)^{\frac{1}{1+\alpha - \gamma}}$, $c_{2}$, $c_3$ are the universal constant in Assumption~\ref{assump:nonPL} and~\ref{assump:stab} and $C$ is some universal constant. Then for any $r_n \leq r < \rho$ and for some universal constants $c_4 \geq 1$, we have 
\begin{align*}
    \sup_{\theta \in \mathbb{B}(\theta^{*}, r) \backslash \mathbb{B}(\theta^{*}, r_{n})} \|F_n(\theta)-F(\theta)\| \leq c_4 r^{\gamma - \alpha} \varepsilon(n, \delta).
\end{align*}
\end{lemma}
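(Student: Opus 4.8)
The plan is to bound the discrepancy pointwise on the annulus and then take the supremum. Fix $\theta$ with $s := \|\theta - \theta^*\| \in [r_n, r)$ and abbreviate $g = \nabla f(\theta)$, $g_n = \nabla f_n(\theta)$, $D = f(\theta) - f(\theta^*)$, and $D_n = f_n(\theta) - f_n(\widehat{\theta}_n)$. Since the two operators differ only through their Polyak steps, $F_n(\theta) - F(\theta) = \frac{D}{\|g\|^2}g - \frac{D_n}{\|g_n\|^2}g_n$. I would add and subtract the mixed terms $\frac{D}{\|g\|^2}g_n$ and $D\,g_n/\|g_n\|^2$ to split this into three pieces: a direction error $\frac{D}{\|g\|^2}(g - g_n)$, a normalization error $D\,g_n\big(\|g\|^{-2} - \|g_n\|^{-2}\big)$, and a numerator error $(D - D_n)\,g_n/\|g_n\|^2$. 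Each piece is then matched against the gradient deviation supplied by Assumption~\ref{assump:stab}.

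I would first assemble the deterministic estimates. Using Assumption~\ref{assump:smoothness} and $\nabla f(\theta^*) = 0$, integrating the Hessian bound along $[\theta^*, \theta]$ gives the upper bounds $\|g\| \le \frac{c_1}{\alpha+1}s^{\alpha+1}$ and $D \le \frac{c_1}{(\alpha+1)(\alpha+2)}s^{\alpha+2}$. For the matching lower bound $\|g\| \ge \frac{c_2^{\alpha+2}}{(\alpha+2)^{\alpha+1}}s^{\alpha+1}$ -- likely already established inside the proof of Lemma~\ref{lincon} -- I would apply Assumption~\ref{assump:nonPL} to $\phi := D^{1/(\alpha+2)}$, whose gradient satisfies $\|\nabla \phi\| \ge c_2/(\alpha+2)$, and integrate along the gradient flow to obtain $D \ge (c_2/(\alpha+2))^{\alpha+2}s^{\alpha+2}$, which fed back into Assumption~\ref{assump:nonPL} yields the claim. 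Finally, Assumption~\ref{assump:stab} applied at radius $s$ gives $\|g - g_n\| \le c_3 s^{\gamma}\varepsilon(n,\delta)$.

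The restriction to the annulus is what makes these combine. On $\{s \ge r_n\}$ the noise-to-signal ratio obeys $\|g - g_n\|/\|g\| \lesssim s^{\gamma - \alpha - 1}\varepsilon(n,\delta)$, and because $s \ge r_n = \bar{C}\varepsilon(n,\delta)^{1/(\alpha+1-\gamma)}$ the exponent of $\varepsilon(n,\delta)$ collapses to zero (here $\alpha \ge \gamma$ is essential), so the definition of $\bar{C}$ forces this ratio below the universal constant $1/C \le 1/2$. Consequently $\tfrac12\|g\| \le \|g_n\| \le \tfrac32\|g\|$, the denominators stay bounded away from zero, and $|\|g\|^{-2} - \|g_n\|^{-2}| \lesssim \|g - g_n\|/\|g\|^3$. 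The direction and normalization errors are therefore both of order $D\,\|g - g_n\|/\|g\|^2 \lesssim s^{\alpha+2}\cdot s^{\gamma}\varepsilon(n,\delta)/s^{2\alpha+2} = s^{\gamma-\alpha}\varepsilon(n,\delta)$, which is the target order; taking the supremum over the annulus and absorbing constants produces the stated $c_4 r^{\gamma-\alpha}\varepsilon(n,\delta)$.

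The main obstacle is the numerator error, which I would handle by writing $D - D_n = \big\{[f(\theta) - f_n(\theta)] - [f(\theta^*) - f_n(\theta^*)]\big\} + \big[f_n(\theta^*) - f_n(\widehat{\theta}_n)\big]$. The first bracket converts gradient stability into function-value stability by integrating $\langle \nabla f - \nabla f_n, \theta - \theta^*\rangle$ along $[\theta^*, \theta]$, giving a bound $\le \frac{c_3}{\gamma+1}s^{\gamma+1}\varepsilon(n,\delta)$, so after dividing by $\|g_n\| \gtrsim s^{\alpha+1}$ its contribution is again $\lesssim s^{\gamma-\alpha}\varepsilon(n,\delta)$. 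The delicate piece is $f_n(\theta^*) - f_n(\widehat{\theta}_n)$: here I would use the hypothesis $\|\widehat{\theta}_n - \theta^*\| \le r_n$ together with the crude bound $\|\nabla f_n(\zeta)\| \le \frac{c_1}{\alpha+1}\|\zeta - \theta^*\|^{\alpha+1} + c_3\|\zeta - \theta^*\|^{\gamma}\varepsilon(n,\delta)$ on the short segment $[\theta^*, \widehat{\theta}_n]$ to get $|f_n(\theta^*) - f_n(\widehat{\theta}_n)| \lesssim r_n^{\alpha+2} + r_n^{\gamma+1}\varepsilon(n,\delta) \asymp \varepsilon(n,\delta)^{(\alpha+2)/(\alpha+1-\gamma)}$; dividing by $\|g_n\| \gtrsim s^{\alpha+1}$ and using $s \ge r_n$ once more shows this too is $\lesssim s^{\gamma-\alpha}\varepsilon(n,\delta)$. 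The real work is thus the bookkeeping: pinning the inner radius $r_n$ so the sample gradient cannot vanish, calibrating $\bar{C}$ so the noise-to-signal ratio is a universal constant, and checking that the condition $\alpha \ge \gamma$ aligns every power of $\varepsilon(n,\delta)$ so that all three error pieces share the common order $s^{\gamma-\alpha}\varepsilon(n,\delta)$.
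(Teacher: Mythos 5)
Your proposal is correct and follows essentially the same route as the paper's proof: the same key deterministic inequalities (the smoothness upper bounds and the Łojasiewicz-derived lower bound $\|\nabla f(\theta)\| \geq c_2\left(\frac{c_2}{\alpha+2}\|\theta-\theta^*\|\right)^{\alpha+1}$ obtained via the gradient-flow argument of Lemma~\ref{lem:obj_to_params}), the same calibration of $r_n$ and $\bar{C}$ to keep $\|\nabla f_n\|$ comparable to $\|\nabla f\|$ on the annulus, and the same splitting of the function-value error into an integrated gradient deviation plus the $f_n(\theta^*) - f_n(\widehat{\theta}_n)$ term controlled by $\|\widehat{\theta}_n - \theta^*\| \leq r_n$. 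Your three-piece split of the operator difference and your direct integration of $\|\nabla f_n\|$ along $[\theta^*, \widehat{\theta}_n]$ are only cosmetic variants of the paper's two-term decomposition and its triangle inequality through the population loss, and your bookkeeping (in particular the exponent collapse $r_n^{\alpha+2} \asymp r_n^{\gamma+1}\varepsilon(n,\delta) \asymp \varepsilon(n,\delta)^{\frac{\alpha+2}{\alpha+1-\gamma}}$) checks out.
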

\noindent
The proof of Lemma~\ref{supdif} is in Section~\ref{subsec:supdif}. A few comments with that lemma are in order. First, the condition $\alpha \geq \gamma$ is to guarantee that the signal is stronger than the noise in statistical model in which we can derive the meaningful statistical rate for our estimator. Second, the assumption that $\|\widehat{\theta}_{n} - \theta^{*}\| \leq r_{n}$ is natural as from Proposition~\ref{prop:fixed_gradient_descent}, we demonstrate that that statistical radius is at the order of $\mathcal{O}(\varepsilon(n, \delta)^{\frac{1}{\alpha + 1 - \gamma}})$. Third, as indicated in Lemma~\ref{supdif}, the uniform concentration bound between the sample Polyak operator $F_{n}$ and the population Polyak operator $F$ only holds when $r_{n} \leq \|\theta - \theta^{*}\| \leq r$. The condition $\|\theta - \theta^{*}\| \geq r_{n}$ is important to ensure that the concentration bound is stable. When $\|\theta - \theta^{*}\| <  r_{n}$, it happens that $\|F_{n}(\theta) - F(\theta)\|$ goes to infinity. This instability behavior of the concentration bound between $F_{n}$ and $F$ when the parameter approaches $\theta^{*}$ is different from the stable concentration bound of the sample fixed-step size gradient descent operator around the population fixed-step size gradient descent operator, which is proportional to $r^{\gamma} \cdot \varepsilon(n, \delta)$ according to Assumption~\ref{assump:stab} and holds for all $\theta \in \mathbb{B}(\theta^{*}, r)$.

Equipped with the linear convergence of the population Polyak operator in Lemma~\ref{lincon} and the uniform deviation bound between the sample Polyak operator $F_{n}$ and the population Polyak operator $F$, we are ready to state our main result about the statistical and computational complexity of the sample Polyak step size gradient descent iterates.
\begin{theorem}
\label{theorem:convergence_rate_Polyak}
Assume that Assumptions~\ref{assump:smoothness},~\ref{assump:nonPL} and~\ref{assump:stab} and assumptions in Lemma~\ref{supdif} hold with $\alpha \geq \gamma$. Assume that the sample size $n$ is large enough such that $\varepsilon(n, \delta)^{\frac{1}{\alpha + 1 - \gamma}} \leq \frac{(1 - \kappa) \rho}{c_{4} \bar{C}^{\gamma - \alpha}}$ where $\kappa$ is defined in Lemma~\ref{lincon}, $c_{4}$ and $\bar{C}$ are the universal constants in Lemma~\ref{supdif}, and $\rho$ is the local radius. Then, there exist universal constants $C_{1}$, $C_{2}$ such that for $t \geq C_{1}\log (1/\varepsilon(n, \delta))$, the following holds:
\begin{align*}
    \min_{k\in \{0, 1, \cdots, t\}}\|\theta_n^k - \theta^*\| \leq C_{2} \cdot \varepsilon(n, \delta)^{\frac{1}{\alpha + 1 - \gamma}},
\end{align*}
\end{theorem}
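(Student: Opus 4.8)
The plan is to combine the two ingredients already in hand --- the linear contraction of the population operator (Lemma~\ref{lincon}) and the uniform deviation bound between the sample and population operators (Lemma~\ref{supdif}) --- through the triangle inequality~\eqref{eq:Polyak_sample_to_population}, and then to track the resulting scalar recursion on $r_k := \|\theta_n^k - \theta^*\|$. Writing $\varepsilon := \varepsilon(n,\delta)$ for brevity, I would first record the one-step bound. As long as the current iterate lies in the annulus $r_n \le r_k < \rho$, applying Lemma~\ref{supdif} with radius $r = r_k$ to the first term of~\eqref{eq:Polyak_sample_to_population} and Lemma~\ref{lincon} to the second yields
\begin{align*}
    r_{k+1} \;\leq\; \|F_n(\theta_n^k) - F(\theta_n^k)\| + \|F(\theta_n^k) - \theta^*\| \;\leq\; c_4\, r_k^{\gamma - \alpha}\varepsilon + \kappa\, r_k .
\end{align*}
This is the master recursion driving the entire argument.

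Next I would analyze this recursion in two regimes, noting that since $\gamma - \alpha \le 0$ the noise term $c_4 r_k^{\gamma-\alpha}\varepsilon$ is decreasing in $r_k$ and hence grows in significance as the iterates approach $\theta^*$. Define the threshold radius $\widetilde{r} := \bigl(\tfrac{2 c_4\varepsilon}{1-\kappa}\bigr)^{1/(1+\alpha-\gamma)}$, which is of the same order $\varepsilon^{1/(1+\alpha-\gamma)}$ as the target radius $r_n$. A direct computation shows that whenever $r_k \ge \widetilde{r}$ the noise term is dominated by the contraction, namely $c_4 r_k^{\gamma-\alpha}\varepsilon \le \tfrac{1-\kappa}{2} r_k$, so that the master recursion collapses to the geometric contraction
\begin{align*}
    r_{k+1} \;\le\; \Bigl(\tfrac{1+\kappa}{2}\Bigr) r_k, \qquad \text{whenever } \widetilde{r} \le r_k < \rho .
\end{align*}
The sample-size condition $\varepsilon^{1/(1+\alpha-\gamma)} \le (1-\kappa)\rho /(c_4 \bar{C}^{\gamma-\alpha})$ in the hypotheses is what guarantees that $\widetilde{r}$ --- equivalently the target radius $r_n$ --- stays strictly below the local radius $\rho$, so that the contracting regime $[\widetilde{r},\rho)$ is non-empty and the iterates have room to descend into it.

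Then I would set the target radius $r_* := C_2\,\varepsilon^{1/(1+\alpha-\gamma)}$ with $C_2$ chosen so that $r_* \ge \max\{\widetilde{r}, r_n\}$, and argue by dichotomy under the local initialization $\theta_n^0 \in \mathbb{B}(\theta^*,\rho)$. Either some iterate among $\theta_n^0,\dots,\theta_n^t$ already satisfies $r_k \le r_*$, in which case the claimed bound on $\min_{k\le t} r_k$ holds immediately; or every iterate satisfies $r_k > r_* \ge \widetilde{r}$, in which case the geometric contraction applies at every step (all iterates then lie in $[\widetilde{r},\rho)$, since they are strictly decreasing from $r_0 < \rho$), giving $r_t \le (\tfrac{1+\kappa}{2})^t r_0 \le (\tfrac{1+\kappa}{2})^t \rho$. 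Taking logarithms, this quantity drops below $\widetilde{r} \le r_*$ once $t \ge C_1 \log(1/\varepsilon)$ for a suitable universal constant $C_1$ absorbing $\rho$, $\kappa$, and $1+\alpha-\gamma$, which contradicts $r_t > r_*$. Hence the second alternative cannot persist, and $\min_{k\le t} r_k \le r_*$, which is the assertion.

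The main obstacle, and the reason the conclusion is phrased with a minimum over iterates rather than for the final iterate $\theta_n^t$, is the instability of the deviation bound flagged after Lemma~\ref{supdif}: once an iterate enters the inner ball $\mathbb{B}(\theta^*, r_n)$, the quantity $\|F_n(\theta) - F(\theta)\|$ is no longer controlled and may blow up, so subsequent iterates can be ejected outward. The dichotomy above sidesteps this precisely because it only requires the iterates to reach the target radius \emph{once}; the geometric contraction guarantees this happens within $\mathcal{O}(\log(1/\varepsilon))$ steps, and whatever the iterates do afterward is irrelevant to the minimum. The remaining work is bookkeeping: verifying that $\widetilde{r}$, $r_n$, and the constant $\bar{C}$ from Lemma~\ref{supdif} are all of the common order $\varepsilon^{1/(1+\alpha-\gamma)}$ so that a single constant $C_2$ dominates them, and confirming that the iterates never leave $\mathbb{B}(\theta^*,\rho)$ before the target is reached.
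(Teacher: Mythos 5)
Your proposal is correct and takes essentially the same route as the paper's proof: the triangle inequality~\eqref{eq:Polyak_sample_to_population} combined with Lemma~\ref{lincon} and Lemma~\ref{supdif}, a dichotomy that assumes all iterates stay outside the target radius (so the annulus condition of Lemma~\ref{supdif} applies and the iterates remain in $\mathbb{B}(\theta^*,\rho)$), and a geometric decrease forcing entry into the $\mathcal{O}(\varepsilon(n,\delta)^{\frac{1}{\alpha+1-\gamma}})$-ball within $\mathcal{O}(\log(1/\varepsilon(n,\delta)))$ steps. The only cosmetic difference is that you absorb the noise term into a worsened contraction factor $\frac{1+\kappa}{2}$ once $r_k \geq \widetilde{r}$, whereas the paper keeps the additive recursion $r_{k+1} \leq \kappa r_k + c_4 \bar{C}^{\gamma-\alpha}\varepsilon(n,\delta)^{\frac{1}{\alpha+1-\gamma}}$ and sums the geometric series; the two are interchangeable bookkeeping.
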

\noindent
The proof of Theorem~\ref{theorem:convergence_rate_Polyak} is in Section~\ref{subsec:proof:theorem:convergence_rate_Polyak}. Below, we have the following discussions with the result of Theorem~\ref{theorem:convergence_rate_Polyak}:

\vspace{0.5 em}
\noindent
\textbf{Comparing to fixed-step size gradient descent:} Since the convergence rate of the fixed-step size gradient descent iterates is at the order of $t^{-1/\alpha}$ when $\alpha > 0$ under Assumptions~\ref{assump:smoothness} and~\ref{assump:nonPL} (See Lemma~\ref{lem:convergence_gd} in Appendix~\ref{sec:auxiliary_result})  and the concentration bound between the sample gradient descent and population gradient descent operators are of the order $r^{\gamma} \cdot \varepsilon(n, \delta)$ under Assumption~\ref{assump:stab}, the result of Theorem 1 in~\cite{Ho_Instability} indicates the following convergence rate of the fixed-step size gradient descent updates when $\alpha > 0$ and $\alpha \geq \gamma$.
\begin{proposition}
\label{prop:fixed_gradient_descent}
Assume that Assumptions~\ref{assump:smoothness},~\ref{assump:nonPL} and~\ref{assump:stab} hold with $\alpha \geq \gamma$ and $\alpha > 0$. As long as the sample size $n$ is large enough such that $\varepsilon(n, \delta) \leq C$ for some universal constant $C$, there exist universal constants $C_{1}'$ and $C_{2}'$ such that for any fixed $\tau \in (0, \frac{1}{1+\alpha - \gamma})$ as long as $t \geq C_{1}' \varepsilon(n, \delta)^{-\frac{\alpha}{1+\alpha - \gamma}} \log(1/ \tau)$, we have
\begin{align*}
    \|\theta_{n,\text{GD}}^{t} - \theta^{*}\| \leq C_{2}'\varepsilon(n, \delta)^{\frac{1}{\alpha + 1 - \gamma} - \tau},
\end{align*}
where $\{\theta_{n,\text{GD}}^{t}\}_{t \geq 0}$ is a sequence of sample fixed-step size gradient descent iterates. 
\end{proposition}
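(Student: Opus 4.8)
The plan is to obtain this as a corollary of the localization analysis for fixed-step size gradient descent (Theorem~1 in~\cite{Ho_Instability}), by verifying that its two structural hypotheses hold under our Assumptions~\ref{assump:smoothness}--\ref{assump:stab} and then reading off the resulting last-iterate bound. The entry point is the population-to-sample decomposition~\eqref{eq:pop_to_sam_analysis}, namely $\|\theta_{n,\text{GD}}^{t+1} - \theta^{*}\| \leq \|F_{\text{GD}}(\theta_{n,\text{GD}}^{t}) - \theta^{*}\| + \eta\|\nabla f_{n}(\theta_{n,\text{GD}}^{t}) - \nabla f(\theta_{n,\text{GD}}^{t})\|$ with $F_{\text{GD}}(\theta) = \theta - \eta\nabla f(\theta)$, which splits the analysis into (i) the contraction of the population operator and (ii) the localized control of the gradient deviation.

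For ingredient (i), I would invoke Lemma~\ref{lem:convergence_gd}, which under Assumptions~\ref{assump:smoothness} and~\ref{assump:nonPL} with $\alpha > 0$ establishes the sublinear population rate $\|\theta^{t} - \theta^{*}\| = \Theta(t^{-1/\alpha})$ for step size $\eta$ chosen small enough relative to the local smoothness $c_{1}\rho^{\alpha}$. The per-step recursion driving this rate is the weak contraction $\|F_{\text{GD}}(\theta) - \theta^{*}\| \leq \|\theta - \theta^{*}\| - c\,\|\theta - \theta^{*}\|^{1+\alpha}$ on $\mathbb{B}(\theta^{*}, \rho)$, obtained from the descent lemma together with the Łojasiewicz lower bound on $\|\nabla f\|$; the displacement shrinks like $r^{1+\alpha}$ as the radius $r = \|\theta - \theta^{*}\|$ decreases, which is the source of the slow final phase. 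For ingredient (ii), Assumption~\ref{assump:stab} supplies, on an event of probability $1 - \delta$, the localized deviation bound $\sup_{\theta \in \mathbb{B}(\theta^{*}, r)}\|\nabla f_{n}(\theta) - \nabla f(\theta)\| \leq c_{3}r^{\gamma}\varepsilon(n,\delta)$. Combining the two on the shell $\|\theta - \theta^{*}\| = r$ yields the one-step inequality $\|\theta_{n,\text{GD}}^{t+1} - \theta^{*}\| \leq r - c\,r^{1+\alpha} + c'\,r^{\gamma}\varepsilon(n,\delta)$.

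The decrease term $c\,r^{1+\alpha}$ and the noise term $c'\,r^{\gamma}\varepsilon(n,\delta)$ balance at the radius $r_{*} = \varepsilon(n,\delta)^{1/(\alpha+1-\gamma)}$, which fixes the final statistical radius; the hypothesis $\alpha \geq \gamma$ is exactly what forces this balance to occur at a radius shrinking to $0$. To turn the one-step inequality into the stated last-iterate guarantee I would carry out the localization recursion of~\cite{Ho_Instability}: introduce a decreasing sequence of radii tending toward $r_{*}$ and use the one-step inequality to count the iterations needed for the iterate to enter each successive shell. Because the contraction is only of order $r^{1+\alpha}$, traversing the sublinear phase from an $\mathcal{O}(1)$ radius down to a neighborhood of $r_{*}$ costs of order $r_{*}^{-\alpha} = \varepsilon(n,\delta)^{-\alpha/(1+\alpha-\gamma)}$ iterations, which is the dominant factor; the additional $\log(1/\tau)$ factor and the $\tau$-slack in the exponent of the final radius $\varepsilon(n,\delta)^{1/(\alpha+1-\gamma)-\tau}$ arise from the final refinement, where the perturbed dynamics near $r_{*}$ become effectively linearly contracting and the radius is only pinned down to within a $\tau$-controlled tolerance rather than exactly.

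The main obstacle is the bookkeeping that makes this a genuine last-iterate (rather than best-iterate) bound: one must show that once the iterate enters a shell $\mathbb{B}(\theta^{*}, r^{(k)})$ it cannot escape at subsequent steps, i.e. that each shell is forward-invariant under $F_{n}$ on the good event. This amounts to checking that the noise term $c'\,(r^{(k)})^{\gamma}\varepsilon(n,\delta)$ can never push the iterate back past $r^{(k)}$, which is precisely where the monotone shrinkage of the localized bound from Assumption~\ref{assump:stab} and the sample-size condition $\varepsilon(n,\delta) \leq C$ are used. Once forward-invariance and the per-shell iteration counts are established, summing them delivers the iteration threshold $t \geq C_{1}'\varepsilon(n,\delta)^{-\alpha/(1+\alpha-\gamma)}\log(1/\tau)$ and the result follows as a direct specialization of Theorem~1 in~\cite{Ho_Instability}.
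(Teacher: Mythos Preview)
Your proposal is correct and matches the paper's approach: the paper treats Proposition~\ref{prop:fixed_gradient_descent} as a direct consequence of Theorem~1 in~\cite{Ho_Instability}, after verifying the two required inputs via Lemma~\ref{lem:convergence_gd} (population rate $t^{-1/\alpha}$) and Assumption~\ref{assump:stab} (localized gradient deviation $r^{\gamma}\varepsilon(n,\delta)$). Your additional discussion of the shell-by-shell localization, the balancing radius $r_{*}$, and forward-invariance is a faithful sketch of what happens inside that cited theorem rather than a different argument.
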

When $\alpha \geq \gamma$ and $\alpha > 0$, the result of Proposition~\ref{prop:fixed_gradient_descent} indicates that the fixed-step size gradient descent algorithm requires $\mathcal{O}(\varepsilon(n, \delta)^{-\frac{\alpha}{1+\alpha - \gamma}})$ number of iterations such that its updates can reach to the final statistical radius $\mathcal{O}(\varepsilon(n, \delta)^{\frac{1}{\alpha + 1 - \gamma}})$. Since each step of the gradient descent algorithm takes $\mathcal{O}(nd)$ arithmetic operations, it demonstrates that the total computational complexity for the fixed-step size gradient descent algorithm to reach the final statistical radius is $\mathcal{O}(n \cdot \varepsilon(n, \delta)^{-\frac{\alpha}{1+\alpha - \gamma}})$ for fixed dimension $d$. On the other hand, with a similar argument, Theorem~\ref{theorem:convergence_rate_Polyak} indicates that the total computational complexity for the Polyak step size gradient descent iterates to reach the final statistical radius is at the order of $\mathcal{O}(n \cdot \log (1/\varepsilon(n, \delta))$, which is much cheaper than that of the fixed-step size gradient descent algorithm when $\alpha \geq \gamma$. 

\vspace{0.5 em}
\noindent
\textbf{Cross-validation with the minimum number of iterates:} 
Note that, in Theorem \ref{theorem:convergence_rate_Polyak} we only guarantee for the existence of some $k < t $ in the iterate that $\|\theta_n^k - \theta^*\| = \mathcal{O}(\varepsilon(n, \delta)^{\frac{1}{\alpha + 1 - \gamma}})$, instead of the generally desired last iterate $t$. As Ho et al.~\citep{Ho_Instability} pointed out, such minimum is unavoidable without further regularity conditions. Fortunately, we can still obtain the desired estimator in the iterate by cross-validation~\citep{stone1974cross}, which only accounts for an additional $\mathcal{O}(nd)$ computation and keeps the computational efficiency of the Polyak step-size gradient descent algorithm.

\vspace{0.5 em}
\noindent
\textbf{Practical consideration of the Polyak step size gradient descent:} A practical issue of the original Polyak step size gradient descent algorithm is that it requires the knowledge of the optimal value of the sample loss function $f_{n}(\widehat{\theta}_{n})$ (see equation~\eqref{eq:sample_operator_Polyak}). Even though it may look restrictive at the first sight, it appears that we can utilize an adaptive version of that algorithm, named \emph{adaptive Polyak step size gradient descent}, from~\cite{Sham_Polyak} to deal with the unknown value of $f_{n}(\widehat{\theta}_{n})$. The detailed description of that algorithm is in Algorithm~\ref{algorithm:adaptive_Polyak_v1}. 

As indicated in Algorithm~\ref{algorithm:adaptive_Polyak_v1}, we first choose some lower bound $\tilde{f}_{0}$ of $f_{n}(\widehat{\theta}_{n})$ and using it as a surrogate for $f_{n}(\widehat{\theta}_{n})$. Then, we run the Polyak step size algorithm for $T$ times, which is the time horizon, with that surrogate choice. We then perform binary search to update that surrogate value to $\tilde{f}_{1}$ based on the current Polyak step size gradient descent iterates. We repeat that procedure $K$ times where $K$ is some given number of epochs to obtain a surrogate value $\tilde{f}_{K}$ of $f_{n}(\widehat{\theta}_{n})$. As indicated in Theorem 2 of~\cite{Sham_Polyak}, to have $\tilde{f}_{K} - f_{n}(\widehat{\theta}_{n}) < \varepsilon$, we can choose $K = \mathcal{O}(\log (\frac{f_{n}(\widehat{\theta}_{n}) - \tilde{f}_{0}}{\varepsilon}))$ and $T = \mathcal{O}(\log(\frac{1}{\varepsilon}))$. Therefore, if we choose $\varepsilon = \mathcal{O}(\varepsilon(n, \delta)^{\frac{\alpha + 2}{\alpha + 1 - \gamma}})$ (note that here $\varepsilon$ is the gap for value of the objective function), then based on the proof of Theorem~\ref{theorem:convergence_rate_Polyak}, the adaptive Polyak step size gradient descent iterates converge to a final radius of convergence $\mathcal{O}(\varepsilon(n, \delta)^{\frac{1}{\alpha + 1 - \gamma}})$ after $\mathcal{O}(\log (1/\varepsilon(n, \delta))^2)$ number of iterations. It indicates that the adaptive Polyak step size gradient descent is still cheaper than the fixed-step gradient descent algorithm for reaching the same final statistical radius when $\alpha \geq \gamma$ and $\alpha > 0$, i.e., when the population loss function is not locally strongly convex.
\begin{algorithm}[t!]
\DontPrintSemicolon
  
  \KwInput{Sample loss function $f_n$, initialization $\theta_{n}^0$, lower bound function $\tilde{f}_{0}$ such that $\tilde{f}_{0} < f_{n}(\widehat{\theta}_{n})$ where $\widehat{\theta}_{n}$ is some optimal solution of $f_{n}$, time horizon $T$, number of epochs $K$}
  $\bar{\theta} = \theta_{n}^{0}$ \\
  \For {$k= 0, 1,2,\ldots,K - 1$}
  {$\theta_{n}^{Tk} = \bar{\theta}$ \\
  \For {$i = 0, 1, 2, \ldots, T - 1$}
  {$\theta_{n}^{Tk + i + 1}=\theta_{n}^{Tk + i}-\frac{f_n(\theta_{n}^{Tk + i})-\tilde{f}_{k}}{\|\nabla f_n(\theta_{n}^{Tk+i})\|^2}\nabla f_n(\theta_{n}^{Tk+i})$
  }
  $\bar{\theta} = \arg \min_{0 \leq i \leq T} f_{n}(\theta_{n}^{Tk + i})$ \\
  $\tilde{f}_{k + 1} = \frac{f_{n}(\bar{\theta}) - \tilde{f}_{k}}{2}$ 
  }
  \KwOutput{$\bar{\theta}$}
\caption{Adaptive Polyak Step Size Gradient Descent}
\label{algorithm:adaptive_Polyak_v1}
\end{algorithm}

\section{Examples}
\label{sec:examples}
In this section, we consider an application of our theories in Section~\ref{sec:Polyak_Step_Size} to three specific examples: generalized linear model, over-specified Gaussian mixture model, and mixed linear regression model.
\subsection{Generalized Linear Model}
\label{sec:example_glm}
Generalized linear model is a generalization of linear regression model that allows the response variable to relate to the covariates via a link function. In particular, assume that $(Y_{1}, X_{1}), \ldots, (Y_{n}, X_{n}) \in \mathbb{R} \times \mathbb{R}^{d}$ satisfy
\begin{align}
    Y_{i} = g(X_{i}^{\top}\theta^{*}) + \varepsilon_{i}, \quad \quad \forall i \in [n] \label{eq:generalized_linear}
\end{align}
where $g: \mathbb{R} \to \mathbb{R}$ is a given link function, $\theta^{*}$ is a true but unknown parameter, and $\varepsilon_{1},\ldots,\varepsilon_{n}$ are i.i.d. noises from $\mathcal{N}(0, \sigma^2)$ where $\sigma > 0$ is a given variance parameter. Note that, the Gaussian assumption on the noise is just for the simplicity of the proof argument; the result in this section still holds for sub-Gaussian i.i.d. noise. Furthermore, we assume the random design setting of the generalized linear model, namely, $X_{1}, X_{2}, \ldots, X_{n}$ are i.i.d. from $\mathcal{N}(0, I_{d})$.

For our study, we specifically consider $g(r) : = r^{p}$ for any $p \in \mathbb{N}$ and $p \geq 2$. Note that, our choice of $g$ is motivated by phase retrieval problem \cite{Fienup_82,Shechtman_Yoav_etal_2015, candes_2011,Netrapalli_Prateek_Sanghavi_2015} where $g(r) = r^2$. To estimate $\theta^{*}$, we consider minimizing the least-square loss function, which is given by:
\begin{align}
    \min_{\theta \in \mathbb{R}^{d}} \mathcal{L}_{n}(\theta) : = \frac{1}{2n} \sum_{i = 1}^{n} (Y_{i} - (X_{i}^{\top} \theta)^{p})^{2}. \label{eq:sample_loss_linear}
\end{align}
We then also have the corresponding population least-square loss function, which admits the following form:
\begin{align*}
    \min_{\theta \in \mathbb{R}^{d}} \mathcal{L}(\theta) : = \mathbb{E}_{(X, Y)}[(Y - (X^{\top} \theta)^{p})^{2}],
\end{align*}
where the outer expectation is taken with respect to $X \sim \mathcal{N}(0, I_{d})$ and $Y = g(X^{\top} \theta^{*}) + \varepsilon$ where $\varepsilon \sim \mathcal{N}(0, \sigma^2)$. Note that $\mathbb{E}[Y^2|X]=\mathbb{E}[g(X^{\top}\theta^*)^{2}] + \sigma^2$. Thus, by taking conditional expectation, the population loss function has the following form: 
\begin{align}
    \mathcal{L}(\theta)
    &= \mathbb{E}\left[\frac{1}{2} (Y-(X^{\top}\theta)^p)^2\right] \nonumber \\
    &= \frac{1}{2} \left( \mathbb{E}\left[\left((X^{\top}\theta^*)^{p}-(X^{\top}\theta)^{p}\right)^2\right] + \sigma^2 \right). \label{eq:general_form_population_linear_model}
\end{align}

\vspace{0.5 em}
\noindent
\textbf{Strong signal-to-noise regime:} When $\theta^{*}$ is bounded away from 0, i.e., $\|\theta^{*}\| \geq C$ for some universal constant $C$, the population least-square loss function $\mathcal{L}$ is locally strongly convex around $\theta^{*}$ and locally smooth, namely, the Assumptions~\ref{assump:smoothness} and~\ref{assump:nonPL} become
\begin{align}
    \lambda_{\max}(\nabla^2 \mathcal{L}(\theta)) \leq c_{1}, \quad \|\nabla \mathcal{L}(\theta)\| \geq c_2(f(\theta) - f(\theta^*))^{1/2} \label{eq:geometry_generalized_linear_strong}
\end{align}
for all $\theta \in \mathbb{B}(\theta^{*}, \rho)$ where $\rho$ is some universal constant depending on $p$, as we demonstrate in  Appendix~\ref{sec:smoothness_PL_generalized_linear}.
Furthermore, for Assumption~\ref{assump:stab}, for any $r > 0 $ we can demonstrate that there exist universal constants $C_{1}$ and $C_{2}$ such that as long as $n \geq C_{1} (d \log(d/ \delta))^{2p}$ with probability $1 - \delta$
\begin{align}
    \sup_{\theta\in \mathbb{B}(\theta^*, r)} \|\nabla \mathcal{L}_n(\theta) - \nabla \mathcal{L}(\theta)\|\leq C_{2} \sqrt{\frac{d + \log(1/\delta)}{n}}. \label{eq:concentration_generalized_linear_strong}
\end{align}
The proof for this uniform concentration bound is also in Appendix~\ref{sec:smoothness_PL_generalized_linear}.
These results indicate that $\alpha = \gamma = 0$ in Assumptions~\ref{assump:smoothness}-\ref{assump:stab}. Therefore, a direct application of Theorem~\ref{theorem:convergence_rate_Polyak} shows that we have the iterates of Polyak step size gradient descent algorithm converge to a radius of convergence $\mathcal{O}(\sqrt{d/n})$ around $\theta^{*}$ within $\mathcal{O}(\log(n/ d))$ number of iterations. 

\vspace{0.5 em}
\noindent
\textbf{Low signal-to-noise regime:} On the other hand, when $\|\theta^{*}\|$ is sufficiently small, the population loss function is no longer locally strongly convex and the precise understandings of the sample updates from the Polyak step size gradient descent algorithm for solving the sample loss function $\mathcal{L}_{n}$ have remained poorly understood. To illustrate the behaviors of the Polyak step size gradient descent algorithm, we only focus on the no signal-to-noise setting $\theta^{*} = 0$ in this section. Under this setting, the population least-square loss function can be written as
\begin{align}
    \min_{\theta \in \mathbb{R}^{d}} \mathcal{L}(\theta) = \frac{\sigma^2 + (2p - 1)!! \|\theta - \theta^{*}\|^{2p}}{2}. \label{eq:population_no_signal}
\end{align}
Different from the setting when $\theta^{*}$ is bounded away from 0, the population loss function in equation~\eqref{eq:population_no_signal} is not locally strongly convex around $\theta^{*}$ when $\theta^{*} = 0$. Indeed, we demonstrate in Appendix~\ref{sec:smoothness_PL_generalized_linear} that for all $\theta \in \mathbb{B}(\theta^{*}, \rho)$ for some radius $\rho$, we have
\begin{align}
    \lambda_{\max}(\nabla^2 \mathcal{L}(\theta)) & \leq c_{1} \|\theta - \theta^{*}\|^{2p - 2}, \label{eq:Lipschitz_generalized_model} \\
    \|\nabla \mathcal{L}(\theta)\| & \geq c_{2} (\mathcal{L}(\theta) - \mathcal{L}(\theta^{*}))^{1 - \frac{1}{2p}}, \label{eq:geometry_generalized_model}
\end{align}
where $c_{1}, c_{2}$ are some universal constants depending on $r$. Furthermore, for Assumption~\ref{assump:stab}, from Appendix A.2 in~\citep{mou2019diffusion}, there exist universal constants $C_{1}$ and $C_{2}$ such that for any $r > 0$ and $n \geq C_{1} (d \log(d/ \delta))^{2p}$ we have
\begin{align}
    \sup_{\theta \in \mathbb{B}(\theta^{*}, r)} \|
    \nabla \mathcal{L}_{n}(\theta) - \nabla \mathcal{L}(\theta) \| \leq C_{2} (r^{p - 1} + r^{2p - 1}) \sqrt{\frac{d + \log(1/ \delta)}{n}}  \label{eq:concentration_gradient_generalized_model}
\end{align}
with probability at least $1 - \delta$. These results suggest that as long as $r \in (0, \rho)$ for some given $\rho$, the values of constants $\alpha$ and $\gamma$ in Assumptions~\ref{assump:smoothness}-\ref{assump:stab} are $\alpha = 2p - 2$ and $\gamma = p - 1$. 

Given the above studies, a direct application of Theorem~\ref{theorem:convergence_rate_Polyak} leads to the following bounds on the statistical radius of the sample Polyak step size gradient descent iterates. 
\begin{corollary}
\label{corollary:generalized_model} 
For the generalized linear model~\eqref{eq:generalized_linear} with the link function $g(r) = r^{p}$ for some natural number $p \geq 2$, as long as $n \geq c (d \log(d/ \delta))^{2p}$ for some positive universal constant $c$ and $\theta_{n}^{0} \in \mathbb{B}(\theta^{*}, \rho)$ for some $\rho > 0$, with probability $1 - \delta$ the sequence of sample Polyak step size gradient descent iterates $\{\theta_{n}^{t}\}_{t \geq 0}$ satisfies the following bounds
\begin{itemize}
\item[(i)] Strong signal-to-noise regime: When $\|\theta^{*}\| \geq C$ for some constant $C$, we have
\begin{align*}
        \min_{1 \leq k \leq t} \| \theta_{n}^{k} - \theta^{*} \| & \leq c_{1} \sqrt{\frac{d + \log(1/\delta)}{n}}, \quad \quad \text{for} \ t \geq c_{2} \log \parenth{\frac{n}{d + \log(1/\delta)}},
\end{align*}
\item[(ii)] Low signal-to-noise regime: When $\theta^{*} = 0$, we find that
\begin{align*}
        \min_{1 \leq k \leq t} \| \theta_{n}^{k} - \theta^{*} \| & \leq c_{1}' \parenth{\frac{d + \log(1/\delta)}{n}}^{1/(2p)}, \quad \quad \text{for} \ t \geq c_{2}' \log \parenth{\frac{n}{d + \log(1/\delta)}}
\end{align*}
\end{itemize}
Here, $c_{1}, c_{2}, c_{1}', c_{2}'$ are some universal constants.
\end{corollary}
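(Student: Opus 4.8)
The plan is to derive Corollary~\ref{corollary:generalized_model} as a direct instantiation of Theorem~\ref{theorem:convergence_rate_Polyak}: the entire task reduces to identifying the triple $(\alpha, \gamma, \varepsilon(n,\delta))$ in each signal regime, checking that the preconditions of the theorem are met, and substituting the explicit noise function $\varepsilon(n,\delta) = \sqrt{(d + \log(1/\delta))/n}$. Thus I would organize the proof into the two regimes, treating each as a bookkeeping exercise that plugs the verified assumptions into the general theorem.

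First I would handle the strong signal-to-noise regime. Here it suffices to read off from the local geometry~\eqref{eq:geometry_generalized_linear_strong} that Assumptions~\ref{assump:smoothness} and~\ref{assump:nonPL} hold with $\alpha = 0$, and from the uniform bound~\eqref{eq:concentration_generalized_linear_strong} that Assumption~\ref{assump:stab} holds with $\gamma = 0$ and $\varepsilon(n,\delta) = \sqrt{(d+\log(1/\delta))/n}$. Since $\alpha = \gamma = 0$ trivially satisfies $\alpha \geq \gamma$, Theorem~\ref{theorem:convergence_rate_Polyak} yields a final radius of order $\varepsilon(n,\delta)^{1/(\alpha + 1 - \gamma)} = \varepsilon(n,\delta)$ after $\mathcal{O}(\log(1/\varepsilon(n,\delta)))$ iterations; substituting the noise function gives radius $\mathcal{O}(\sqrt{(d+\log(1/\delta))/n})$ and iteration count $\mathcal{O}(\log(n/(d+\log(1/\delta))))$, as claimed in part (i).

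Next I would treat the low signal-to-noise case $\theta^* = 0$, where the population loss takes the radial form~\eqref{eq:population_no_signal}. The bounds~\eqref{eq:Lipschitz_generalized_model} and~\eqref{eq:geometry_generalized_model} identify $\alpha = 2p - 2$ in Assumptions~\ref{assump:smoothness} and~\ref{assump:nonPL}, while the concentration bound~\eqref{eq:concentration_gradient_generalized_model}, whose dominant term as $r \to 0$ is $r^{p-1}$, identifies $\gamma = p - 1$ in Assumption~\ref{assump:stab}. A key check is that $\alpha \geq \gamma$, i.e. $2p - 2 \geq p - 1$, which holds for all $p \geq 1$ and in particular for $p \geq 2$. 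Computing $\alpha + 1 - \gamma = (2p - 2) + 1 - (p - 1) = p$, Theorem~\ref{theorem:convergence_rate_Polyak} then gives a final radius of order $\varepsilon(n,\delta)^{1/p} = ((d+\log(1/\delta))/n)^{1/(2p)}$ reached after $\mathcal{O}(\log(1/\varepsilon(n,\delta))) = \mathcal{O}(\log(n/(d+\log(1/\delta))))$ iterations, establishing part (ii).

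The remaining routine step is to verify the preconditions of Theorem~\ref{theorem:convergence_rate_Polyak}: that $n$ is large enough that $\varepsilon(n,\delta)^{1/(\alpha+1-\gamma)} \leq (1-\kappa)\rho/(c_4 \bar{C}^{\gamma - \alpha})$, and that the global minimizer satisfies $\|\widehat{\theta}_n - \theta^*\| \leq r_n$ as required in Lemma~\ref{supdif}. The hypothesis $n \geq c(d\log(d/\delta))^{2p}$ is precisely what guarantees that the concentration bound~\eqref{eq:concentration_gradient_generalized_model} holds uniformly with probability $1-\delta$ and that $\varepsilon(n,\delta)$ is small enough for these preconditions. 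I expect the genuine difficulty to lie not in the corollary itself but in establishing the three assumptions that feed into it: the smoothness and Łojasiewicz bounds~\eqref{eq:Lipschitz_generalized_model}--\eqref{eq:geometry_generalized_model} follow from direct differentiation of the radial loss~\eqref{eq:population_no_signal} together with the Gaussian moment identity $\mathbb{E}[(X^\top u)^{2p}] = (2p-1)!!\,\|u\|^{2p}$, but the uniform gradient-concentration estimate~\eqref{eq:concentration_gradient_generalized_model}, with its $r^{p-1} + r^{2p-1}$ growth and the polynomial-in-$n$ sample threshold, is the technically demanding ingredient, requiring an empirical-process argument controlling the deviation of $\nabla \mathcal{L}_n$ from $\nabla \mathcal{L}$ over $\mathbb{B}(\theta^*, r)$; this is the content carried out in Appendix~\ref{sec:smoothness_PL_generalized_linear}.
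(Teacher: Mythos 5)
Your proposal is correct and mirrors the paper's own treatment: the paper likewise derives Corollary~\ref{corollary:generalized_model} by reading off $(\alpha,\gamma) = (0,0)$ in the strong-signal regime from~\eqref{eq:geometry_generalized_linear_strong}--\eqref{eq:concentration_generalized_linear_strong} and $(\alpha,\gamma) = (2p-2,\,p-1)$ in the low-signal regime from~\eqref{eq:Lipschitz_generalized_model}--\eqref{eq:concentration_gradient_generalized_model}, then applying Theorem~\ref{theorem:convergence_rate_Polyak} with $\varepsilon(n,\delta) = \sqrt{(d+\log(1/\delta))/n}$ and exponent $\alpha+1-\gamma = p$. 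Your identification of the uniform gradient-concentration bound in Appendix~\ref{sec:smoothness_PL_generalized_linear} as the technically demanding ingredient is also exactly where the paper places the burden.
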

In light of Proposition~\ref{prop:fixed_gradient_descent}, when $\theta^{*} = 0$ the iterates from the fixed-step size gradient descent algorithm have similar statistical radius $(d/n)^{1/(2p)}$ as that of the Polyak step size gradient descent updates. However, the fixed-step size gradient descent algorithm need at least $\mathcal{O}((n/d)^{\frac{p - 1}{p}})$ number of iterations to reach that radius of convergence. It demonstrates that the Polyak step size gradient descent algorithm is much cheaper than the fixed-step size gradient descent algorithm in terms of the sample size $n$.
\subsection{Mixture model}
\label{sec:over_mixture_model}
Gaussian mixture models are one of the most popular tools in machine learning and statistics for modeling heterogeneous data~\cite{Lindsay-1995, Mclachlan-1988}. In these models, learning location and scale parameters associated with each sub-population is important to understand the heterogeneity of the data. A popular approach to estimate these parameters is to maximize the log-likelihood function. Since the log-likelihood function of Gaussian mixture models is non-concave and complicated to study, a full picture about the convergence rates of optimization algorithms for solving the log-likelihood function of the over-specified Gaussian mixture models has still remained elusive.

In this section, we aim to shed light on the convergence rates of the Polyak step size gradient descent algorithm for solving the parameters of Gaussian mixture models. We specifically consider the symmetric two-component Gaussian mixture and provide comprehensive analysis of that algorithm. In particular, we assume that the data $X_{1}, X_{2}, \ldots, X_{n}$ are i.i.d. samples from $\frac{1}{2} \mathcal{N}(-\theta^{*}, \sigma^2 I_{d}) + \frac{1}{2} \mathcal{N}(\theta^{*}, \sigma^2 I_{d})$ where $\sigma > 0$ is given and $\theta^{*}$ is true but unknown parameter. To estimate $\theta^{*}$, we fit the data by the symmetric two-component Gaussian mixture 
\begin{align}
\frac{1}{2} \mathcal{N}(-\theta, \sigma^2 I_{d}) + \frac{1}{2} \mathcal{N}(\theta, \sigma^2 I_{d}). \label{eq:overspecify_mixture}
\end{align}
The maximum likelihood estimation is then given by:
\begin{align}
    \min_{\theta \in \mathbb{R}^{d}} \bar{\mathcal{L}}_{n}(\theta) : = -\frac{1}{n} \sum_{i = 1}^{n} \log \left(\frac{1}{2}\phi(X_{i}|\theta, \sigma^2 I_{d}) + \frac{1}{2}\phi(X_{i}|-\theta, \sigma^2 I_{d})\right), \label{eq:sample_loglikelihood}
\end{align}
where $\phi(\cdot|\theta, \sigma^2I_{d})$ is the density function of multivariate Gaussian distribution with mean $\theta$ and covariance matrix $\sigma^2I_{d}$. The corresponding population version of the maximum likelihood estimation~\eqref{eq:sample_loglikelihood} takes the following form:
\begin{align}
    \min_{\theta \in \mathbb{R}^{d}} \bar{\mathcal{L}}(\theta) : =  - \mathbb{E}_{X}\brackets{\log \left(\frac{1}{2} \phi(X|\theta, \sigma^2 I_{d}) + \frac{1}{2} \phi(X|-\theta, \sigma^2 I_{d})\right)}, \label{eq:population_loglikelihood}
\end{align}
where the outer expectation is taken with respect to $X \sim \frac{1}{2} \mathcal{N}(-\theta^{*}, \sigma^2 I_{d}) + \frac{1}{2} \mathcal{N}(\theta^{*}, \sigma^2I_{d})$. 

\vspace{0.5 em}
\noindent
\textbf{Strong signal-to-noise regime:} When $\|\theta^{*}\| \geq C \sigma$ for some universal constant $C$, the Corollary 1 in~\cite{Siva_2017} demonstrates that the population loss function $\bar{\mathcal{L}}$ is locally strongly convex and locally smooth as long as $\theta \in \mathbb{B}(\theta^{*}, \frac{\|\theta^{*}\|}{4})$. It indicates that we have
\begin{align}
    \lambda_{\max}(\nabla^2 \bar{\mathcal{L}}(\theta)) \leq c_{1}, \quad \|\nabla \bar{\mathcal{L}}(\theta)\| \geq c_2(f(\theta) - f(\theta^*))^{1/2}, \label{eq:geometry_mixture_model_strong}
\end{align}
i.e., the Assumptions~\ref{assump:smoothness} and~\ref{assump:nonPL} are satisfied with the constant $\alpha = 0$. Furthermore, for any $r \leq \frac{\|\theta^{*}\|}{4}$ and $n \geq C_{1} d \log(1/ \delta)$ for some universal constant $C_{1}$ we have
\begin{align}
    \sup_{\theta \in \mathbb{B}(\theta^{*}, r)} \| \nabla \bar{\mathcal{L}}_{n}(\theta) - \nabla \bar{\mathcal{L}}(\theta) \| \leq C_{2} \sqrt{\frac{d \log(1/ \delta)}{n}}  \label{eq:concentration_gradient_mixture_model}
\end{align}
with probability at least $1 - \delta$ where $C_{2}$ is some universal constant. See Corollary 4 in~\cite{Siva_2017} for the proof of this concentration result.

\vspace{0.5 em}
\noindent
\textbf{Low signal-to-noise regime:}
We specifically consider the setting $\theta^{*} = 0$. This setting corresponds to the popular over-specified Gaussian mixture models~\cite{Rousseau-2011, Ho-Nguyen-AOS-17}, namely, when we choose some given number of components that can be (much) larger than the true number of components and estimating the parameters from the mixture models with that chosen number of components. We prove in Appendix~\ref{sec:smoothness_PL_mixture_model} that for all $\theta \in \mathbb{B}(\theta^{*}, \frac{\sigma}{2})$:
\begin{align}
    \lambda_{\max}(\nabla^2 \bar{\mathcal{L}}(\theta)) & \leq c_{1} \|\theta - \theta^{*}\|^{2}, \label{eq:Lipschitz_mixture_model_low_signal} \\
    \|\nabla \bar{\mathcal{L}}(\theta)\| & \geq c_{2} (\bar{\mathcal{L}}(\theta) - \bar{\mathcal{L}}(\theta^{*}))^{3/4}. \label{eq:geometry_mixture_model_low_signal}
\end{align}
Furthermore, from Lemma 1 in~\citep{Raaz_Ho_Koulik_2020}, there exist universal constants $C_{1}$ and $C_{2}$ such that for any $r > 0$ and $n \geq C_{1} d \log(1/ \delta)$ we have:
\begin{align}
    \sup_{\theta \in \mathbb{B}(\theta^{*}, r)} \| \nabla \bar{\mathcal{L}}_{n}(\theta) - \nabla \bar{\mathcal{L}}(\theta) \| \leq C_{2} r \sqrt{\frac{d \log(1/ \delta)}{n}}  \label{eq:concentration_gradient_mixture_model}
\end{align}
with probability at least $1 - \delta$.

Combining the above results to Theorem~\ref{theorem:convergence_rate_Polyak}, we have the following results on the final statistical radius of the Polyak step size iterates under different regimes of the two-component Gaussian mixture model.
\begin{corollary}
\label{corollary:mixture_model} 
For the symmetric two-component mixture model~\eqref{eq:overspecify_mixture}, there exist positive universal constants $c_{1}, c_{2}, c_{1}', c_{2}'$ such that when $n \geq c d \log(1/ \delta)$ for some universal constant $c$, with probability $1 - \delta$ the sequence of sample Polyak step size gradient descent iterates $\{\theta_{n}^{t}\}_{t \geq 0}$ satisfies the following bounds:
\begin{itemize}
\item[(i)] Strong signal-to-noise regime: When $\|\theta^{*}\| \geq C$ for some constant $C$ and $\theta_{n}^{0} \in \mathbb{B}(\theta^{*}, \frac{\|\theta\|^{*}}{4})$, we have
\begin{align*}
        \min_{1 \leq k \leq t} \| \theta_{n}^{k} - \theta^{*} \| & \leq c_{1} \sqrt{\frac{d \log(1/\delta)}{n}}, \quad \quad \text{for} \ t \geq c_{2} \log \parenth{\frac{n}{d \log(1/\delta)}},
\end{align*}
\item[(ii)] Low signal-to-noise regime: When $\theta^{*} = 0$ and $\theta_{n}^{0} \in \mathbb{B}(\theta^{*}, \frac{\sigma}{2})$ we find that
\begin{align*}
        \min_{1 \leq k \leq t} \| \theta_{n}^{k} - \theta^{*} \| & \leq c_{1}' \parenth{\frac{d \log(1/\delta)}{n}}^{1/4}, \quad \quad \text{for} \ t \geq c_{2}' \log \parenth{\frac{n}{d \log(1/\delta)}}.
\end{align*}
\end{itemize}
\end{corollary}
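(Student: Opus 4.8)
The plan is to derive Corollary~\ref{corollary:mixture_model} as a direct instantiation of Theorem~\ref{theorem:convergence_rate_Polyak}, handling the two regimes separately by reading off the exponents $\alpha$ and $\gamma$ from the local geometry and concentration bounds already recorded for the model. In the strong signal regime, equation~\eqref{eq:geometry_mixture_model_strong} certifies Assumptions~\ref{assump:smoothness} and~\ref{assump:nonPL} with $\alpha = 0$: the uniform Hessian bound is the local smoothness condition, and $\|\nabla \bar{\mathcal{L}}(\theta)\| \geq c_2 (f(\theta) - f(\theta^*))^{1/2}$ is the $\alpha = 0$ specialization of~\ref{assump:nonPL}, i.e.\ the standard Polyak-\L ojasiewicz inequality. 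The bound~\eqref{eq:concentration_gradient_mixture_model} gives Assumption~\ref{assump:stab} with $\gamma = 0$ and noise function $\varepsilon(n, \delta) = \sqrt{d \log(1/\delta)/n}$. Substituting $\alpha = \gamma = 0$ into Theorem~\ref{theorem:convergence_rate_Polyak} produces the final radius $\varepsilon(n, \delta)^{1/(1 + \alpha - \gamma)} = \varepsilon(n, \delta)$ after $C_1 \log(1/\varepsilon(n, \delta)) = \Theta(\log(n/(d\log(1/\delta))))$ iterations, which is exactly part (i).

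For the low signal regime $\theta^* = 0$, I would read the exponents off the degenerate geometry~\eqref{eq:Lipschitz_mixture_model_low_signal}--\eqref{eq:geometry_mixture_model_low_signal}: the bound $\lambda_{\max}(\nabla^2 \bar{\mathcal{L}}(\theta)) \leq c_1 \|\theta - \theta^*\|^2$ matches~\ref{assump:smoothness} with $\alpha = 2$, and the \L ojasiewicz exponent $3/4 = 1 - 1/(\alpha + 2)$ is consistent with the same value $\alpha = 2$ in~\ref{assump:nonPL}. The concentration bound~\eqref{eq:concentration_gradient_mixture_model}, scaling linearly in $r$, yields $\gamma = 1$ with the same noise function. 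Since $\alpha = 2 \geq 1 = \gamma$, the hypotheses of Theorem~\ref{theorem:convergence_rate_Polyak} are in force, and the final radius becomes $\varepsilon(n, \delta)^{1/(1 + \alpha - \gamma)} = \varepsilon(n, \delta)^{1/2} = (d\log(1/\delta)/n)^{1/4}$, again after $\Theta(\log(n/(d\log(1/\delta))))$ iterations, which is part (ii).

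Two standing hypotheses of the theorem still need to be discharged. The sample-size requirement $\varepsilon(n, \delta)^{1/(1 + \alpha - \gamma)} \leq (1 - \kappa)\rho/(c_4 \bar{C}^{\gamma - \alpha})$ is just the statement that $n$ exceed a fixed multiple of $d\log(1/\delta)$; this is absorbed into the hypothesis $n \geq c\, d \log(1/\delta)$ after enlarging the universal constant $c$, with $\rho$ taken to be the local radius on which the geometry holds ($\|\theta^*\|/4$ in the strong regime, $\sigma/2$ in the low-signal regime). The requirement $\|\widehat{\theta}_n - \theta^*\| \leq r_n$ from Lemma~\ref{supdif}, with $r_n = \bar{C}\,\varepsilon(n, \delta)^{1/(\alpha + 1 - \gamma)}$, I would obtain from the known statistical rate of the maximum likelihood estimator in each regime, equivalently from Proposition~\ref{prop:fixed_gradient_descent}, which places $\widehat{\theta}_n$ inside the ball of radius $r_n$.

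The only genuinely model-specific step, and the place I expect the real work to lie, is establishing the degenerate geometry~\eqref{eq:Lipschitz_mixture_model_low_signal}--\eqref{eq:geometry_mixture_model_low_signal} (deferred to Appendix~\ref{sec:smoothness_PL_mixture_model}): because the quadratic term of $\bar{\mathcal{L}}$ vanishes at $\theta^* = 0$, a careful Taylor expansion of the population log-likelihood is needed to show that its leading behavior is quartic in $\|\theta\|$, which is what forces both $\alpha = 2$ and the $3/4$ exponent. By contrast the concentration bounds are imported verbatim from~\cite{Siva_2017} and~\cite{Raaz_Ho_Koulik_2020}, so I anticipate no additional difficulty there.
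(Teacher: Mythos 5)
Your proposal is correct and matches the paper's own argument essentially verbatim: the paper likewise obtains the corollary by instantiating Theorem~\ref{theorem:convergence_rate_Polyak} with $(\alpha, \gamma) = (0, 0)$ in the strong signal regime and $(\alpha, \gamma) = (2, 1)$ in the low signal regime, with the only model-specific work being the geometry claims~\eqref{eq:Lipschitz_mixture_model_low_signal}--\eqref{eq:geometry_mixture_model_low_signal} proved in Appendix~\ref{sec:smoothness_PL_mixture_model} (via elementary inequalities such as $e^{-x} + e^{x} \geq 2 + x^2$ and $x\tanh(x) \leq x^2 - x^4/3 + 2x^6/15$, i.e.\ exactly the quartic-behavior analysis you anticipate) and the concentration bounds imported from the cited references. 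Your explicit discharge of the standing hypotheses of Lemma~\ref{supdif} via Proposition~\ref{prop:fixed_gradient_descent} is, if anything, slightly more careful than the paper's one-line ``combining the above results'' justification.
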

A few comments with the results of Corollary~\ref{corollary:mixture_model} are in order. First, the Expectation-Maximization (EM) algorithm~\cite{Rubin-1977} is a popular algorithm for solving the parameters of Gaussian mixture models. In the symmetric two-component Gaussian mixture~\eqref{eq:overspecify_mixture}, the EM algorithm is simply the gradient descent with step size being 1. In light of the results of Proposition~\ref{prop:fixed_gradient_descent} and the results in~\cite{Raaz_Ho_Koulik_2020}, the EM iterates reach to the final statistical radius $\mathcal{O}((d/n)^{1/4})$ after $\mathcal{O}(\sqrt{n})$ number of iterations. The results in Corollary~\ref{corollary:mixture_model} indicate that the Polyak step size gradient descent iterates reach to the final statistical radius with a much fewer number of iterations, namely, $\mathcal{O}(\log(n))$, while each iteration of the Polyak step size gradient descent has similar computational complexity as that of the EM algorithm. Therefore, the Polyak step size gradient descent algorithm is more efficient than the EM algorithm for the low-signal-to noise regime of symmetric two-component Gaussian mixture model. Second, the statistical radius $(d/n)^{1/4}$ that the Polyak iterates reach to in the low signal-to-noise regime is optimal according to the work~\cite{Ho-Nguyen-EJS-16}. 
\subsection{Mixed linear regression}
\label{sec:mix_linear}
Mixed linear regression is a generalization of vanilla linear regression model when we have multiple mean parameters and each data can associate with one of these parameters. In statistics, mixed linear regression is often referred to as mixture of regression~\cite{Khalili-2007}, which is also a special case of mixture of experts~\cite{Jacob_Jordan-1991, Jordan-1994} where the mixing weights are assumed to be independent of the covariates. 

Similar to mixture model in Section~\ref{sec:over_mixture_model}, we also aim to shed light on the convergence rate of the Polyak step size gradient descent algorithm under the simple symmetric two-component mixed linear regression setting. In particular, we assume that $(X_{1}, Y_{1}), (X_{2}, Y_{2}), \ldots, (X_{n}, Y_{n})$ are i.i.d. samples from symmetric two components 
\begin{align}
    \biggr(\frac{1}{2} \mathcal{N}(Y| -(\theta^{*})^{\top}X, \sigma^2) + \frac{1}{2} \mathcal{N}(Y| (\theta^{*})^{\top}X, \sigma^2)\biggr) \cdot \mathcal{N}(X|0, I_{d}), \label{eq:mixed_linear}
\end{align}
where $\sigma > 0$ is known variance and $\theta^{*}$ is true but unknown parameter. To estimate $\theta^{*}$, we fit the data with the following symmetric two-component mixed linear regression:
\begin{align}
    \parenth{\frac{1}{2} \mathcal{N}(Y| - \theta^{\top}X, \sigma^2) + \frac{1}{2} \mathcal{N}(Y| \theta^{\top}X, \sigma^2)} \cdot \mathcal{N}(X|0, I_{d}). \label{eq:fitted_mixed_linear}
\end{align}
A common approach to obtain an estimator of $\theta^{*}$ is maximum likelihood estimator, which is given by:
\begin{align}
    \min_{\theta \in \mathbb{R}^{d}} \widetilde{\mathcal{L}}_{n}(\theta) : = -\frac{1}{n} \sum_{i = 1}^{n} \log \left(\frac{1}{2}\phi(Y_{i}|\theta^{\top}X_{i}, \sigma^2) + \frac{1}{2}\phi(Y_{i}|-\theta^{\top}X_{i}, \sigma^2)\right). \label{eq:sample_loglikelihood_mixed_linear}
\end{align}
The corresponding population version of the optimization problem~\eqref{eq:sample_loglikelihood_mixed_linear} is
\begin{align}
    \min_{\theta \in \mathbb{R}^{d}} \widetilde{\mathcal{L}}(\theta) : = -\mathbb{E}_{X, Y} \brackets{\log \left(\frac{1}{2}\phi(Y|\theta^{\top}X, \sigma^2) + \frac{1}{2}\phi(Y|-\theta^{\top}X, \sigma^2)\right)}, \label{eq:population_loglikelihood_mixed_linear}
\end{align}
where the outer expectation is taken with respect to $X \sim \mathcal{N}(0, I_{d})$ and $Y|X \sim \frac{1}{2} \mathcal{N}(Y| -(\theta^{*})^{\top}X, \sigma^2) + \frac{1}{2} \mathcal{N}(Y| (\theta^{*})^{\top}X, \sigma^2)$.

\vspace{0.5 em}
\noindent
\textbf{Strong signal-to-noise regime:} We first consider the setting when $\|\theta^{*}\| \geq C$ where $C$ is some universal constant. Corollary 2 in~\cite{Siva_2017} proves that for that strong signal-to-noise regime, the population negative log-likelihood function $\tilde{\mathcal{L}}$ is locally strongly convex and smooth when $\theta \in \mathbb{B}(\theta^{*}, \frac{\|\theta^{*}\|}{32})$. Therefore, the Assumptions~\ref{assump:smoothness} and~\ref{assump:nonPL} are satisfied with the constant $\alpha = 0$. Furthermore, according to the result of Corollary 5 in~\cite{Siva_2017}, Assumption~\ref{assump:stab} is satisfied with $\gamma = 0$ and for any radius $r \leq \|\theta^{*}\|/32$.

\vspace{0.5 em}
\noindent
\textbf{Low signal-to-noise regime:} We consider specifically the setting that $\theta^{*} = 0$. We prove in Appendix~\ref{sec:smoothness_PL_mixed_linear} that for all $\theta \in \mathbb{B}(\theta^{*}, \frac{\sigma}{\sqrt{20}})$, there exist universal constants $c_{1}$ and $c_{2}$ such that:
\begin{align}
    \lambda_{\max}(\nabla^2 \widetilde{\mathcal{L}}(\theta)) & \leq c_{1} \|\theta - \theta^{*}\|^{2}, \label{eq:smooth_mixed_linear_low_signal} \\
    \|\nabla \widetilde{\mathcal{L}}(\theta)\| & \geq c_{2} (\widetilde{\mathcal{L}}(\theta) - \tilde{\mathcal{L}}(\theta^{*}))^{3/4}. \label{eq:geometry_mixed_linear_low_signal}
\end{align}
These results indicate that the Assumptions~\ref{assump:smoothness} and~\ref{assump:nonPL} are satisfied with the constant $\alpha = 2$. Furthermore, from the concentration result from Lemma 2 of~\cite{Kwon_minimax}, there exist universal constants $C_{1}$ and $C_{2}$ such that as long as $n \geq C_{1} d \log(1/\delta)$, we have for any $r > 0$
\begin{align*}
    \mathbb{P} \parenth{\sup_{\theta \in \mathbb{B}(\theta^{*}, r)} \| \nabla \widetilde{\mathcal{L}}_{n}(\theta) - \nabla \widetilde{\mathcal{L}}(\theta) \| \leq C_{2} r \sqrt{\frac{d \log(1/ \delta)}{n}}} \geq 1 - \delta.
\end{align*}
It indicates that Assumption~\ref{assump:stab} is satisfied when $\gamma = 1$. Collecting all of the above results under both the strong and low signal-to-noise regimes, we have the following bounds on the statistical radii of the Polyak step size gradient descent iterates.
\begin{corollary}
\label{corollary:mixed_linear} 
For the symmetric two-component mixed linear regression~\eqref{eq:mixed_linear}, when $n \geq c \cdot d \log(1/ \delta)$ for some universal constant $c$, there exist positive universal constants $c_{1}, c_{2}, c_{1}', c_{2}'$ such that with probability $1 - \delta$ the sequence of sample Polyak step size gradient descent iterates $\{\theta_{n}^{t}\}_{t \geq 0}$ satisfies the following bounds:
\begin{itemize}
\item[(i)] Strong signal-to-noise regime: When $\|\theta^{*}\| \geq C$ for some constant $C$ and $\theta_{n}^{0} \in \mathbb{B}(\theta^{*}, \frac{\|\theta^{*}\|}{32})$, we have
\begin{align*}
        \min_{1 \leq k \leq t} \| \theta_{n}^{k} - \theta^{*} \| & \leq c_{1} \sqrt{\frac{d \log(1/\delta)}{n}}, \quad \quad \text{for} \ t \geq c_{2} \log \parenth{\frac{n}{d \log(1/\delta)}},
\end{align*}
\item[(ii)] Low signal-to-noise regime: When $\theta^{*} = 0$ and $\theta_{n}^{0} \in \mathbb{B}(\theta^{*}, \frac{\sigma}{\sqrt{20}})$ we find that
\begin{align*}
        \min_{1 \leq k \leq t} \| \theta_{n}^{k} - \theta^{*} \| & \leq c_{1}' \parenth{\frac{d \log(1/\delta)}{n}}^{1/4}, \quad \quad \text{for} \ t \geq c_{2}' \log \parenth{\frac{n}{d \log(1/\delta)}}.
\end{align*}
\end{itemize}
\end{corollary}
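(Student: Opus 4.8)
The plan is to derive both parts of the statement as direct instantiations of Theorem~\ref{theorem:convergence_rate_Polyak}, so the work reduces to three routine checks in each regime: (i) reading off the correct values of $\alpha$ and $\gamma$ and the noise function $\varepsilon(n, \delta)$ from the geometric and concentration facts already recorded above; (ii) confirming the structural hypotheses of the theorem, namely $\alpha \geq \gamma$, the sample-size lower bound, and the localization $\|\widehat{\theta}_{n} - \theta^{*}\| \leq r_{n}$ demanded by Lemma~\ref{supdif}; and (iii) simplifying the output radius $\varepsilon(n, \delta)^{1/(\alpha + 1 - \gamma)}$ and iteration count $\log(1/\varepsilon(n, \delta))$ into the displayed forms. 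In both regimes the gradient concentration gives $\varepsilon(n, \delta) = \sqrt{d \log(1/\delta)/n}$, hence $\log(1/\varepsilon(n, \delta)) = \Theta(\log(n/(d \log(1/\delta))))$, which already matches the claimed iteration complexity.

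For the strong signal-to-noise regime I would invoke the local strong convexity and smoothness of $\widetilde{\mathcal{L}}$ on $\mathbb{B}(\theta^{*}, \|\theta^{*}\|/32)$ from Corollary~2 of~\cite{Siva_2017}, yielding Assumptions~\ref{assump:smoothness} and~\ref{assump:nonPL} with $\alpha = 0$, together with the uniform gradient concentration of Corollary~5 of~\cite{Siva_2017}, yielding Assumption~\ref{assump:stab} with $\gamma = 0$. Since $\alpha = \gamma = 0$ we have $1/(\alpha + 1 - \gamma) = 1$, and the initialization $\theta_{n}^{0} \in \mathbb{B}(\theta^{*}, \|\theta^{*}\|/32)$ supplies the local radius $\rho$; Theorem~\ref{theorem:convergence_rate_Polyak} then gives radius of order $\varepsilon(n, \delta) = \sqrt{d \log(1/\delta)/n}$ once $t \gtrsim \log(1/\varepsilon(n, \delta))$, which is part (i).

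For the low signal-to-noise regime ($\theta^{*} = 0$) I would use the bounds~\eqref{eq:smooth_mixed_linear_low_signal} and~\eqref{eq:geometry_mixed_linear_low_signal} proved in Appendix~\ref{sec:smoothness_PL_mixed_linear}, which give Assumptions~\ref{assump:smoothness} and~\ref{assump:nonPL} with $\alpha = 2$ (the Łojasiewicz exponent $3/4 = 1 - 1/(\alpha + 2)$ being consistent with $\alpha = 2$), and the concentration bound of Lemma~2 in~\cite{Kwon_minimax}, which gives Assumption~\ref{assump:stab} with $\gamma = 1$, valid on $\mathbb{B}(\theta^{*}, \sigma/\sqrt{20})$ so that $\rho = \sigma/\sqrt{20}$ matches the stated initialization. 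Here $\alpha \geq \gamma$ holds and $1/(\alpha + 1 - \gamma) = 1/2$, so the final radius is $\varepsilon(n, \delta)^{1/2} = (d \log(1/\delta)/n)^{1/4}$, which is part (ii).

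The substantive step is not the plug-in but verifying the localization hypothesis $\|\widehat{\theta}_{n} - \theta^{*}\| \leq r_{n}$ with $r_{n} = \bar{C}\,\varepsilon(n, \delta)^{1/(\alpha + 1 - \gamma)}$ that Theorem~\ref{theorem:convergence_rate_Polyak} takes as an input. I would establish it from first-order stationarity of the MLE: since $\nabla \widetilde{\mathcal{L}}_{n}(\widehat{\theta}_{n}) = 0$, Assumption~\ref{assump:stab} forces $\|\nabla \widetilde{\mathcal{L}}(\widehat{\theta}_{n})\| \leq c_{3} \|\widehat{\theta}_{n} - \theta^{*}\|^{\gamma} \varepsilon(n, \delta)$, while the Łojasiewicz inequality together with the explicit lower bound $\widetilde{\mathcal{L}}(\theta) - \widetilde{\mathcal{L}}(\theta^{*}) \gtrsim \|\theta - \theta^{*}\|^{\alpha + 2}$ available for this model forces $\|\nabla \widetilde{\mathcal{L}}(\widehat{\theta}_{n})\| \gtrsim \|\widehat{\theta}_{n} - \theta^{*}\|^{\alpha + 1}$; combining gives $\|\widehat{\theta}_{n} - \theta^{*}\|^{\alpha + 1 - \gamma} \lesssim \varepsilon(n, \delta)$, i.e. the claimed bound (the optimal $(d/n)^{1/4}$ rate when $\alpha = 2$, $\gamma = 1$). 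The sample-size condition $n \geq c\,d \log(1/\delta)$ finally ensures $r_{n} < \rho$, so the entire trajectory remains inside the region where all three assumptions hold.
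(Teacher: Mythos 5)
Your plan coincides with the paper's own proof of Corollary~\ref{corollary:mixed_linear}: the paper likewise treats the corollary as a direct instantiation of Theorem~\ref{theorem:convergence_rate_Polyak}, citing Corollaries~2 and~5 of~\cite{Siva_2017} for the strong signal-to-noise regime ($\alpha = \gamma = 0$, radius $\varepsilon(n,\delta) = \sqrt{d\log(1/\delta)/n}$), and the bounds~\eqref{eq:smooth_mixed_linear_low_signal}--\eqref{eq:geometry_mixed_linear_low_signal} from Appendix~\ref{sec:smoothness_PL_mixed_linear} together with Lemma~2 of~\cite{Kwon_minimax} for the low signal-to-noise regime ($\alpha = 2$, $\gamma = 1$, exponent $1/(\alpha + 1 - \gamma) = 1/2$, hence the $(d\log(1/\delta)/n)^{1/4}$ radius). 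Your arithmetic on the exponents and the translation $\log(1/\varepsilon(n,\delta)) = \Theta(\log(n/(d\log(1/\delta))))$ is correct, and your matching of the local radii $\rho = \|\theta^*\|/32$ and $\rho = \sigma/\sqrt{20}$ to the stated initializations is exactly right.

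The one place you go beyond the paper --- verifying the hypothesis $\|\widehat{\theta}_n - \theta^*\| \leq r_n$ of Lemma~\ref{supdif} --- is also the one place where your argument has a genuine gap. Your stationarity computation ($\nabla\widetilde{\mathcal{L}}_n(\widehat{\theta}_n) = 0$, then Assumption~\ref{assump:stab} bounds $\|\nabla\widetilde{\mathcal{L}}(\widehat{\theta}_n)\|$ from above while the gradient lower bound~\eqref{eq:key_inequality_mixed_linear_second} bounds it from below, yielding $\|\widehat{\theta}_n - \theta^*\|^{\alpha+1-\gamma} \lesssim \varepsilon(n,\delta)$) is valid only once you already know $\widehat{\theta}_n \in \mathbb{B}(\theta^*, \rho)$: both the concentration bound of Assumption~\ref{assump:stab} and the inequality $\|\nabla\widetilde{\mathcal{L}}(\theta)\| \geq \frac{3}{2\sigma^4}\|\theta\|^3$ are purely local statements on that ball, and nothing in Assumptions (W.1)--(W.3) excludes the global minimizer of the nonconvex sample likelihood $\widetilde{\mathcal{L}}_n$ from lying outside it. So the step is circular as written: localizing the MLE within $\rho$ requires a separate global consistency argument (or an appeal to the known parametric rate for this model, e.g.~\cite{Kwon_minimax}). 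Note the paper does not close this gap either --- it carries $\|\widehat{\theta}_n - \theta^*\| \leq r_n$ as a standing hypothesis inherited from Lemma~\ref{supdif} into Theorem~\ref{theorem:convergence_rate_Polyak}, with only an informal remark that the assumption is ``natural.'' The clean fix is to either state the localization as an assumption, as the paper does, or first invoke the known global MLE rate to place $\widehat{\theta}_n$ inside $\mathbb{B}(\theta^*,\rho)$; after that, your stationarity argument correctly sharpens the bound to $r_n$ and is a nice self-contained alternative to citing external minimax results.
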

Similar to the symmetric two-component Gaussian mixture, the EM algorithm for solving the symmetric two-component mixed linear regression is simply the gradient descent with step size one. The results of Corollary~\ref{corollary:mixed_linear} and Proposition~\ref{prop:fixed_gradient_descent} indicate that the Polyak iterates take much fewer number of iterations, i.e., $\mathcal{O}(\log(n))$ than that of the EM algorithm, which is $\mathcal{O}(\sqrt{n})$. It indicates that the Polyak step size gradient descent algorithm is computationally more efficient than the EM algorithm for reaching to the optimal statistical radius $\mathcal{O}((d/n)^{1/4})$ in the low signal-to-noise regime. 

\section{Experiments}
\label{sec:experiments}
In this section, we illustrate the behaviors of Polyak step size gradient descent iterates for three statistical examples in Section~\ref{sec:examples}. In Section~\ref{sec:population_expriment}, we compare the behaviors of population Polyak step size gradient descent iterates and population fixed-step size gradient descent iterates for solving the population loss functions of the given statistical models. In Section~\ref{sec:sample_expriment}, we compare the sample iterates from both (adaptive) Polyak step size and fixed-step size gradient descent methods. 
\subsection{Population loss function}
\label{sec:population_expriment}
We first use Polyak step size and fixed-step size gradient descent algorithms to find the minima of the population losses of three examples in Section~\ref{sec:examples}. We consider these examples in $d = 2$ dimensions. For the strong signal-to-noise regime, we choose $\theta^* = (2,1)$. We compare the convergence rates of Polyak step size and fixed-step size iterates to the optimal solution $\theta^{*}$ of the population losses in Figure~\ref{pl:pop_diff_0}. In this figure, GLM, GMM, MLR respectively stand for generalized linear model, Gaussian mixture model, and mixed linear regression. All the plots in this figure are $\log$-$\log$ scale plots. From this figure, the Polyak step size GD iterates converge linearly to $\theta^{*}$  while the fixed-step size gradient descent iterates converge sub-linearly to $\theta^{*}$. These experiment results are consistent with our theories in Section~\ref{sec:examples}.

\begin{figure}[!t]
\centering
\subfigure{
\includegraphics[width=0.3\textwidth]{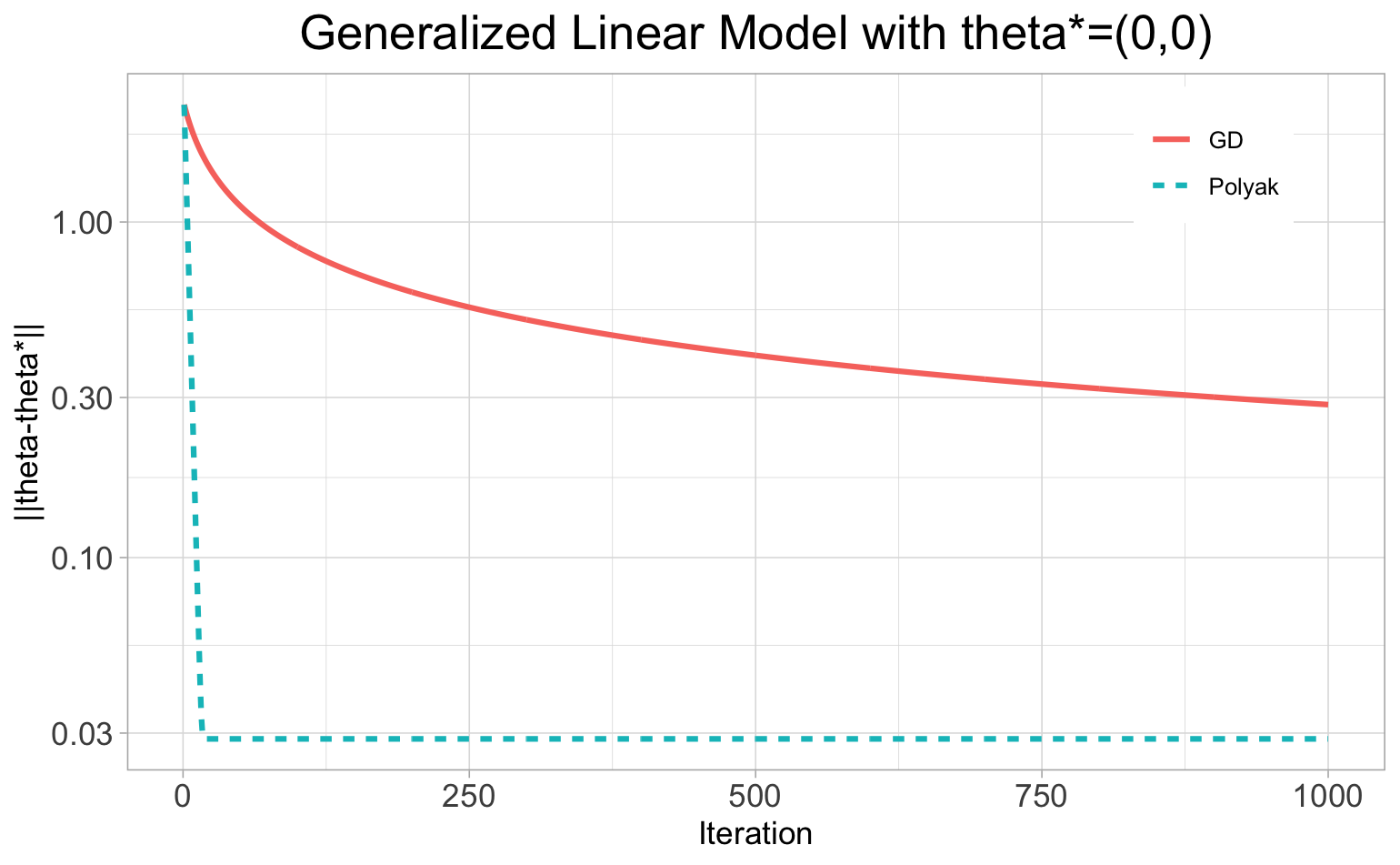}}
\subfigure{
\includegraphics[width=0.3\textwidth]{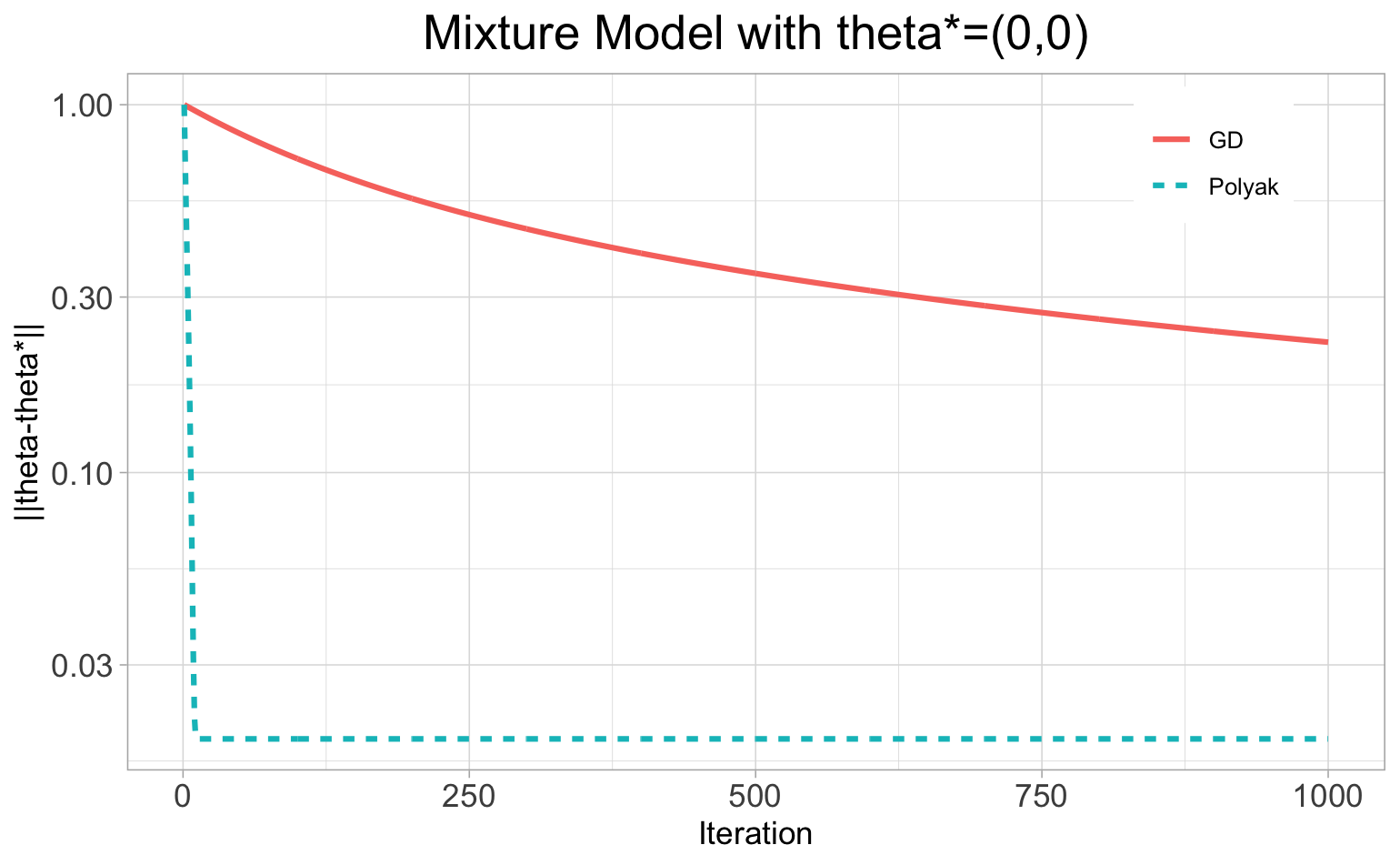}}
\subfigure{
\includegraphics[width=0.3\textwidth]{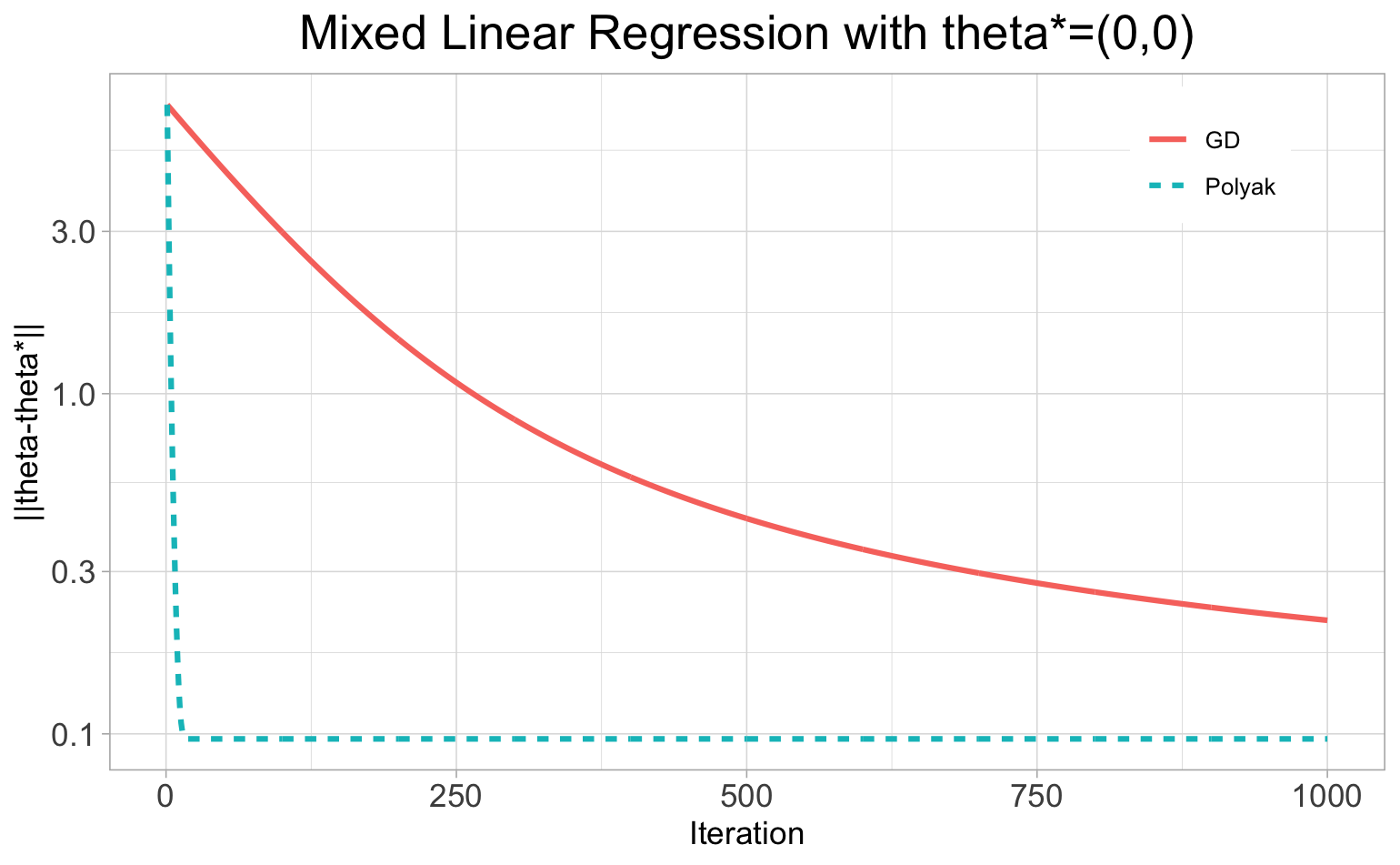}}
\bigskip
\subfigure{
\label{pl:GLM_pop_diff_n0}
\includegraphics[width=0.3\textwidth]{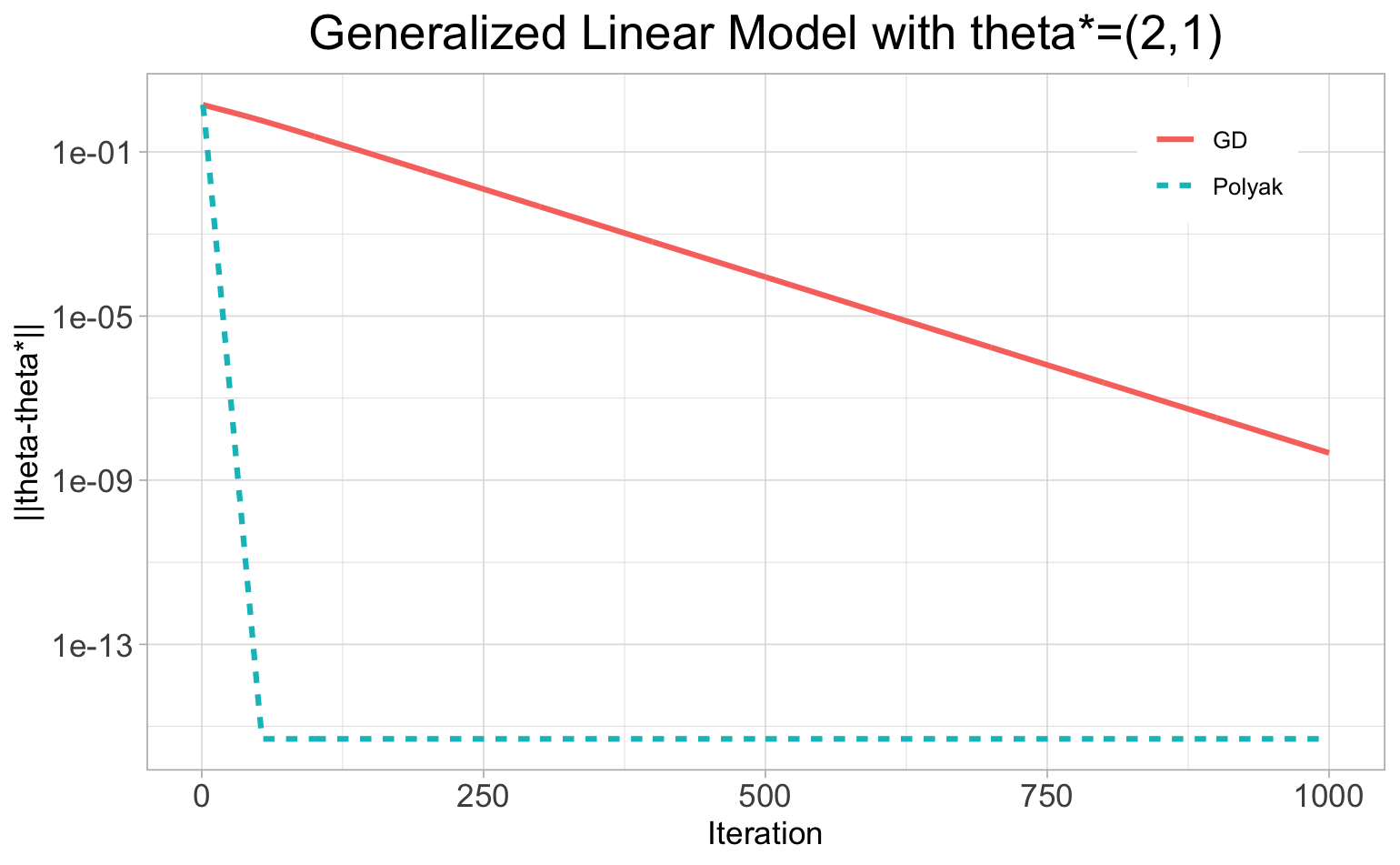}}
\subfigure{
\label{pl:GMM_pop_diff_n0}
\includegraphics[width=0.3\textwidth]{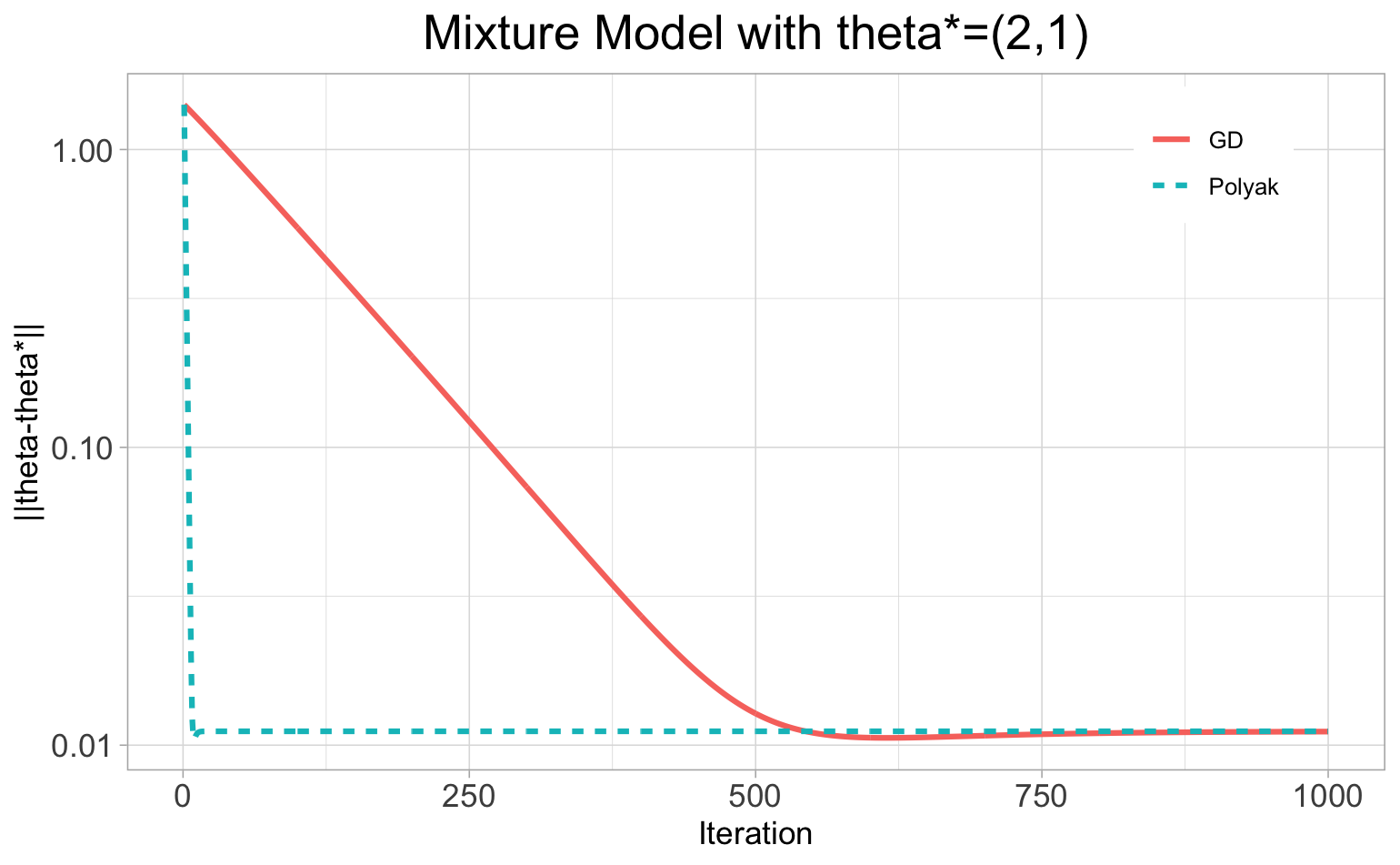}}
\subfigure{
\label{pl:MLR_pop_diff_n0}
\includegraphics[width=0.3\textwidth]{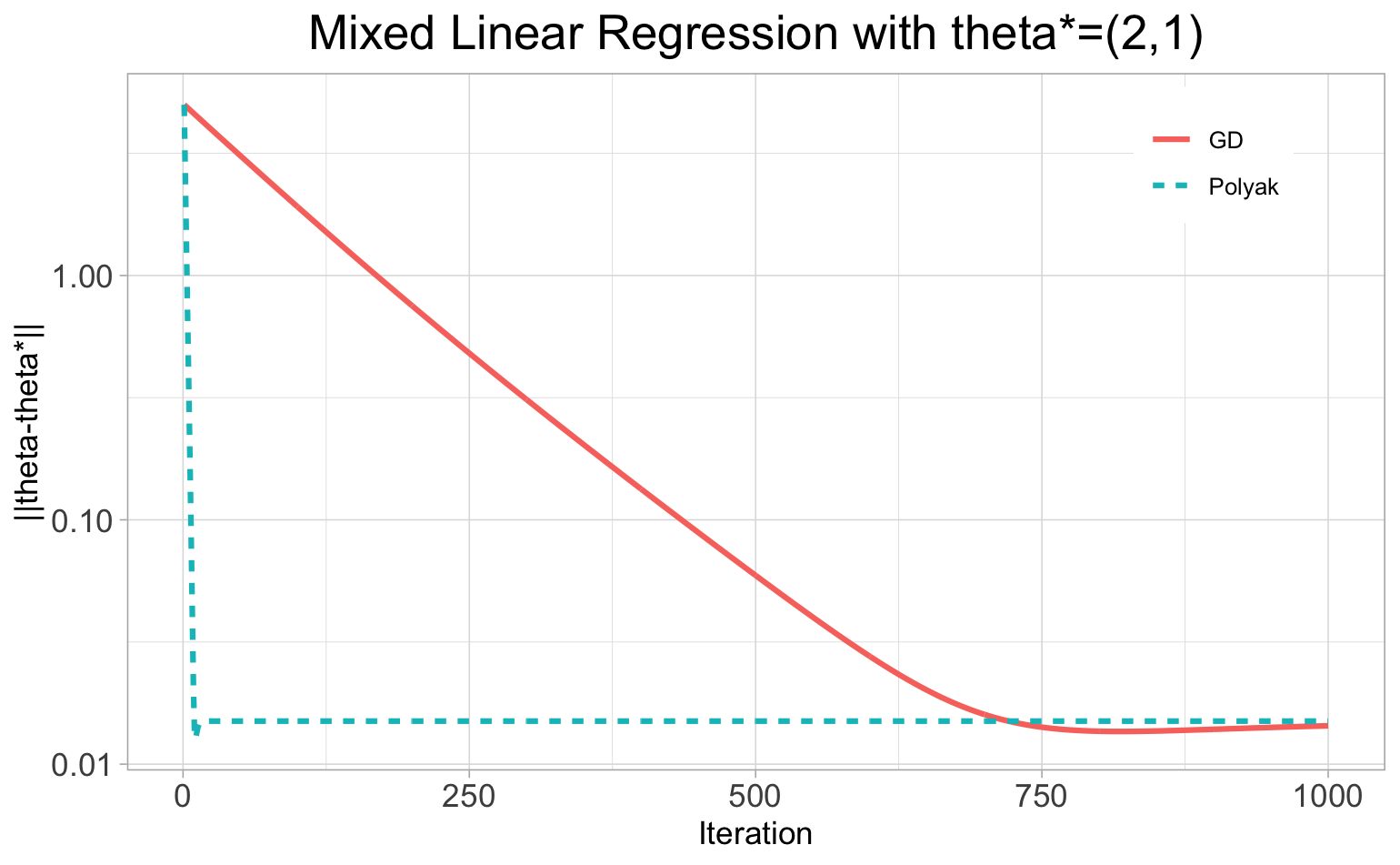}}
\caption{The convergence rates of Polyak step size and fixed-step size gradient descent iterates for solving the population losses of generalized linear model, Gaussian mixture model, and mixed linear regression model in Section~\ref{sec:examples}. The first row corresponds to the low signal-to-noise regime $\theta^{*} = (0,0)$ while the second row is for the strong signal-to-noise regime $\theta^{*} = (2,1)$.}
\label{pl:pop_diff_0}
\end{figure}



\subsection{Sample loss function}
\label{sec:sample_expriment}
Now, we carry out the experiments to compare the behaviors of Polyak step size and fixed-step size gradient descent iterates for solving the sample loss functions in three examples in Section~\ref{sec:examples}. In these examples, since we only observe the data, we do not have access to the optimal value of the sample loss functions. Therefore, we instead use the adaptive Polyak step size gradient descent in Algorithm~\ref{algorithm:adaptive_Polyak_v1} for these examples. The strategy for choosing the lower bound of the optimal value of the sample loss functions in that algorithm will be described in details in each example. In our experiments, the sample size $n$ is chosen to be in the set $\left\{ 1000,2000,\cdots,100000 \right\}$.
%

\vspace{0.5 em}
\noindent
\textbf{Generalized linear model:}
\label{sec:GLM_exp}
We first consider the generalized linear model in Section \ref{sec:example_glm}. We specifically choose the link functions $g(r) = r^2$ , i.e., $p = 2$. The data $(Y_{1}, X_{1}), \ldots, (Y_{n}, X_{n})$ satisfy
$$Y_i=(X_i^{\top}\theta^{*})^2+\varepsilon_i,$$
where $X_{1}, \ldots, X_{n} \overset{i.i.d.} \sim \mathcal{N}(0,I_{2})$ and $\varepsilon_{1}, \ldots, \varepsilon_{n} \overset{i.i.d.} \sim \mathcal{N}(0,0.01)$. We choose $\theta^{*} = (0, 0)$ for the low signal-to-noise regime and  $\theta^*= (0.5, 1)$ for the strong signal-to-noise regime in our experiments.

\begin{figure}[!t]
\centering
\subfigure{
\label{pl:GLM_0_fixerror}
\includegraphics[width=0.45\textwidth]{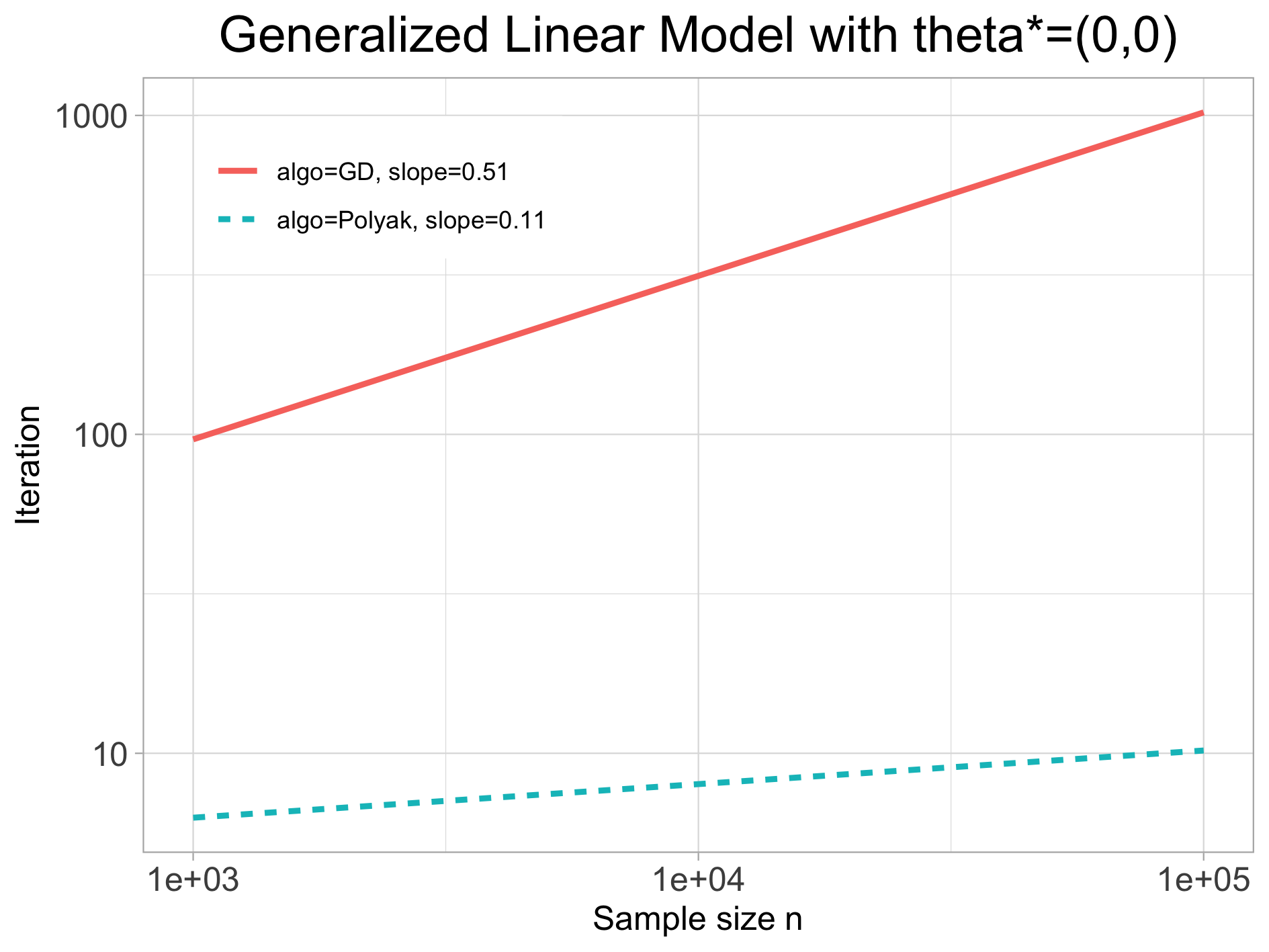}}
\subfigure{
\label{pl:GLM_0_fixiter}
\includegraphics[width=0.45\textwidth]{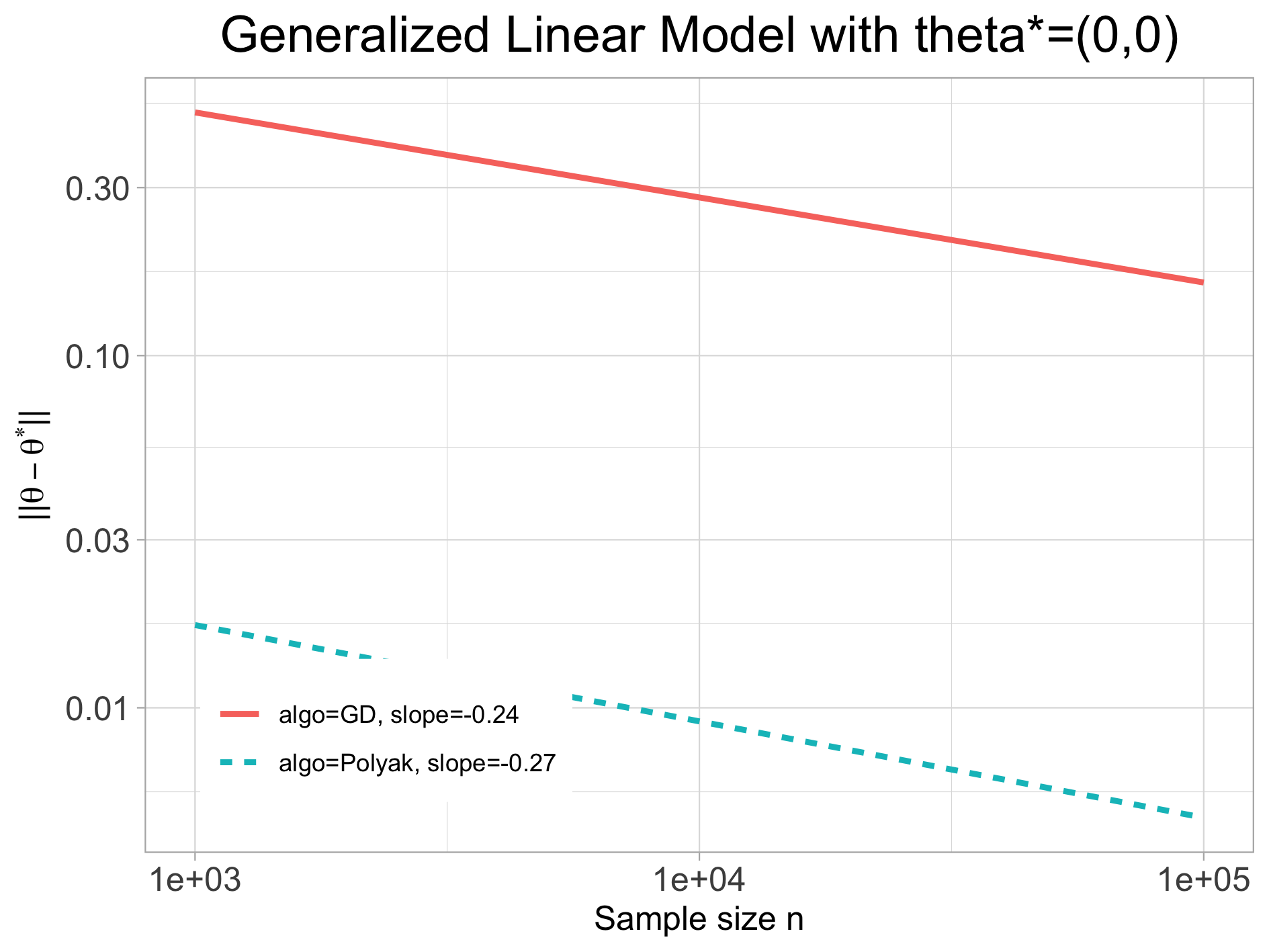}}
\bigskip
\centering
\subfigure{
\label{pl:GLM_n0_fixerror}
\includegraphics[width=0.45\textwidth]{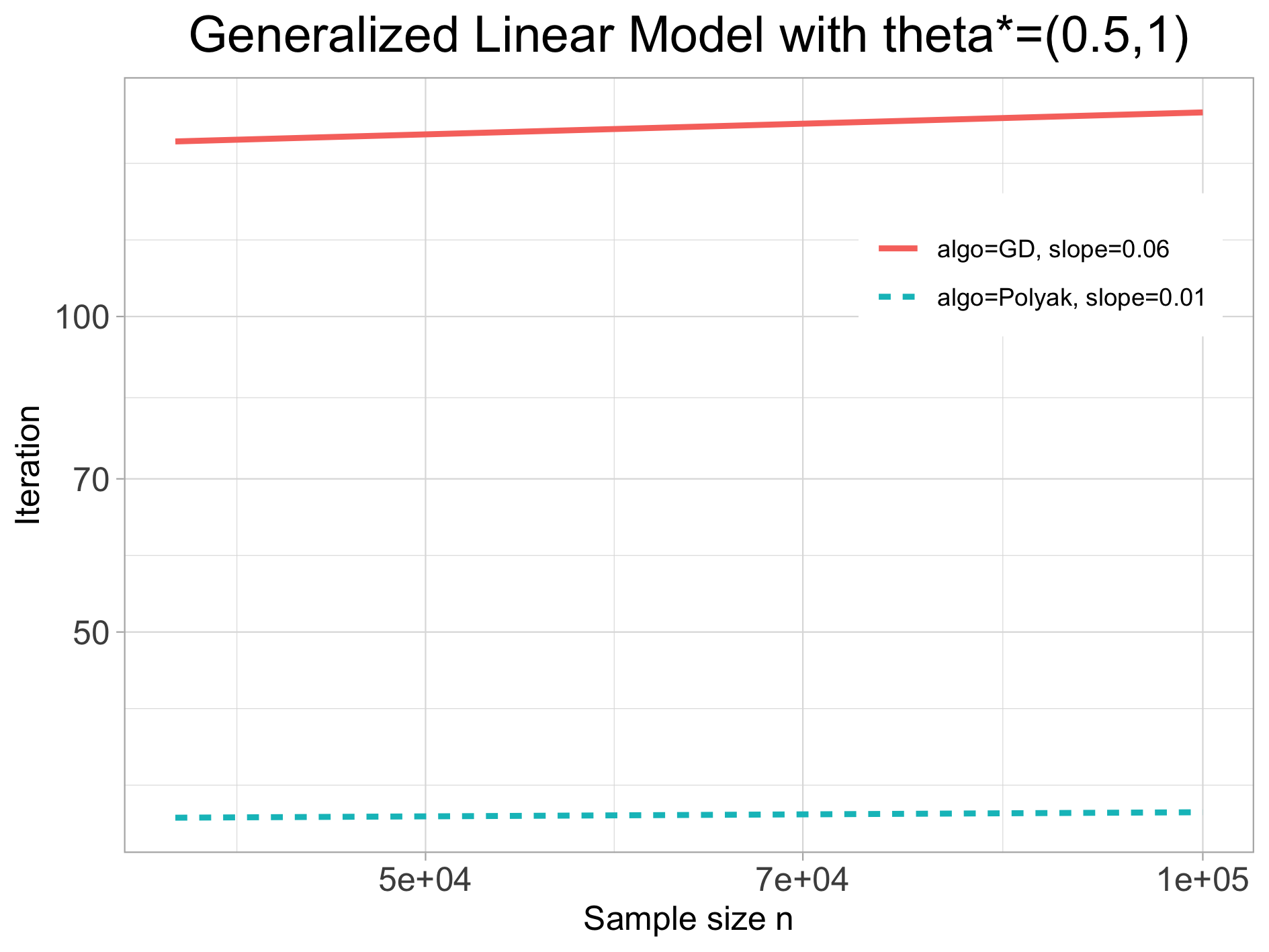}}
\subfigure{
\label{pl:GLM_n0_fixiter}
\includegraphics[width=0.45\textwidth]{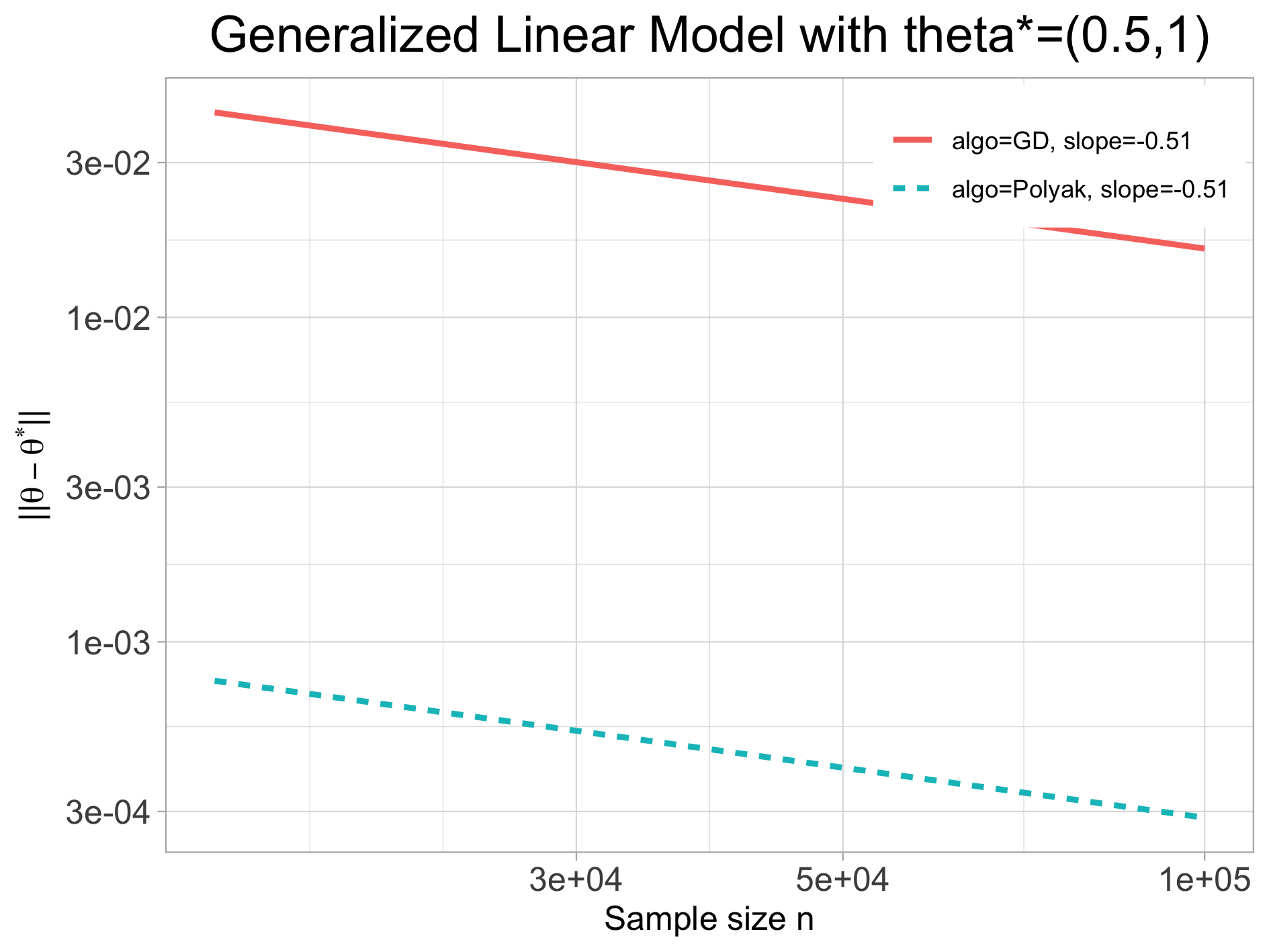}}
\caption{The convergence rates of adaptive Polyak step size gradient descent and fixed-step size gradient descent iterates for solving the sample loss function of the generalized linear model when the link function $g(r) = r^2$. The first row corresponds to the low signal-to-noise regime $\theta^{*} = (0,0)$ while the second row is for the strong signal-to-noise regime $\theta^{*} = (0.5,1)$. For the left images, we use log-log plots to illustrate the iteration complexities of these algorithms to reach the final estimate. For the right images, log-log plots for the final statistical radius versus the sample size are presented. For the low signal-to-noise regime, both the adaptive Polyak step size and fixed-step size gradient descent iterates reach the statistical radius $n^{-1/4}$. The adaptive Polyak step size method takes much fewer number of iterations to reach the final statistical radius than the fixed-step size method, namely, from $\log(n)$ number of iterations of adaptive Polyak step size method to to $\sqrt{n}$ number of iterations of fixed-step size method. For the strong signal-to-noise regime, both adaptive Polyak and fixed-step size methods only take logarithmic number of iterations to reach the statistical radius $n^{-1/2}$.}
\label{pl:GLM_n0}
\end{figure}

Since we do not have access to $\mathcal{L}_n(\widehat{\theta}_n)$ where $\widehat{\theta}_{n}$ is the optimal solution of the sample loss function $\mathcal{L}_{n}$ in equation~\eqref{eq:sample_loss_linear}, we will consider its approximated value according to the adaptive Polyak step size gradient descent algorithm in Algorithm~\ref{algorithm:adaptive_Polyak_v1}. By concentration inequality with the chi-squared random variables, the concentration of $\mathcal{L}_n(\hat{\theta}_n)$ is at the order of $\mathcal{O}(\frac{1}{\sqrt{n}})$ with high probability. Therefore, we use $\frac{c}{\sqrt{n}}$ to approximate $\mathcal{L}_n(\hat{\theta}_n)$, where $c$ here is a parameter to choose in the our experiment. 

\begin{figure}[!t]
\centering
\subfigure{
\label{pl:GMM_0_fixerror}
\includegraphics[width=0.45\textwidth]{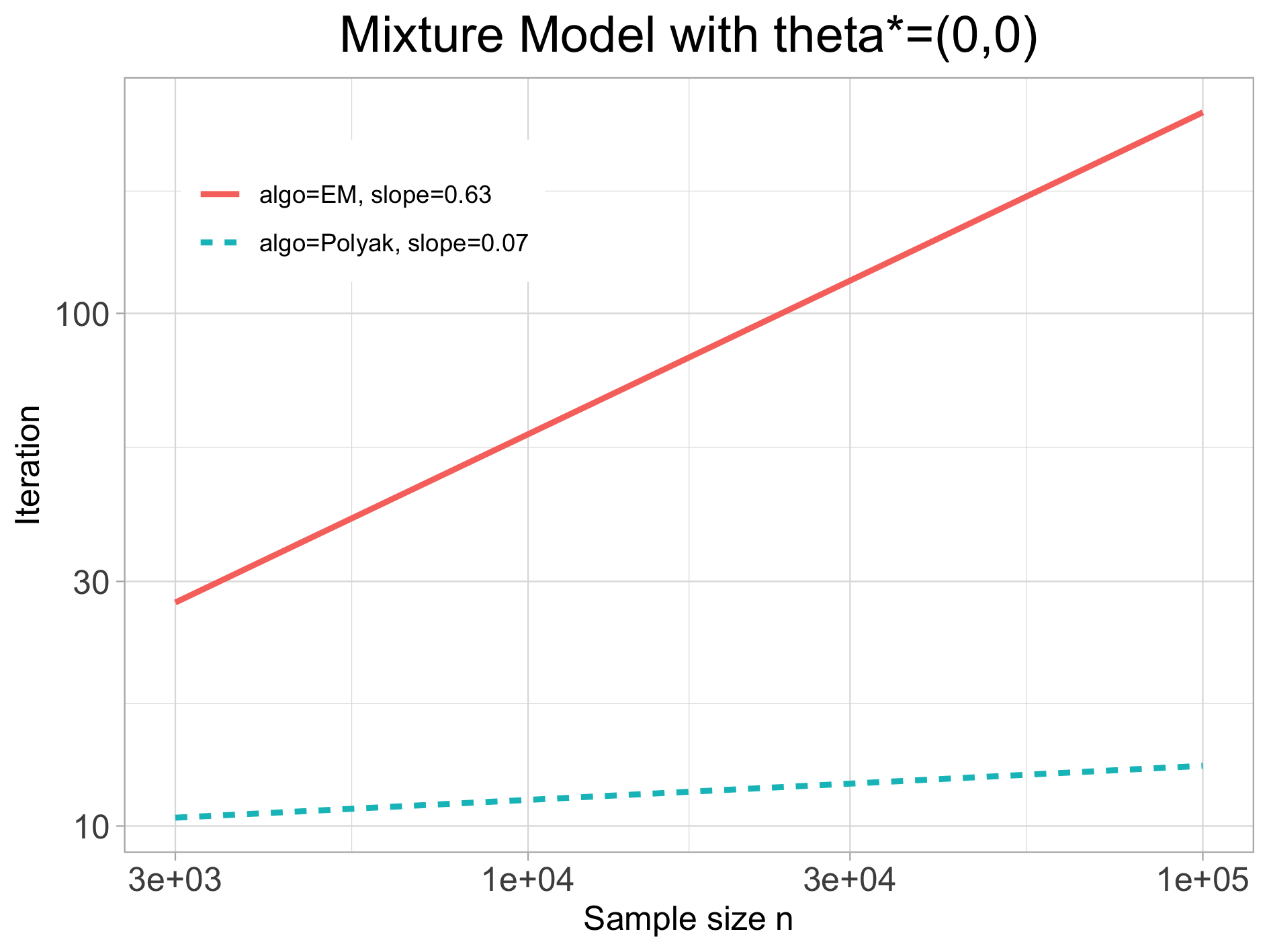}}
\subfigure{
\label{pl:GMM_0_fixiter}
\includegraphics[width=0.45\textwidth]{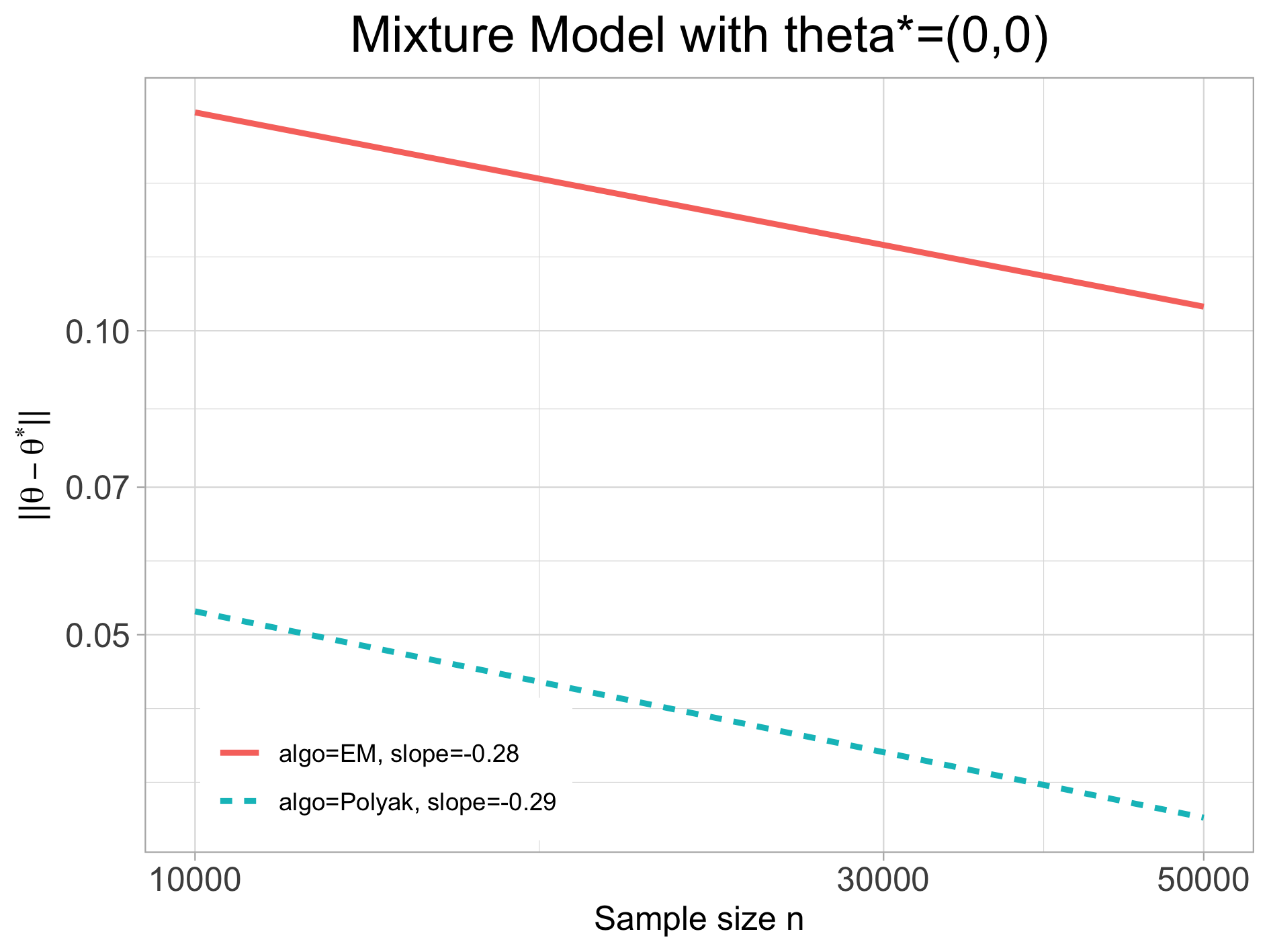}}
\label{pl:GMM_0}
\bigskip
\centering
\subfigure{
\label{pl:GMM_n0_fixerror}
\includegraphics[width=0.45\textwidth]{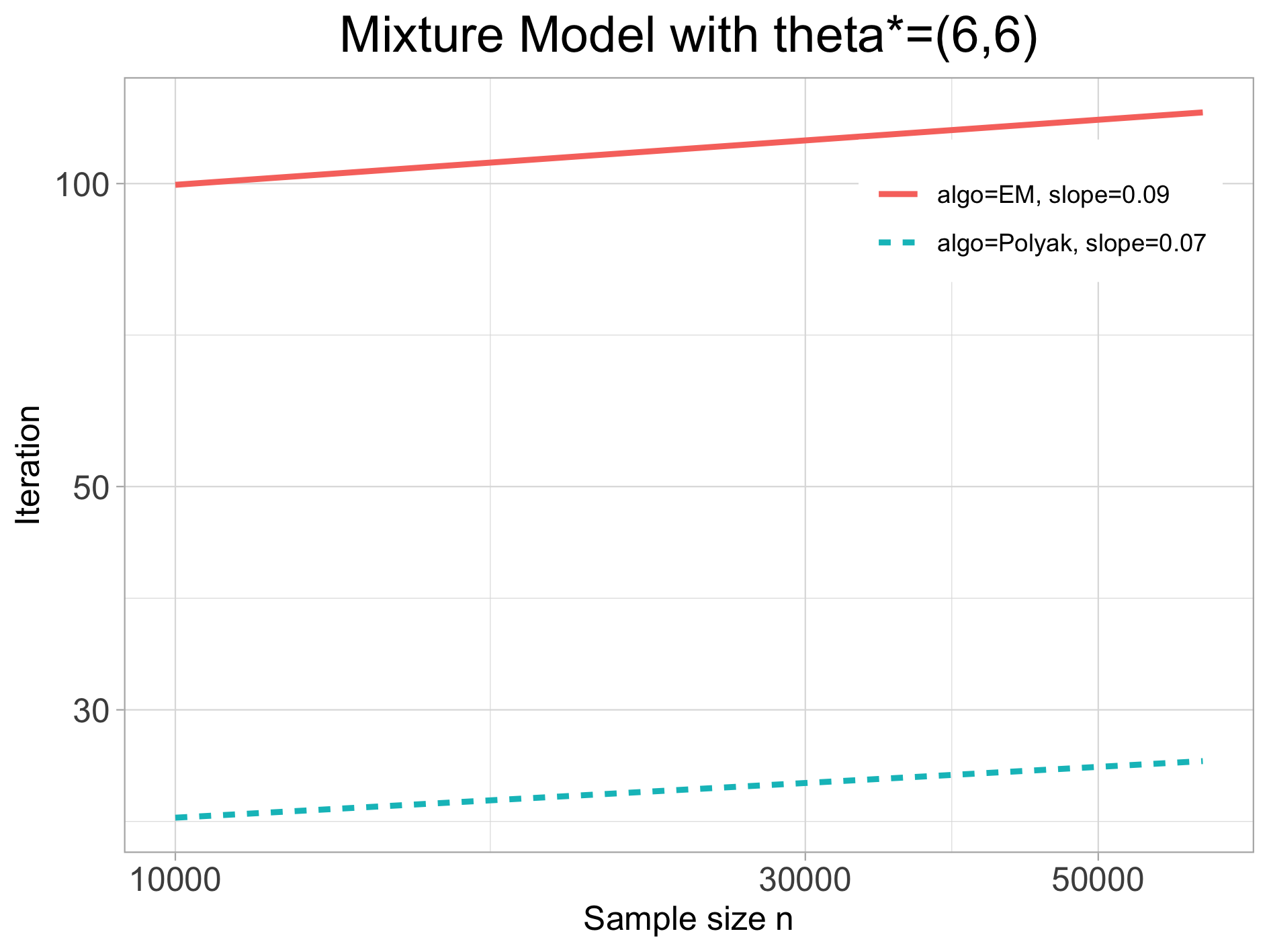}}
\subfigure{
\label{pl:GMM_n0_fixiter}
\includegraphics[width=0.45\textwidth]{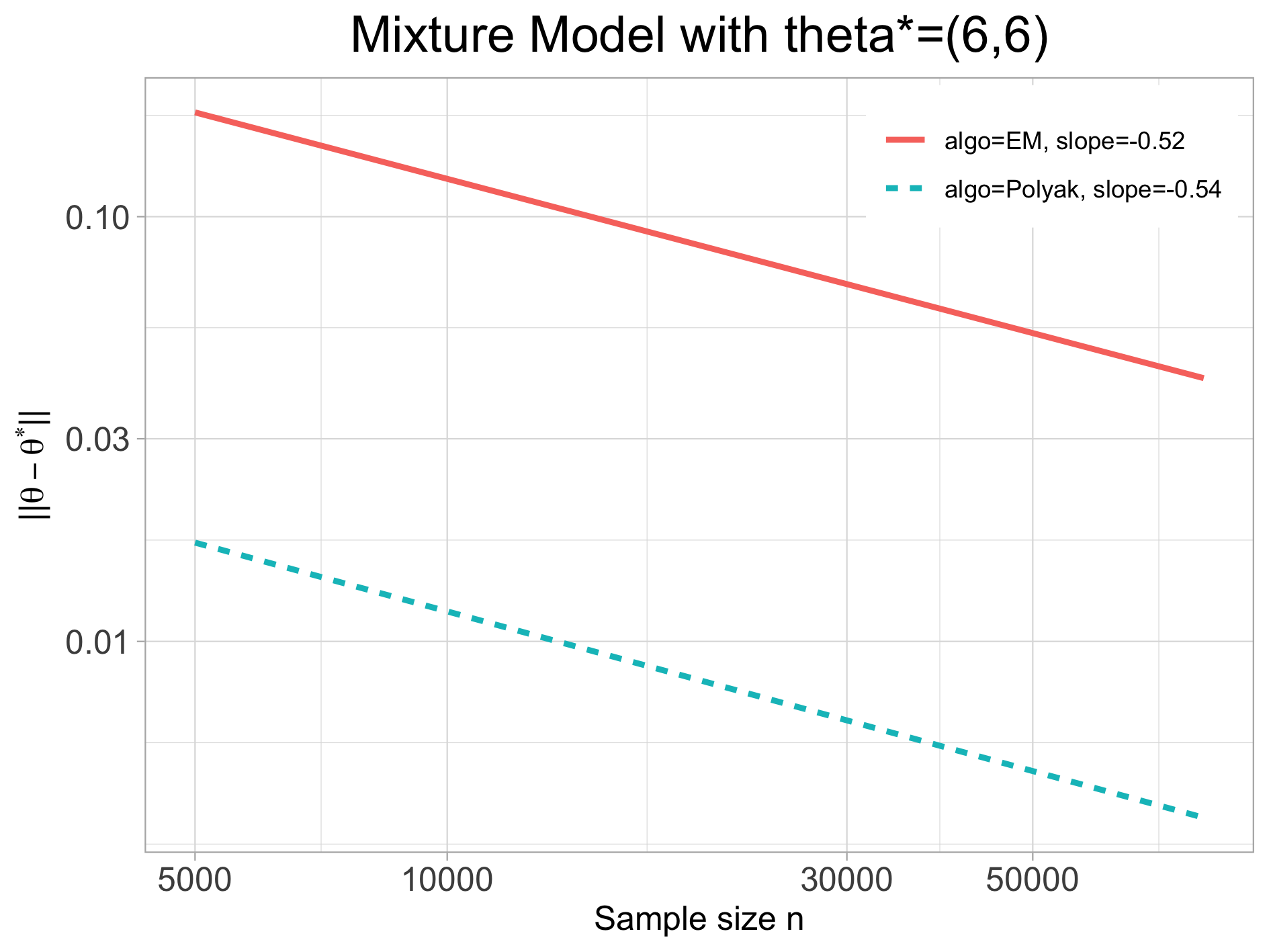}}
\caption{Illustrations for the convergence rates of adaptive Polyak step size and the EM algorithm (equivalently gradient descent algorithm with step size 1) for solving the sample log-likelihood function of the symmetric two-component Gaussian mixtures. The first row corresponds to the low signal-to-noise regime $\theta^{*} = (0,0)$ while the second row is for the strong signal-to-noise regime $\theta^{*} = (6,6)$. The structures of the images are similar to those in Figure~\ref{pl:GLM_n0}. The images in the first row for low signal-to-noise regime show that the adaptive Polyak step size iterates only need roughly $\log(n)$ number of iterations in comparison to $\sqrt{n}$ number of iterations of the EM algorithm to reach the final statistical radius $n^{-1/4}$. The images in the second row for strong signal-to-noise regime show that these optimization methods have similar sample and iteration complexities.}

\label{pl:GMM_n0}
\end{figure}

The updates from the adaptive Polyak step size gradient descent based on that approximation are given by:
$$\theta^{t+1}_n =\theta_n^t-\frac{\mathcal{L}_n(\theta_n^t)-\frac{c}{\sqrt{n}}}{\|\nabla \mathcal{L}_n(\theta_n^t)\|^2}\nabla \mathcal{L}_n(\theta_n^t).$$
When implementing the adaptive Polyak step size gradient descent algorithm, we use binary search to update the value of $c$ periodically. In particular, when the algorithm is stuck at some point, we decrease $c$; when it become very unstable, we increase $c$. For the fixed-step size gradient descent algorithm, we choose the step size to be $0.01$. 

\begin{figure}[!t]
\centering
\subfigure[]{
\label{pl:MLR_0_fixerror}
\includegraphics[width=0.45\textwidth]{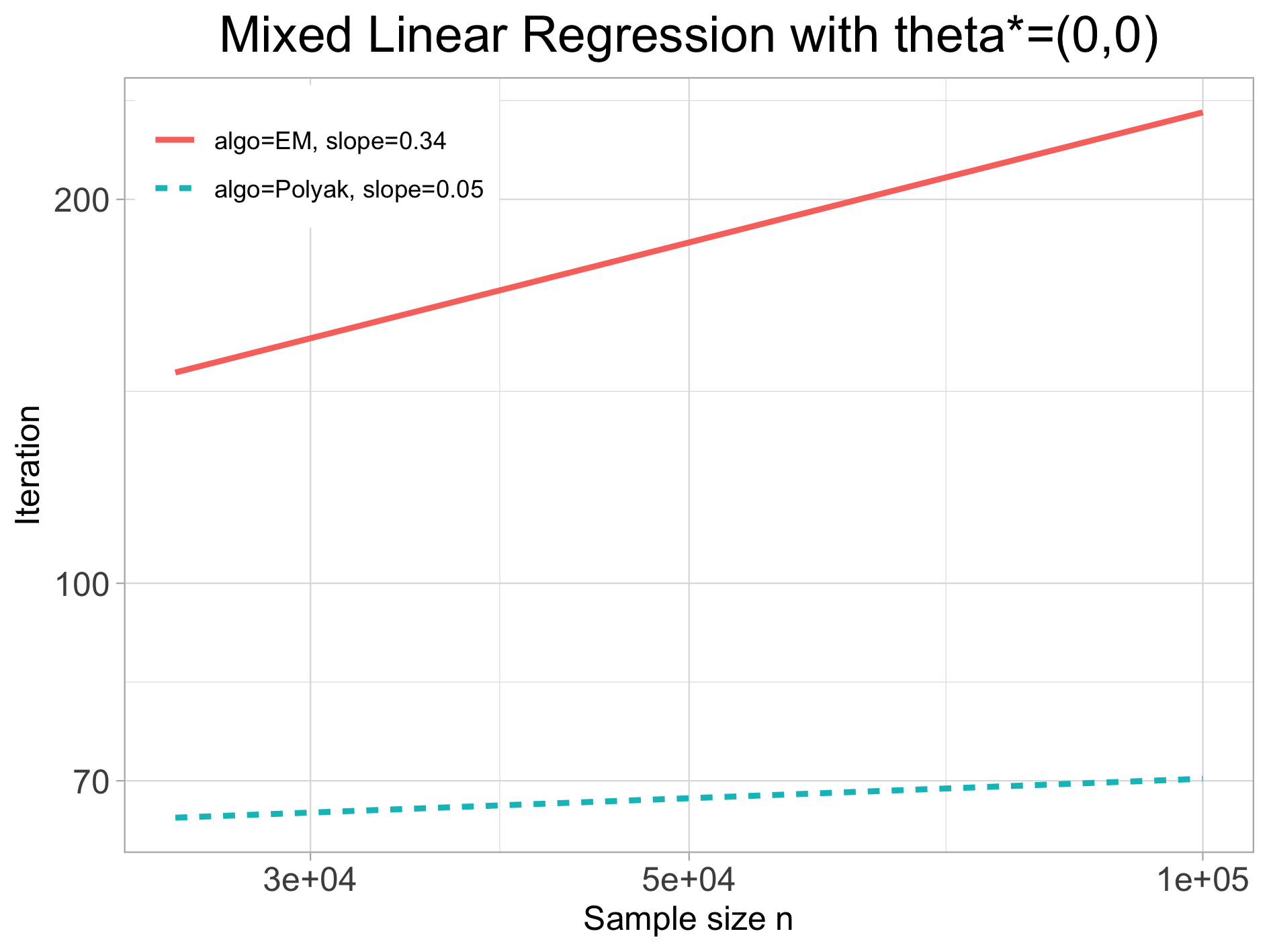}}
\subfigure[]{
\label{pl:MLR_0_fixiter}
\includegraphics[width=0.45\textwidth]{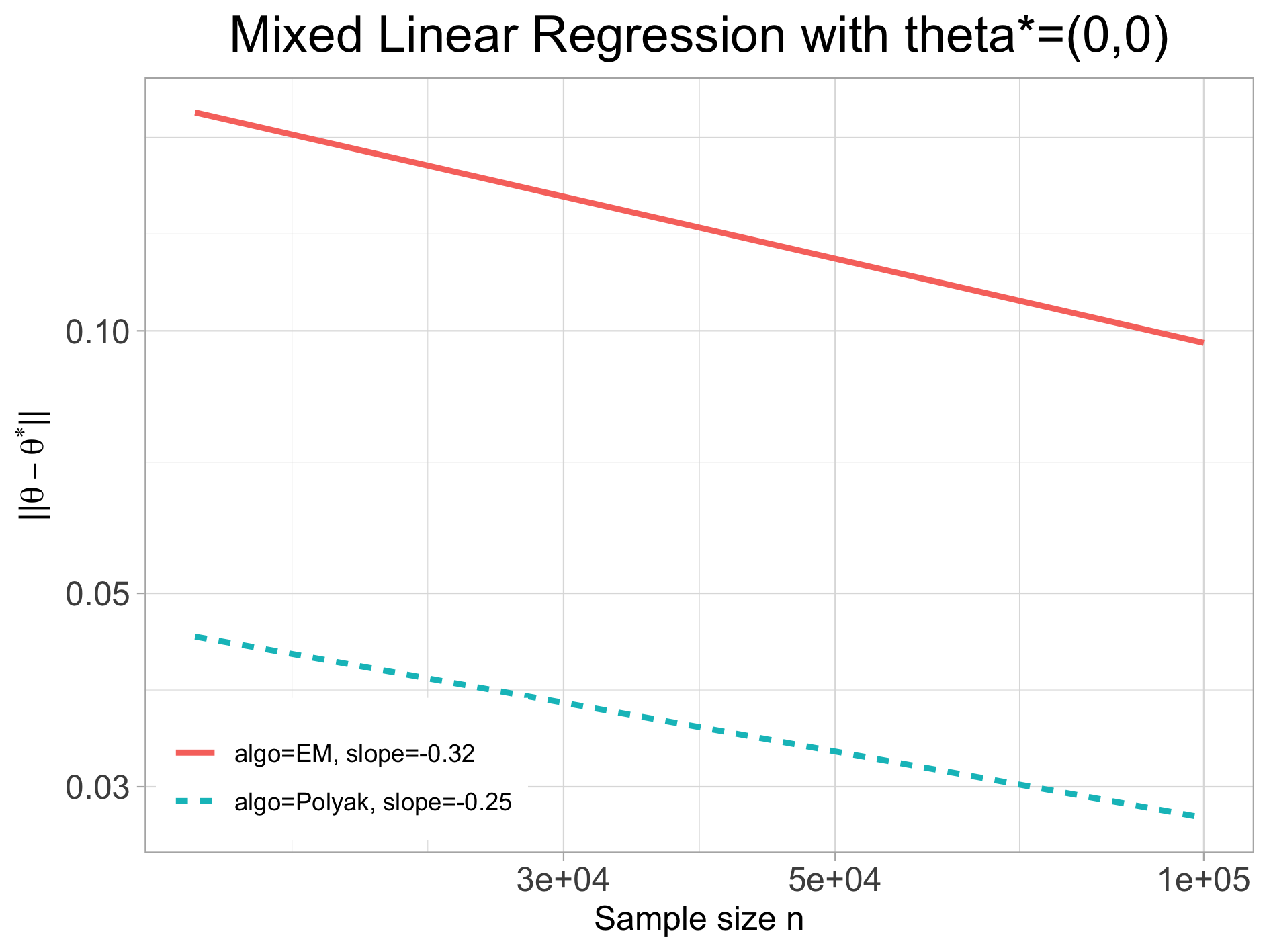}}
\bigskip
\centering
\subfigure[]{
\label{pl:MLR_n0_fixerror}
\includegraphics[width=0.45\textwidth]{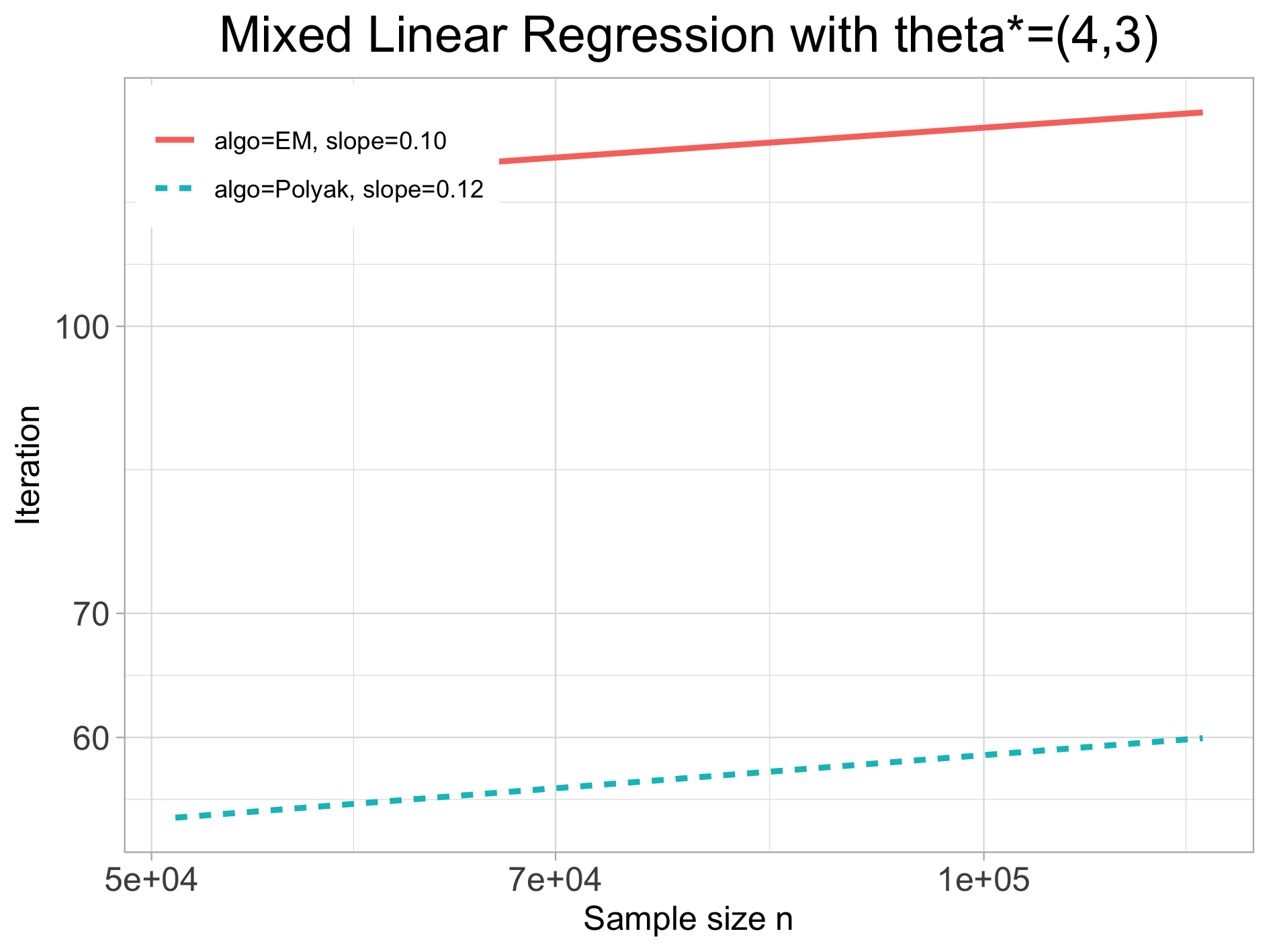}}
\subfigure[]{
\label{pl:MLR_n0_fixiter}
\includegraphics[width=0.45\textwidth]{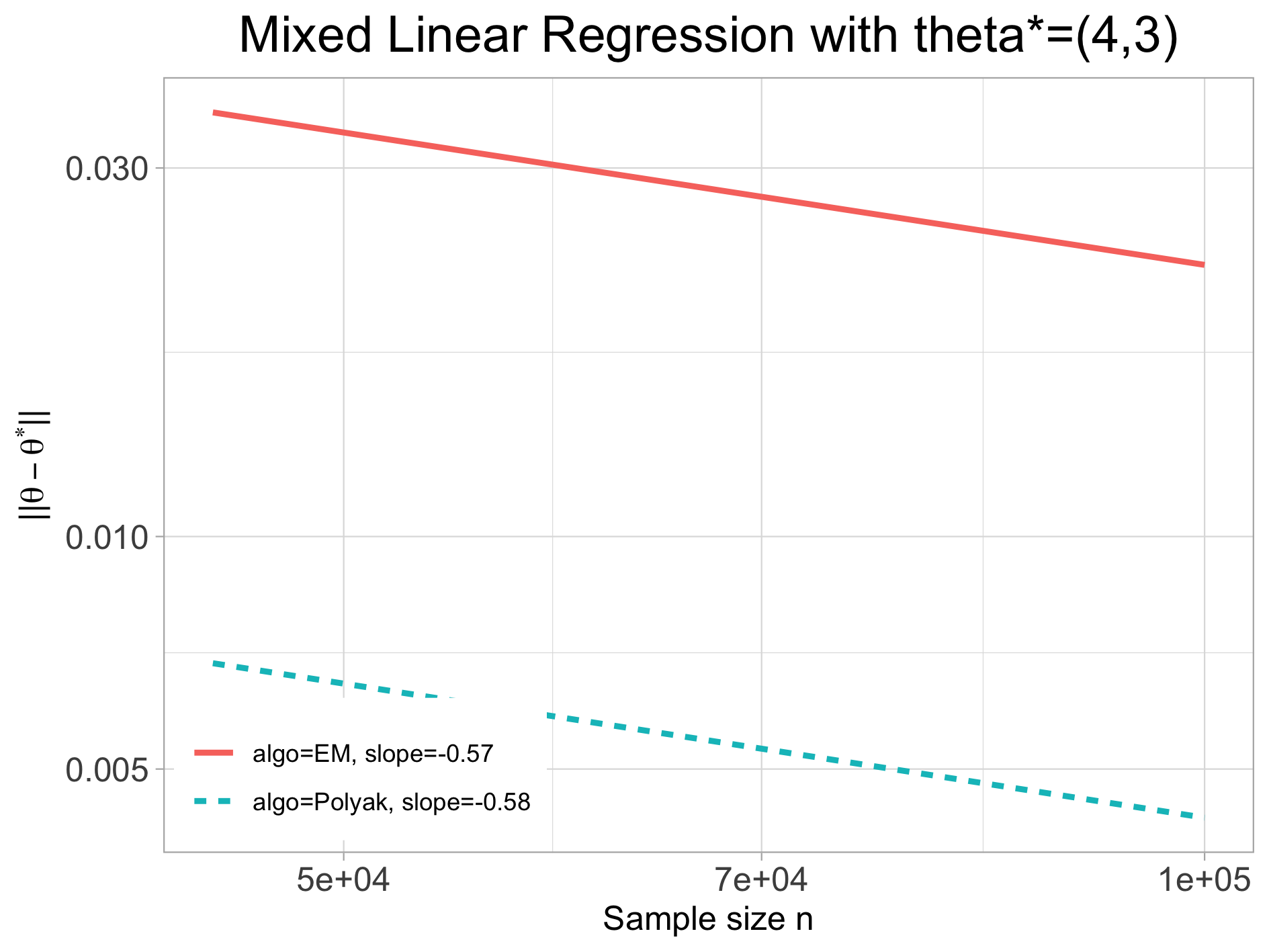}}
\caption{Plots characterizing the convergence rates of adaptive Polyak step size and EM algorithm (equivalently gradient descent algorithm with step size 1) for solving the sample log-likelihood function of the symmetric two-component mixed linear regression model. The first row corresponds to the low signal-to-noise regime $\theta^{*} = (0,0)$ while the second row is for the strong signal-to-noise regime $\theta^{*} = (4,3)$. From the images in the first row for low signal-to-noise regime, the iteration complexity of the adaptive Polyak step size method is roughly $\log(n)$ while that of the EM algorithm scales like $\sqrt{n}$ to reach the final statistical radius $n^{-1/4}$. From the images in the second row for strong signal-to-noise regime, both these optimization algorithms have sample complexity $n^{-1/2}$ and iteration complexity $\log(n)$.}
\label{pl:MLR_n0}
\end{figure}

The experiment results are shown in Figure~\ref{pl:GLM_n0}. For the left images in that figure, we use log-log plot to illustrate the iteration complexity of the adaptive Polyak step size and fixed-step size gradient descent algorithms versus the sample size under the low signal-to-noise setting $\theta^{*} = (0, 0)$ (first row) and the strong signal-to-noise setting $\theta^{*} = (0.5, 1)$ (second row). When $\theta^{*} = (0, 0)$, we observe that the number of iterations for the fixed-step size gradient descent algorithm to reach the final statistical radius is at the order close to $\sqrt{n}$ while the iteration complexity for the adaptive Polyak step size gradient descent algorithm is roughly $\log(n)$. On the other hand, when $\theta^{*} = (0.5, 1)$, both the iteration complexities of these algorithms scale like $\log (n)$. For the right images in Figure~\ref{pl:GLM_n0}, we plot the final statistical radii of the adaptive Polyak and fixed-step size gradient descent iterates versus the sample size under different settings of $\theta^{*}$. As being indicated in these images, the radius scales like $n^{-1/4}$ when $\theta^{*} = (0, 0)$ while it is roughly $n^{-1/2}$ when $\theta^{*} = (0.5, 1)$. These results, along with our comments about the adaptive Polyak step size gradient descent algorithm after Theorem~\ref{theorem:convergence_rate_Polyak}, confirm our theories in Section~\ref{sec:example_glm}.



\vspace{0.5 em}
\noindent
\textbf{Mixture model:}
\label{sec:over_mixture_model_exp}
We now move to the symmetric two-component Gaussian mixture model considered in Section~\ref{sec:over_mixture_model}.  We set dimension $d=2$, the variance $\sigma=1, \theta^*=(0,0)$ for the low signal-to-noise regime and $\theta^{*} = (6,6)$ for the strong signal-to-noise regime. To obtain an estimation of $\theta^{*}$, we maximize the log-likelihood in equation~\eqref{eq:sample_loglikelihood}. We use $\frac{c}{n}$ to approximate the optimal value of sample log-likelihood function $\bar{\mathcal{L}}_n$ where $c$ is some universal constant. We also use binary search to adaptively update the value of the constant $c$ when we run the adaptive Polyak step size algorithm. We compare the performance of that algorithm to the EM algorithm (equivalently gradient descent algorithm with step size 1) in Figure~\ref{pl:GMM_n0}. When $\theta^{*} = (0, 0)$, the images in the first row of Figure~\ref{pl:GMM_n0} show that the adaptive Polyak step size iterates only need roughly $\log(n)$ number of iterations in comparison to $\sqrt{n}$ number of iterations of the EM algorithm to reach the final statistical radius $n^{-1/4}$. When $\theta^{*} = (6, 6)$, the images in the second row for strong signal-to-noise regime show that these optimization methods have similar sample complexities $n^{-1/2}$ and iteration complexities $\log(n)$. These experiment results prove that the adaptive Polyak step size gradient descent algorithm is computationally more efficient than the EM algorithm to reach the final estimate for the low signal-to-noise regime, which confirms out theories in Section~\ref{sec:over_mixture_model}.

\vspace{0.5 em}
\noindent
\textbf{Mixed linear regression:}
\label{sec:mixture_model_exp}
Finally, we consider the two-component mixed linear regression example in Section~\ref{sec:mix_linear}. We consider $\theta^*=( 0,0)$ for the low signal-to-noise regime and $\theta^{*} = (4,3)$ for the strong signal-to-noise regime. We choose the variance $\sigma=1$ in model~\eqref{eq:mixed_linear}. Our goal is to maximize the log-likelihood in equation~\eqref{eq:sample_loglikelihood_mixed_linear}. Similar to the two-component Gaussian mixture model, we use use $\frac{c}{n}$ to approximate the optimal value of $\widetilde{\mathcal{L}}_n$ in the adaptive Polyak step size gradient descent method where $c$ is adaptively updated via the binary search. We compare the adaptive Polyak step size algorithm to the EM algorithm (equivalently gradient descent algorithm with step size 1) in Figure~\ref{pl:MLR_n0}. When $\theta^{*} = (4,3)$, both optimization algorithms reach the final statistical radius $n^{-1/2}$ around $\theta^{*}$ after $\log(n)$ number of iterations. When $\theta^{*} = (0, 0)$, the adaptive Polyak step size iterates reach the statistical radius $n^{-1/4}$ after $\log(n)$ number of iterations while the EM algorithm needs roughly $\sqrt{n}$ number of iterations to reach the same radius. These observations are consistent with our theories in Section~\ref{sec:mix_linear}.


\section{Proofs}
\label{sec:proofs}
In this section, we provide the proofs for main results in Section~\ref{sec:convergence_rate_Polyak}.
\subsection{Proof of Lemma~\ref{lincon}}
\label{subsec:proof:lincon}
First, we notice that
\begin{align*}
    \|\theta^{t+1} - \theta^*\|^2 - \|\theta^t - \theta^*\|^2
    & = \frac{(f(\theta^t) - f(\theta^*))^2}{\|\nabla f(\theta^t)\|^2} - \frac{2(f(\theta^t) - f(\theta^*))}{\|\nabla f(\theta^t)\|^2} \langle \nabla f(\theta^t), \theta^t - \theta^*\rangle \\
    & = \frac{f(\theta^t) - f(\theta^*)}{\|\nabla f(\theta^t)\|^2} \left(f(\theta^t) - f(\theta^*) - 2 \langle \nabla f(\theta^t), \theta^t - \theta^*\rangle\right) \\
    & \leq - \frac{(f(\theta^t) - f(\theta^*))^2}{\|\nabla f(\theta^t)\|^2} \leq 0
\end{align*}
where the inequality is due to the convexity of the population loss function $f$. This result indicates that the sequence $\{\|\theta^t - \theta^*\|\}_{t \geq 0}$ is monotonically decreasing and thus $\theta^t \in \mathbb{B}(\theta^*, \rho)$ for all $t \geq 0$ as long as $\theta^1 \in \mathbb{B}(\theta^*, \rho)$. Furthermore, we find that
\begin{align*}
    \|\theta^{t+1} - \theta^*\|^2 - \|\theta^t - \theta^*\|^2
    \leq  - \frac{(f(\theta^t) - f(\theta^*))^2}{\|\nabla f(\theta^t)\|^2}
    \leq & - \frac{f(\theta^t) - f(\theta^*)}{2c_1\|\theta^t - \theta^*\|^\alpha}\\
    \leq & - \frac{c_2^{\alpha + 2}}{2c_1 (\alpha + 2)^{\alpha + 2}}\|\theta^t - \theta^*\|^2,
\end{align*}
where the second inequality is based on the fact that $f(\theta^t) - f(\theta^*)\geq  \frac{\|\nabla f(\theta^t)\|^2}{2c_1\|\theta^t-\theta^*\|^{\alpha}}$ which can be recovered from Lemma 3.5 in \citep{bubeck2015convex} (also stated in Lemma~\ref{lem:smooth_update}) and Assumption~\ref{assump:smoothness}, and the third inequality is from Lemma \ref{lem:obj_to_params}. The above inequality is equivalent to
\begin{align}
    \|\theta^{t+1} - \theta^*\|^2 \leq \parenth{1 - \frac{c_2^{\alpha + 2}}{2c_1 (\alpha + 2)^{\alpha + 2}}}\|\theta^t - \theta^*\|^2. \label{eq:contraction_Polyak_population}
\end{align}
We can further see that for any $\theta \in \mathbb{B}(\theta^{*}, \rho)$
\begin{align*}
    \|\theta - \theta^*\| \leq \frac{\alpha + 2}{c_2}(f(\theta) - f(\theta^*))^{\frac{1}{\alpha + 2}}\leq \frac{\alpha + 2}{c_2} \cdot\left(\frac{c_1}{2}\right)^{\frac{1}{\alpha + 2}}\|\theta  - \theta^*\|,
\end{align*}
which means $\left(\frac{c_2}{\alpha + 2}\right)^{\alpha + 2} \leq \frac{c_1}{2}$ and $\frac{c_2^{\alpha + 2}}{2c_1 (\alpha + 2)^{\alpha + 2}} \leq \frac{1}{4}$.
Thus, the contraction coefficient $\frac{3}{4} \leq 1 - \frac{c_2^{\alpha + 2}}{2c_1 (\alpha + 2)^{\alpha + 2}} < 1$, which means that it is positive and strictly less than $1$. By repeating the inequality~\eqref{eq:contraction_Polyak_population}, we eventually have the following inequality:
\begin{align*}
    \|\theta^{t+1} - \theta^*\|^2 \leq \left(1 - \frac{c_2^{\alpha + 2}}{2c_1 (\alpha + 2)^{\alpha + 2}}\right)^{t} \|\theta^0 - \theta^*\|^2.
\end{align*}
As a consequence, we reach the conclusion of Lemma~\ref{lincon}.

\subsection{Proof of Lemma~\ref{supdif}}
\label{subsec:supdif}
Recall that, from Assumptions~\ref{assump:smoothness}, ~\ref{assump:nonPL} and Lemma~\ref{lem:obj_to_params}, as long as $\theta \in \mathbb{B}(\theta^{*}, \rho)$ we have the following relations:
\begin{align}
    f(\theta) - f(\theta)^* \leq & \frac{c_1}{2}\|\theta - \theta^*\|^{\alpha + 2}, \label{eq:key_inequality_concentration_Polyak_first} \\
    c_2 \left(\frac{c_2}{\alpha + 2} \|\theta - \theta^*\|\right)^{\alpha + 1} \leq & \|\nabla f(\theta)\| \leq c_1 \|\theta - \theta^*\|^{\alpha + 1}. \label{eq:key_inequality_concentration_Polyak_second} 
\end{align}
From the definitions of the population and sample Polyak operators $F_{n}$ and $F$ in equations~\eqref{eq:sample_operator_Polyak} and~\eqref{eq:population_operator_Polyak}, we make the following decomposition on $\|F_n(\theta) - F(\theta)\|$:
\begin{align}
     \|F_n(\theta) - F(\theta)\|= & \left\|\frac{f_n(\theta) - f_n(\widehat{\theta}_{n})}{\|\nabla f_n(\theta)\|^2} \nabla f_n(\theta) - \frac{f(\theta) - f(\theta^*)}{\|\nabla f(\theta)\|^2}\nabla f(\theta)\right\| \nonumber \\
     \leq & \left\|\left(\frac{f_n(\theta) - f_n(\widehat{\theta}_{n})}{\|\nabla f_n(\theta)\|^2}  - \frac{f(\theta) - f(\theta^*)}{\|\nabla f(\theta)\|^2}\right)\nabla f_n(\theta)\right\| \nonumber \\
     & \hspace{9 em} + \left\|\frac{f(\theta) - f(\theta^*)}{\|\nabla f(\theta)\|^2}(\nabla f(\theta) - \nabla f_n(\theta))\right\| \nonumber \\
     : = & T_{1} + T_{2}. \label{eq:key_bound_concentration_Polyak}
\end{align}
\paragraph{Upper bound on $T_{2}$:} We first deal with the second term $T_{2}$. With Assumption~\ref{assump:stab}, with probability $1 - \delta$ we have that
\begin{align*}
    \left\|\frac{f(\theta) - f(\theta^*)}{\|\nabla f(\theta)\|^2}(\nabla f(\theta) - \nabla f_n(\theta))\right\|
    = & \frac{f(\theta) - f(\theta^*)}{\|\nabla f(\theta)\|^2} \|\nabla f(\theta) - \nabla f_n(\theta)\| \\
    \leq & \frac{c_3 r^\gamma \varepsilon(n, \delta)}{c_2^2}(f(\theta) - f(\theta^*))^{\frac{2}{\alpha + 2} - 1}\\
\end{align*}
for all $\theta \in \mathbb{B}(\theta^{*}, r) \backslash \mathbb{B}(\theta^{*}, r_{n})$ where $r < \rho$. Combining the above inequality with the inequality~\eqref{eq:key_inequality_concentration_Polyak_first}, we obtain
\begin{align}
    \left\|\frac{f(\theta) - f(\theta^*)}{\|\nabla f(\theta)\|^2}(\nabla f(\theta) - \nabla f_n(\theta))\right\| \leq \frac{c_3 }{c_2^2} \cdot \left(\frac{c_1}{2}\right)^{\frac{2}{\alpha + 2} - 1} r^{\gamma-\alpha} \varepsilon(n, \delta) \label{eq:upper_bound_T2}
\end{align}
for all $\theta \in \mathbb{B}(\theta^{*}, r) \backslash \mathbb{B}(\theta^{*}, r_{n})$ where $r < \rho$. 
\paragraph{Upper bound on $T_{1}$:} For the first term $T_{1}$, we have that
\begin{align}
    & \left\|\left(\frac{f_n(\theta) - f_n(\widehat{\theta}_n)}{\|\nabla f_n(\theta)\|^2}  - \frac{f(\theta) - f(\theta^*)}{\|\nabla f(\theta)\|^2}\right)\nabla f_n(\theta)\right\| \nonumber \\
    & \hspace{3 em} \leq  \frac{\left|(f_n(\theta) - f_n(\widehat{\theta}_n))\|\nabla f(\theta)\|^2 - (f(\theta) - f(\theta^*))\|\nabla f_n(\theta)\|^2\right|}{\|\nabla f_n(\theta)\|\|\nabla f(\theta)\|^2} \nonumber \\
    & \hspace{3 em} \leq  \frac{\left|(f_n(\theta) - f_n(\widehat{\theta}_n) - f(\theta) + f(\theta^*))\|\nabla f(\theta)\|^2\right| + (f(\theta) - f(\theta^*))\left|\|\nabla f_n(\theta)\|^2 - \|\nabla f(\theta)\|^2\right|}{(\|\nabla f(\theta)\| - c_{3} r^\gamma \varepsilon(n, \delta))\|\nabla f(\theta)\|^2} \nonumber \\
    & \hspace{3 em} \leq \frac{|f_n(\theta) - f_n(\widehat{\theta}_n) - f(\theta) + f(\theta^*)|\|\nabla f(\theta)\|^2 + 2(f(\theta) - f(\theta^*))\|\nabla f(\theta)\|c_3r^\gamma \varepsilon(n, \delta)}{(\|\nabla f(\theta)\| - c_{3} r^\gamma \varepsilon(n, \delta))\|\nabla f(\theta)\|^2} \nonumber \\
    & \hspace{20 em} + \frac{(f(\theta) - f(\theta^*))c_3^2 r^{2\gamma}\varepsilon^2(n, \delta)}{(\|\nabla f(\theta)\| - c_{3} r^\gamma \varepsilon(n, \delta))\|\nabla f(\theta)\|^2}. \label{eq:bound_T1_first}
\end{align}
To bound the RHS of equation~\eqref{eq:bound_T1_first}, we need to upper bound
\begin{align*}
    & \hspace{-5 em} f_n(\theta) - f(\theta) - (f_n(\widehat{\theta}_n) - f(\theta^*))\\
    & = f_n(\theta) - f_n(\theta^*) - (f(\theta) - f(\theta^*)) - (f_n(\widehat{\theta}_n) - f_n(\theta^*))
\end{align*}
Indeed, from Assumption \ref{assump:stab}, with probability $1 - \delta$ we have that 
\begin{align}
    & \hspace{-5 em} f_n(\theta) - f_n(\theta^*) - (f(\theta) - f(\theta^*)) \nonumber\\
    & \leq  \int_{0}^{1} \|\nabla f_n(\theta^* + t(\theta - \theta^*)) - \nabla f(\theta^* + t(\theta - \theta^*))\| d t \nonumber\\
    & \leq \frac{c_3 r^{\gamma + 1} \varepsilon(n, \delta)}{\gamma + 1} \label{eq:bound_T1_second}
\end{align}
for any $\theta \in \mathbb{B}(\theta^{*}, r)$. Furthermore, we find that
\begin{align}
    |f_n(\widehat{\theta}_n) - f_n(\theta^*)| & \leq |f_n(\widehat{\theta}_n) - f(\widehat{\theta}_n) - f_n(\theta^*) + f(\theta^*)| + |f(\widehat{\theta}_n) - f(\theta^*)| \nonumber\\
    & \leq \frac{c_3 r^{\gamma + 1} \varepsilon(n, \delta)}{\gamma + 1} + \frac{c_1 \|\widehat{\theta}_n - \theta^*\|^{\alpha + 2}}{2}, \label{eq:bound_T1_third}
\end{align}
where the final inequality is due to inequalities~\eqref{eq:bound_T1_second} and~\eqref{eq:key_inequality_concentration_Polyak_first}.
Plugging the bounds~\eqref{eq:bound_T1_second} and~\eqref{eq:bound_T1_third} into~\eqref{eq:bound_T1_first}, we find that
\begin{align*}
    & \left\|\left(\frac{f_n(\theta) - f_n(\theta_n^*)}{\|\nabla f_n(\theta)\|^2}  - \frac{f(\theta) - f(\theta^*)}{\|\nabla f(\theta)\|^2}\right)\nabla f_n(\theta)\right\|\\
    & \hspace{4 em} \leq  \frac{\left(\frac{2c_3 r^{\gamma + 1}\varepsilon(n, \delta)}{\gamma + 1} + \frac{c_1 \|\widehat{\theta}_n - \theta^*\|^{\alpha + 2}}{2}\right) c_1^2 r^{2\alpha + 2} + c_1^2 c_3 r^{2\alpha + 3 + \gamma} \varepsilon(n, \delta) + \frac{c_1 c_3^2}{2} r^{2\gamma + \alpha + 2} \varepsilon^2(n, \delta)}{c_2^2 \left(\frac{c_2 r}{(\alpha + 2)} \right)^{2\alpha + 2}\left(c_2\left(\frac{c_2 r}{(\alpha + 2)} \right)^{\alpha +1} - c_3 r^{\gamma} \varepsilon(n, \delta)\right)}.
\end{align*}
As $r \geq \bar{C}\varepsilon(n, \delta)^{\frac{1}{\alpha + 1 - \gamma}}$ where $\bar{C} = \left(\frac{C \cdot c_3 (\alpha + 2)^{\alpha + 1}}{c_2^{\alpha + 2}}\right)^{\frac{1}{\alpha + 1 - \gamma}}$
, we know $c_3 r^{\gamma}\varepsilon(n, \delta) \leq \frac{c_2}{C}\left(\frac{c_2 r}{\alpha + 2}\right)^{\alpha + 1}$, and we can simplify this term to
\begin{align}
    & \left\|\left(\frac{f_n(\theta) - f_n(\theta_n^*)}{\|\nabla f_n(\theta)\|^2}  - \frac{f(\theta) - f(\theta^*)}{\|\nabla f(\theta)\|^2}\right)\nabla f_n(\theta)\right\|\nonumber\\
    \leq & \frac{C}{C-1} \left( \frac{2c_1^2 c_3 (\alpha + 2)^{3\alpha + 3}}{(\gamma + 1)c_2 ^{3\alpha + 6}} r^{\gamma - \alpha} \varepsilon(n, \delta) + \frac{c_1^3(\alpha + 2)^{3\alpha + 3}}{2 (\gamma + 1) c_2^{3\alpha + 6}} \left(\frac{C c_3 (\alpha + 2)^{3\alpha + 3}}{c_2^{\alpha + 2}}\varepsilon(n, \delta)\right)^{\frac{\alpha + 2}{\alpha + 1 - \gamma}} r^{-\alpha - 1}\right.\nonumber\\
    & + \left.\frac{c_1^2 c_3 (\alpha + 2)^{3\alpha + 3}}{c_2^{3\alpha + 6}} r^{\gamma - \alpha} \varepsilon(n, \delta) + \frac{c_1 c_3^2 (\alpha + 2)^{3\alpha + 3}}{2 c_2^{3\alpha + 6}} r^{2\gamma - 2\alpha - 1} \varepsilon^2(n, \delta)\right)\nonumber\\
    \leq & \frac{C}{C-1}\left(\frac{2c_1^2 c_3 (\alpha + 2)^{3\alpha + 3}}{(\gamma + 1)c_2 ^{3\alpha + 6}} r^{\gamma - \alpha} \varepsilon(n, \delta) +  \frac{C c_1^3 c_3(\alpha + 2)^{4\alpha + 4}}{2 (\gamma + 1) c_2^{4\alpha + 8}} r^{\gamma - \alpha}\varepsilon(n, \delta)\right.\nonumber\\
    & + \left.\frac{c_1^2 c_3 (\alpha + 2)^{3\alpha + 3}}{c_2^{3\alpha + 6}} r^{\gamma - \alpha} \varepsilon(n, \delta) + + \frac{c_1 c_3 (\alpha + 2)^{2\alpha + 2}}{2 c_2^{2\alpha + 4} C } r^{\gamma-\alpha} \varepsilon(n, \delta) \right), \label{eq:upper_bound_T1}
\end{align}
for any $\theta\in \mathbb{B}(\theta^{*}, r) \backslash \mathbb{B}(\theta^{*}, r_{n})$.
Combining inequalities~\eqref{eq:upper_bound_T2} and~\eqref{eq:upper_bound_T1} and taking the constant $c_4$ accordingly, we can obtain the desired result.
\subsection{Proof of Theorem~\ref{theorem:convergence_rate_Polyak}}
\label{subsec:proof:theorem:convergence_rate_Polyak}

Recall that for the radius of $r_{n}$ in Lemma~\ref{supdif}, we denote $r_{n} = \bar{C} \cdot \varepsilon(n, \delta)^{\frac{1}{\alpha + 1 - \gamma}}$. Without loss of generality, we assume $\|\theta_n^k  - \theta^*\| > \left(\frac{c_{4} \bar{C}^{\gamma - \alpha}}{1 - \kappa} + 1 \right) r_{n}$ holds for all $k < T$ where $c_{4}$ is the universal constant in Lemma~\ref{supdif}, $T : = C \log(1/ \varepsilon(n, \delta)$ and $C$ is some constant that will be chosen later; otherwise the conclusion of the theorem already holds.

We first show that, $\theta_n^{k} \in \mathbb{B}(\theta^*, \rho) \backslash \mathbb{B}(\theta^{*}, r_{n})$ for all $k < T$. The inequality $\|\theta_{n}^{k} - \theta^{*}\| > r_{n}$ is direct from the hypothesis. Therefore, we only need to prove that $\|\theta_{n}^{k} - \theta^{*}\| \leq \rho$. Indeed, we have
\begin{align*}
    \|\theta_n^{k+1} - \theta^*\| = & \|F_n(\theta_n^k) - \theta^*\|\\
    \leq & \|F_n(\theta_n^k) - F(\theta_n^k)\| + \|F(\theta_n^k) - \theta^*\|\\
    \leq & \sup_{\theta \in \mathbb{B}(\theta^*, \rho) \backslash \mathbb{B}(\theta^{*}, r_{n})} \|F_n(\theta) - F(\theta)\| + \|F(\theta_n^k) - \theta^*\| \\
    \stackrel{(i)} {\leq} & c_4 \rho^{\gamma - \alpha} \varepsilon(n, \delta) + \kappa \|\theta_n^k - \theta^*\|\\
    \stackrel{(ii)} {\leq} & c_{4} \bar{C}^{\gamma - \alpha} \varepsilon(n, \delta)^\frac{1}{\alpha + 1 - \gamma} + \kappa \rho \\
    \stackrel{(iii)} {\leq} & \rho
\end{align*}
with probability $1 - \delta$ where the inequality (i) is due to Lemma~\ref{supdif} and $c_{4}$ is the universal constant in that lemma; the inequality (ii) is due to $\rho > r_{n} = \bar{C} \varepsilon(n, \delta)^{\frac{1}{\alpha + 1 - \gamma}}$ and $\gamma \leq \alpha$; the inequality (iii) is due to the assumption that $n$ is sufficiently large such that $c_{4} \bar{C}^{\gamma - \alpha} \varepsilon(n, \delta)^{\frac{1}{\alpha + 1 - \gamma}} \leq (1 - \kappa) \rho$. As a consequence, we can guarantee that $\theta_n^{k} \in \mathbb{B}(\theta^*, \rho) \backslash \mathbb{B}(\theta^{*}, r_{n})$ for all $k < T$.

Now, we would like to show that $\|\theta_{n}^{T} - \theta^{*}\| \leq \frac{2 - \kappa}{1 - \kappa} r_{n}$. Indeed, following the earlier argument, we find that
\begin{align*}
    \|\theta_n^{T} - \theta^*\|\leq &  \|F_n(\theta_n^{T - 1}) - F(\theta_n^{T - 1})\| + \|F(\theta_n^{T - 1}) - \theta^*\|\\
    \leq & \sup_{\theta \in \mathbb{B}(\theta^*, \rho) \backslash \mathbb{B}(\theta^{*}, r_{n})} \|F_n(\theta) - F(\theta)\| + \kappa \|\theta_n^{T - 1} - \theta^*\| \\
    \leq & c_4 \cdot r_{n}^{\gamma - \alpha} \varepsilon(n, \delta) + \kappa \|\theta_n^{T - 1} - \theta^*\| \\
    = & c_{4} \bar{C}^{\gamma - \alpha} \cdot \varepsilon(n, \delta)^{\frac{1}{\alpha + 1 - \gamma}} + \kappa \|\theta_n^{T - 1} - \theta^*\|.
\end{align*}
By repeating the above argument $T$ times, we finally obtain that
\begin{align*}
    \|\theta_n^{T} - \theta^*\| \leq & c_{4} \bar{C}^{\gamma - \alpha} \cdot \varepsilon(n, \delta)^{\frac{1}{\alpha + 1 - \gamma}} \left( \sum_{t = 0}^{T - 1} \kappa^{t} \right) + \kappa^{T} \|\theta_{n}^{0} - \theta^{*}\| \\
    \leq & \frac{c_{4} \bar{C}^{\gamma - \alpha}}{1 - \kappa} \varepsilon(n, \delta)^{\frac{1}{\alpha + 1 - \gamma}}  + \kappa^{T} \rho.
\end{align*}
By choosing $T$ such that $\kappa^{T} \rho \leq \varepsilon(n, \delta)^{\frac{1}{\alpha + 1 - \gamma}}$, which is equivalent to $T \geq \frac{\log(\rho) + \frac{1}{\alpha + 1 - \gamma} \log(1/ \varepsilon(n, \delta))}{\log(1/ \kappa)}$, we can guarantee that
\begin{align*}
    \|\theta_n^{T} - \theta^*\| \leq \left( \frac{c_{4} \bar{C}^{\gamma - \alpha}}{1 - \kappa} + 1 \right) \varepsilon(n, \delta)^{\frac{1}{\alpha + 1 - \gamma}}.
\end{align*}
As a consequence, we obtain the conclusion of the theorem.
\section{Discussion}
\label{sec:discussion}
In this paper, we have provided statistical and computational complexities of the Polyak step size gradient descent iterates under the generalized smoothness and Łojasiewicz property of the population loss function as well as the uniform concentration bound between the gradients of the population and sample loss functions. Our results indicate that the Polyak step size iterates only take a logarithmic number of iterations to reach a final statistical radius, which is much fewer than the polynomial number of iterations of the fixed-step size gradient descent iterates to reach the same final statistical radius, when the population loss function is not locally strongly convex. Given that the complexity per iteration of the Polyak step size and fixed-step size gradient descent methods are similar, these results indicate that the Polyak step size gradient descent method is computationally more efficient than the fixed-step size gradient descent method in terms of the number of sample size when the dimension is fixed. Finally, we illustrate our findings under three statistical models: generalized linear model, mixture model, and mixed linear regression model. A few natural future questions arising from our work. 

First, our general theory for the convergence rate of the Polyak step size gradient descent iterates relies on the assumptions that the constants of the generalized smoothness and the generalized Łojasiewicz condition are similar. While this assumption is natural in several statistical models, there are also certain instances of statistical models that this requirement does not hold, such as general over-specified low rank matrix factorization problem, and factor analysis. Therefore, extending our theory of the Polyak step size gradient descent algorithm to the settings when the constants in these assumptions are not similar is of interest. 

Second, our results are restricted to the settings of i.i.d. data in which we can define the corresponding population loss function of the sample loss function. In dependent settings, such as time series data, since the notion of population loss function is not well-defined, it is of interest to develop a new framework beyond the population to sample framework in the current paper to analyze the behavior of Polyak step size gradient descent method for solving the optimal solution of the sample loss function.

Finally, our results shed light on the favorable performance of adaptive gradient methods for dealing with the singular settings of the statistical models, namely, those settings when the Fisher information matrix around the true parameter is degenerate or close to be degenerate, which leads to the slow convergence rates of estimating the true parameters. For the future work, it is of practical interest to extend our general theoretical studies under these settings to other popular adaptive gradient descent methods, such as Adagrad~\citep{Duchi_Adagrad} and Adam~\citep{Kingma_Adam}, that have been observed to have favorable performance in several machine learning and deep learning models.  
\section{Acknowledgements}
\label{sec:acknowledge}
This work was partially supported by the NSF IFML 2019844 award and research gifts by UT Austin ML grant to NH, and by NSF awards 1564000 and 1934932 to SS.

\appendix
\section{Proofs of remaining key results}
\label{sec:remaining_results}
In this appendix, we provide proofs for the generalized smoothness and PL conditions of the generalized linear model, over-specified mixture model, and over-specified mixed linear regression model in the main text.
\subsection{Generalized linear model}
\label{sec:smoothness_PL_generalized_linear}
We first prove the local strong convexity~\eqref{eq:geometry_generalized_linear_strong} and uniform concentration bound~\eqref{eq:concentration_generalized_linear_strong} under the strong signal-to-noise regime in Section~\ref{sec:strong_noise_generalized}. Then, we prove the generalized Łojasiewicz property~\eqref{eq:concentration_gradient_generalized_model} of the population loss $\mathcal{L}$ for the low signal-to-noise regime in Section~\ref{sec:low_signal_generalized}.
\subsubsection{Strong signal-to-noise regime}
\label{sec:strong_noise_generalized}
\textbf{Local strong convexity:} We first prove the local strong convexity in equation~\eqref{eq:geometry_generalized_linear_strong}. Recall that, we have
\begin{align*}
    \mathcal{L}(\theta) = & \frac{1}{2} \left( \mathbb{E}\left[\left((X^{\top}\theta^*)^{p}-(X^{\top}\theta)^{p}\right)^2\right] + \sigma^2 \right),
\end{align*}
where the outer expectation is taken with respect to $X \sim \mathcal{N}(0, I_{d})$. Hence, $\mathcal{L}$ is a polynomial function with degree at most $2p$ and coefficients bounded (as for Gaussian we have any finite order moment bounded). So $\mathcal{L}$ should be smooth around the optima. Furthermore, when $\|\theta - \theta^{*}\|$ is small enough we have that
\begin{align*}
    \left((X^{\top}\theta)^{p}-(X^{\top}\theta^*)^{p}\right)^2 = p(X^\top\theta^*)^{p-1} X^\top(\theta - \theta^*) + o(\|\theta - \theta^*\|).
\end{align*}
Thus, we have that
\begin{align*}
    \mathcal{L}(\theta) = & \frac{1}{2} \left( \mathbb{E}\left[\left((X^{\top}\theta^*)^{p}-(X^{\top}\theta)^{p}\right)^2\right] + \sigma^2 \right)\\
    = & \frac{p^2}{2} (\theta - \theta^*)^\top \mathbb{E}\left[ X^\top (X^\top \theta^*)^{2p-2} X\right] (\theta - \theta^*) + \frac{\sigma^2}{2} + o(\|\theta - \theta^*\|^2).
\end{align*}
As $2p-2$ is even, it is clear that we have $\mathbb{E}\left[ X^\top (X^\top \theta^*)^{2p-2} X\right]$ is positive definite matrix, which shows $\mathcal{L}$ is locally strongly convex function  (by manipulating $\|\theta - \theta^*\|$ and the constant).

\paragraph{Uniform concentration bound:} For the uniform concentration of the gradient in equation~\eqref{eq:concentration_generalized_linear_strong}, direct calculations show that
\begin{align*}
    \nabla \mathcal{L}_n(\theta) = & -\frac{p}{n} \sum_{i=1}^n \left(Y_i - (X_i^\top \theta)^p\right) (X_i^\top \theta)^{p-1} X_{i},\\
    \nabla \mathcal{L}(\theta) = & -p \cdot \mathbb{E}\left[\left((X^\top \theta^*)^p - (X^\top \theta)^p\right) (X^\top \theta)^{p-1} X\right].
\end{align*}
Hence, with triangle inequality, we have that
\begin{align*}
    \|\nabla \mathcal{L}_n(\theta) - \nabla \mathcal{L}(\theta)\| & \leq  \left\|\left(\frac{1}{n} \sum_{i=1}^n (Y_i - (X_i^\top \theta^*)^{p}) (X_i^\top \theta)^{p - 1} X_i \right)\right\|\\
    & + \left\|\left(\frac{1}{n} \sum_{i=1}^n (X_i^\top \theta^*)^p (X_i^\top \theta)^{p - 1} X_i - \mathbb{E}[(X^\top \theta^*)^p (X^\top \theta)^{p - 1} X ] \right)\right\|\\
    & + \left\|\left(\frac{1}{n}\sum_{i = 1}^{n} (X_i^\top \theta)^{2p-1} X_{i} - \mathbb{E}[(X^\top \theta)^{2p-1}X]\right)\right\| \\
    & : = T_{1} + T_{2} + T_{3}.
\end{align*}
The first and the third terms $T_{1}$ and $T_{3}$ can be upper bounded via the identical method introduced in Section A.2 in \citep{mou2019diffusion} and we only need to change the radius from $r$ to $r + \|\theta^*\|$ when $\theta \in \mathbb{B}(\theta^{*}, r)$, namely, we have the following bounds:
\begin{align}
    T_{1} & \leq c_{1} (r + \|\theta^{*}\|)^{p - 1} \biggr(\sqrt{\frac{d + \log(1/\delta)}{n}} + \frac{1}{n} \left(d + \log \left(\frac{n}{\delta}\right) \right)^{p + 1} \biggr), \label{eq:bound_T1} \\
    T_{3} & \leq c_{2} (r + \|\theta^{*}\|)^{2p - 1} \biggr(\sqrt{\frac{d + \log(1/\delta)}{n}} + \frac{1}{n} \left(d + \log \left(\frac{n}{\delta}\right) \right)^{2p + 1} \biggr) \label{eq:bound_T3}
\end{align}
with probability $1 - \delta$ where $c_{1}$ and $c_{2}$ are some universal constants. Therefore, it is sufficient to focus on the second term $T_{2}$. Without the loss of generality, we assume $\|\theta\| = 1$, and the results can be generalized to other norm of $\theta$ by rescaling. First, we know that
\begin{align*}
    & \hspace{- 5 em} \left\|\left(\frac{1}{n} \sum_{i=1}^n (X_i^\top \theta^*)^p (X_i^\top \theta)^{p - 1} X_i - \mathbb{E}[(X^\top \theta^*)^p (X^\top \theta)^{p - 1} X ] \right)\right\| \\ 
    & = \sup_{u\in\mathbb{S}^{d-1}} \left|\left(\frac{1}{n} \sum_{i=1}^n (X_i^\top \theta^*)^p (X_i^\top \theta)^{p - 1} X_i^\top u - \mathbb{E}[(X^\top \theta^*)^p (X^\top \theta)^{p - 1} X^\top u ] \right)\right|,
\end{align*}
where $\mathbb{S}^{d-1}$ is the unit norm Euclidean sphere in $\mathbb{R}^{d}$. With standard discretization arguments (e.g., Chapter 6 in~\citep{Wainwright_nonasymptotic}), let $U$ be a $1/8$-cover of $\mathbb{S}^{d-1}$ under $\|\cdot\|_2$ whose cardinality can be upper bounded by $17^d$, we know
\begin{align*}
    & \hspace{- 5 em} \sup_{u\in\mathbb{S}^{d-1}} \left|\left(\frac{1}{n} \sum_{i=1}^n (X_i^\top \theta^*)^p (X_i^\top \theta)^{p - 1} X_i^\top u - \mathbb{E}[(X^\top \theta^*)^p (X^\top \theta)^{p - 1} X^\top u ] \right)\right|\\
    & \leq 2\sup_{u\in U} \left|\left(\frac{1}{n} \sum_{i=1}^n (X_i^\top \theta^*)^p (X_i^\top \theta)^{p - 1} X_i^\top u - \mathbb{E}[(X^\top \theta^*)^p (X^\top \theta)^{p - 1} X^\top u ] \right)\right|.
\end{align*}
Hence we can focus on the upper bound with a fixed $u$ where $\|u\| = 1$. We then apply a symmetrization argument (e.g., Theorem 4.10 in~\citep{Wainwright_nonasymptotic}), we have that, for any even integer $q$,
\begin{align*}
    & \hspace{- 5 em} \mathbb{E}\left[\left|\left(\frac{1}{n} \sum_{i=1}^n (X_i^\top \theta^*)^p (X_i^\top \theta)^{p - 1} X_i^\top u - \mathbb{E}[(X^\top \theta^*)^p (X^\top \theta)^{p - 1} X^\top u ] \right)\right|^q\right]\\
    & \leq \mathbb{E}\left[\left|\left(\frac{2}{n} \sum_{i=1}^n \varepsilon_i (X_i^\top \theta^*)^p (X_i^\top \theta)^{p - 1} X_i^\top u\right)\right|^q\right],
\end{align*}
where $\{\varepsilon_i\}_{i\in [n]}$ is a set of i.i.d. Rademacher random variables. We then follow the proof strategy used in Section A.2 in \citep{mou2019diffusion}. For a compact set $\Omega$, define
\begin{align*}
    \mathcal{R}(\Omega) := \sup_{\theta\in\Omega, p^\prime\in[1, p]} \left|\frac{2}{n}\sum_{i=1}^n \varepsilon_i (X_i^\top \theta^*)^p(X_i^\top \theta)^{p^\prime-1} X_i^\top u\right|,
\end{align*}
and $\mathcal{N}(t)$ is a $t$-cover of $\mathbb{S}^{d-1}$ under $\|\cdot\|_2$. Then,
\begin{align*}
    \mathcal{R}(\mathbb{S}^{d-1}) = & \sup_{\theta\in \mathbb{S}^{d-1}, p^\prime\in[1, p]}\left|\frac{2}{n}\sum_{i=1}^n \varepsilon_i (X_i^\top \theta^*)^{p}(X_i^\top \theta)^{p^\prime-1} X_i^\top u\right|\\
    \leq & \sup_{\theta_t\in\mathcal{N}(t), \|\eta\|\leq t, p^\prime\in[1, p]} \left|\frac{2}{n}\sum_{i=1}^n \varepsilon_i (X_i^\top \theta^*)^p(X_i^\top (\theta_t + \eta))^{p^\prime-1} X_i^\top u \right|\\
    \leq & \sup_{\theta_t\in\mathcal{N}(t), p^\prime\in[1, p]} \left|\frac{4}{n}\sum_{i=1}^n \varepsilon_i (X_i^\top \theta^*)^p(X_i^\top \theta_t)^{p^\prime-1} X_i^\top u \right| \\
    & + \max_{p^\prime\in[1, p]} 3^{p^\prime - 1}\left|\frac{2}{n}\sum_{i=1}^n \varepsilon_i (X_i^\top \theta^*)^p(X_i^\top \eta)^{p^\prime -1} X_i^\top u \right|\\
    \leq & 2\mathcal{R}(\mathcal{N}(t)) + 3^{p^\prime - 1} t \mathcal{R}(\mathbb{S}^{d-1}).
\end{align*}
Take $t=3^{-p}$, we have that $\mathcal{R}(\mathbb{S}^{d-1}) \leq 3 \mathcal{R}(\mathcal{N}(3^{-p}))$. We then move to the upper bound of  $\mathcal{R}(\mathcal{N}(3^{-p}))$. With the union bound, for any $q\geq 1$ we have that
\begin{align*}
    & \hspace{- 3 em} \sup_{\theta\in\mathbb{S}^{d-1}, p^\prime\in [1, p]} \mathbb{E}\left[\left|\frac{2}{n}\sum_{i=1}^n \varepsilon_i (X_i^\top \theta^*)^p(X_i^\top \theta)^{p^\prime-1} X_i^\top u\right|^q\right] \\
    & = \sup_{\theta\in\mathbb{S}^{d-1}, p^\prime\in [1, p]}\int_{0}^{\infty}\mathbb{P}\left(\left|\frac{2}{n}\sum_{i=1}^n \varepsilon_i (X_i^\top \theta^*)^p(X_i^\top \theta)^{p^\prime-1} X_i^\top u\right|^q\geq \varepsilon\right) d \varepsilon\\
    & \geq \sup_{\theta\in \mathcal{N}(3^{-p}), p^\prime\in [1, p]}\int_{0}^{\infty}\mathbb{P}\left(\left|\frac{2}{n}\sum_{i=1}^n \varepsilon_i (X_i^\top \theta^*)^p(X_i^\top \theta)^{p^\prime-1} X_i^\top u\right|^q\geq \varepsilon\right) d \varepsilon\\
    & \geq \frac{\sup_{p^\prime\in [1, p]}\sum_{\theta\in\mathcal{N}(3^{-p})}\int_{0}^{\infty}\mathbb{P}\left(\left|\frac{2}{n}\sum_{i=1}^n \varepsilon_i (X_i^\top \theta^*)^p(X_i^\top \theta)^{p^\prime-1} X_i^\top u\right|^q\geq \varepsilon\right) d \varepsilon}{|\mathcal{N}(3^{-p})|}\\
    & \geq \frac{\sup_{p^\prime\in [1, p]}\int_{0}^{\infty}\mathbb{P}\left(\sup_{\theta\in\mathcal{N}(3^{-p})}\left|\frac{2}{n}\sum_{i=1}^n \varepsilon_i (X_i^\top \theta^*)^p(X_i^\top \theta)^{p^\prime-1} X_i^\top u\right|^q\geq \varepsilon\right) d \varepsilon}{|\mathcal{N}(3^{-p})|}\\
    & \geq \frac{\int_{0}^{\infty}\mathbb{P}\left(\sup_{\theta\in\mathcal{N}(3^{-p}), p^\prime\in[1, p]}\left|\frac{2}{n}\sum_{i=1}^n \varepsilon_i (X_i^\top \theta^*)^p(X_i^\top \theta)^{p^\prime-1} X_i^\top u\right|^q\geq \varepsilon\right) d \varepsilon}{p|\mathcal{N}(3^{-p})|} \\
    & = \frac{\mathbb{E}[\mathcal{R}^q(\mathcal{N}(3^{-p}))]}{p|\mathcal{N}(3^{-p})|}.
\end{align*}
Hence, it's sufficient to consider $\mathbb{E}\left[\left|\frac{2}{n}\sum_{i=1}^n \varepsilon_i (X_i^\top \theta^*)^p(X_i^\top \theta)^{p^\prime-1} X_i^\top u\right|^q\right] $. We apply Khintchine’s inequality \citep{boucheron2013concentration}, which guarantees that there is an universal constant $C$, such that for all $p^\prime\in[1, p]$, we have
\begin{align*}
    \mathbb{E}\left[\left|\frac{2}{n}\sum_{i=1}^n \varepsilon_i (X_i^\top \theta^*)^p(X_i^\top \theta)^{p^\prime-1} X_i^\top u\right|^q\right]\leq \mathbb{E}\left[\left(\frac{Cq}{n^2}\sum_{i=1}^n (X_i^\top \theta^*)^{2p}(X_i^\top \theta)^{2(p^\prime-1)} (X_i^\top u)^2\right)^{q/2}\right]
\end{align*}
To further upper bound the right hand side of the above equation, we consider the large deviation property of random variable $(X_i^\top \theta^*)^{2p}(X_i^\top \theta)^{2(p^\prime-1)}(X_i^\top u)^2$. It's straightforward to show that
\begin{align*}
    \mathbb{E}\left[(X_i^\top \theta^*)^{2p}(X_i^\top \theta)^{2(p^\prime-1)}(X_i^\top u)^2\right] \leq & (2(p + p^\prime))^{(p + p^\prime)},\\
    \mathbb{E}\left[\left((X_i^\top \theta^*)^{2p}(X_i^\top \theta)^{2(p^\prime-1)}(X_i^\top u)^2\right)^{q/2}\right] \leq & (2(p + p^\prime)q)^{(p + p^\prime) q}.
\end{align*}
With Lemma 2 in \citep{mou2019diffusion}, with probability at least $1-\delta$, we have
\begin{align*}
    & \hspace{- 2 em} \left|\frac{1}{n}\sum_{i=1}^n \left((X_i^\top \theta^*)^{2p}(X_i^\top \theta)^{2(p^\prime-1)} (X_i^\top u)^2\right)^{q/2} - \mathbb{E}\left[(X_i^\top \theta^*)^{2p}(X_i^\top \theta)^{2(p^\prime-1)}(X_i^\top u)^2\right]\right|\\
    & \leq (8(p+p^\prime))^{(p+p^\prime)} \sqrt{\frac{\log 4/\delta}{n}} + (2(p + p^\prime) \log (n/\delta))^{(p+p^\prime)} \frac{\log 4/\delta}{n}.
\end{align*}
Hence, we have that
\begin{align*}
    & \mathbb{E}\left[\left(\frac{1}{n}\sum_{i=1}^n(X_i^\top \theta^*)^{2p}(X_i^\top \theta)^{2(p^\prime-1)} (X_i^\top u)^2\right)^{q/2}\right]\\
    \leq & 2^{q/2}\left(\mathbb{E}\left[(X_i^\top \theta^*)^{2p}(X_i^\top \theta)^{2(p^\prime-1)}(X_i^\top u)^2\right]\right)^{q/2} \\
    & + 2^{q/2}\mathbb{E} \left[\left|\sum_{i=1}^n \left((X_i^\top \theta^*)^{2p}(X_i^\top \theta)^{2(p^\prime-1)} (X_i^\top u)^2\right)^{q/2} - \mathbb{E}\left[(X_i^\top \theta^*)^{2p}(X_i^\top \theta)^{2(p^\prime-1)}(X_i^\top u)^2\right]\right|^{q/2}\right]\\
    \leq & (4(p+p^\prime))^{(p+p^\prime)q} \\
    & + 2^{q/2}\int_{0}^{\infty} \mathbb{P}\left[\left|\sum_{i=1}^n \left((X_i^\top \theta^*)^{2p}(X_i^\top \theta)^{2(p^\prime-1)} (X_i^\top u)^2\right)^{q/2} - \mathbb{E}\left[(X_i^\top \theta^*)^{2p}(X_i^\top \theta)^{2(p^\prime-1)}(X_i^\top u)^2\right]\right|\geq \lambda\right] d\lambda^{q/2}\\
    \leq &  (4(p+p^\prime))^{(p+p^\prime)q} + 2^{q/2} q (p + p^\prime + 1)\\
    & \cdot \int_{0}^{1} \delta \left((8(p+p^\prime))^{(p+p^\prime)} \sqrt{\frac{\log 4/\delta}{n}} +  \frac{(2(p + p^\prime) \log (n/\delta))^{(p+p^\prime + 1)}}{n}\right)^{q/2} d\log(n/\delta)\\
    \leq & (4(p+p^\prime))^{(p+p^\prime)q} + C^\prime(p + p ^\prime) q\left((32(p+p^\prime))^{(p + p^\prime)q/2} n^{-q/4})\Gamma(q/4) \right.\\
    & \left.+ (8(p+p^\prime))^{(p+p^\prime + 1)q/2}n^{-q/2}\left((\log n)^{(p^\prime + p + 1)q/2} + \Gamma((p + p^\prime + 1)q/2)\right)\right),
\end{align*}
where $C^\prime$ is a universal constant and $\Gamma(\cdot)$ is the Gamma function. Notice that
\begin{align*}
    & \hspace{- 3 em} \mathbb{E}\left[\left|\left(\frac{1}{n} \sum_{i=1}^n (X_i^\top \theta^*)^p (X_i^\top \theta)^p X_i^\top u - \mathbb{E}[(X^\top \theta^*)^p (X^\top \theta) X^\top u ] \right)\right|^q\right]\\
    & \leq \mathbb{E}[\mathcal{R}^q(\mathbb{S}^{d-1})] \\
    & \leq 3^q \mathbb{E}[\mathcal{R}(\mathcal{N}(3^{-p}))]\\
    & \leq 3^q p |\mathcal{N}(3^{-p})| \sup_{\theta\in\mathbb{S}^{d-1} p^\prime\in[1, p]} \mathbb{E} \left[\left|\frac{2}{n}\sum_{i=1}^n \varepsilon_i (X_i^\top \theta^*)^p(X_i^\top \theta)^{p^\prime-1} X_i^\top u\right|^q\right]\\
    & \leq 3^q p (3^{p+1})^d \left(\frac{Cq}{n}\right)^{q/2} \left((16p)^{2pq} + 2C^\prime pq \left(64p\right)^{pq} n^{-q/4} \Gamma(q/4) \right.\\
    & \left.+ (16p)^{(2p+1)q/2}n^{-q/2}\left((\log n)^{(2p+1)q/2} + \Gamma((2p+1)q/2) \right)\right),
\end{align*}
for any $u\in U$. Eventually, with union bound, we obtain
\begin{align*}
    & \hspace{-3 em} \left(\mathbb{E}\left[\left\|\left(\frac{1}{n} \sum_{i=1}^n (X_i^\top \theta^*)^p (X_i^\top \theta)^p X_i - \mathbb{E}[(X^\top \theta^*)^p (X^\top \theta) X ] \right)\right\|^q\right]\right)^{1/q}\\
    & \leq 2 \left(\mathbb{E} \left[\sup_{u\in U} \left|\left(\frac{1}{n} \sum_{i=1}^n (X_i^\top \theta^*)^p (X_i^\top \theta)^p X_i^\top u - \mathbb{E}[(X^\top \theta^*)^p (X^\top \theta) X^\top u ] \right)\right|^q\right]\right)^{1/q}\\
    & \leq 2  \left(\mathbb{E}\left[\sum_{u\in [U]}\left|\left(\frac{1}{n} \sum_{i=1}^n (X_i^\top \theta^*)^p (X_i^\top \theta)^p X_i^\top u - \mathbb{E}[(X^\top \theta^*)^p (X^\top \theta) X^\top u ] \right)\right|^q\right]\right)^{1/q}\\
    & \leq 2 \cdot 17^{d/q}\sup_{u\in [U]}\mathbb{E}\left[\left|\left(\frac{1}{n} \sum_{i=1}^n (X_i^\top \theta^*)^p (X_i^\top \theta)^p X_i^\top u - \mathbb{E}[(X^\top \theta^*)^p (X^\top \theta) X^\top u ] \right)\right|^q\right]^{1/q}\\
    & \leq 6 \cdot (17 \cdot 3^{p+1})^{d/q} \left[\sqrt{\frac{C_p q}{n}} + \left(\frac{C_p q}{n}\right)^{3/4} + \frac{C_p}{n}(\log n + q)^{(2p+1)/2}\right],
\end{align*}
where $C_p$ is a universal constant that only depends on $p$. Take $q = d(p + 3) + \log (1/\delta)$ and use the Markov inequality, we get the following bound on the second term $T_{2}$ with probability $1 - \delta$:
\begin{align}
    T_{2} \leq c_{3} (r + \|\theta^{*}\|)^{p - 1} \biggr(\sqrt{\frac{d + \log(1/\delta)}{n}} + \frac{1}{n} \left(d + \log \left(\frac{n}{\delta}\right) \right)^{\frac{2p + 1}{2}} \biggr). \label{eq:bound_T2} 
\end{align}
Combining the bounds from equations~\eqref{eq:bound_T1},~\eqref{eq:bound_T2}, and~\eqref{eq:bound_T3}, as long as $n \geq C_{1} (d \log(d/ \delta))^{2p}$ we have
\begin{align*}
    \sup_{\theta \in \mathbb{B}(\theta^{*},r)} \|\nabla \mathcal{L}_n(\theta) - \nabla \mathcal{L}(\theta)\| \leq C_{2} (r + \|\theta^{*}\|)^{2p - 1} \sqrt{\frac{d + \log(1/\delta)}{n}}
\end{align*}
where $C_{1}, C_{2}$ are some universal constants. Since $\|\theta^{*}\|$ is bounded away from 0, the above bound concludes our claim in equation~\eqref{eq:concentration_generalized_linear_strong}. 
\subsubsection{Low signal-to-noise regime}
\label{sec:low_signal_generalized}
Now, we prove the generalized Łojasiewicz property~\eqref{eq:concentration_gradient_generalized_model} of the population loss $\mathcal{L}$ for the low signal-to-noise regime. Recall that, we assume $\theta^{*} = 0$. Now, we will demonstrate that for all $\theta \in \mathbb{B}(\theta^{*}, \rho)$ for some $\rho > 0$, we have
\begin{align*}
    \|\nabla \mathcal{L}(\theta)\| & \geq c_{2} (\mathcal{L}(\theta) - \mathcal{L}(\theta^{*}))^{1 - \frac{1}{2p}}.
\end{align*}
For the form of $\mathcal{L}(\theta)$, we have that
\begin{align*}
    &\nabla \mathcal{L}(\theta) = 2p(2p-1)!!(\theta-\theta^*) \|\theta - \theta^{*}\|^{2p - 2}, \\
    & \|\nabla \mathcal{L}(\theta)\| = 2p(2p-1)!!\|\theta - \theta^{*}\|^{2p - 1}.
\end{align*}
Also, due to equation~\eqref{eq:population_no_signal} we obtain that
\begin{align*}
    \left(\mathcal{L}(\theta) - \mathcal{L}(\theta^{*})\right)^{1-\frac{1}{2p}} &= \left(\frac{(2p - 1)!! \|\theta - \theta^{*}\|^{2p}}{2} \right)^{1-\frac{1}{2p}} \\
    &= \left(\frac{(2p - 1)!!}{2} \right)^{1-\frac{1}{2p}} \|\theta - \theta^{*}\|^{2p-1}.
\end{align*}
Thus, the Assumption~\ref{assump:nonPL} follows by selecting the constant $c_2 \leq \frac{2p(2p-1)!!}{\left(\frac{(2p - 1)!!}{2} \right)^{1-\frac{1}{2p}}}$.
\\
Next, with direct computation, we have
\begin{align*}
    \nabla^2 \mathcal{L}(\theta) = (2p(2p-1)!!)\|\theta - \theta^*\|^{2p-4} \left(\|\theta - \theta^*\|^2 I + (2p-4)(\theta - \theta^*)(\theta - \theta^*)^\top\right).
\end{align*}
Notice that, $(\theta - \theta^*)(\theta - \theta^*)^\top$ is a rank-1 matrix, so the maximum eigenvalue of $\|\theta - \theta^*\|^2 I + (2p-4)(\theta - \theta^*)(\theta - \theta^*)^\top$ is $(2p-3)\|\theta - \theta^*\|^2$, hence $\lambda_{\max}(\nabla^2\mathcal{L}(\theta)) = 2p(2p-3)(2p-1)!!\|\theta-\theta^*\|^{2p-2}$, which confirms our claim of Assumption~\ref{assump:smoothness}.

\subsection{Over-specified mixture model}
\label{sec:smoothness_PL_mixture_model}
We first present a proof of claim~\eqref{eq:geometry_mixture_model_low_signal} about the generalized PL property of the population log-likelihood function $\bar{\mathcal{L}}$ of low-signal-to-noise symmetric two-component Gaussian mixture model in Appendix~\ref{subsec:proof:eq:geometry_mixture_model_low_signal}. Then, in Appendix~\ref{subsec:proof:eq:Lipschitz_mixture_model_low_signal}, we present a proof of claim~\eqref{eq:Lipschitz_mixture_model_low_signal} about the local smoothness of $\bar{\mathcal{L}}$.
\subsubsection{Proof of claim~\eqref{eq:geometry_mixture_model_low_signal}}
\label{subsec:proof:eq:geometry_mixture_model_low_signal}
Recall that $\theta^* = 0$ and the population log-likelihood function is given by:
\begin{align*}
\bar{\mathcal{L}}(\theta) =  -\mathbb{E}_{X}\left[\log\left(\frac{1}{2}\phi(X|\theta, \sigma^2 I_d) + \frac{1}{2}\phi(X|-\theta, \sigma^2 I_d)\right)\right],
\end{align*}
where the outer expectation is taken with respect to $X \sim \mathcal{N}(\theta^{*}, I_{d})$. Using $Z$ to absorb the constant that is independent of $\theta$, we have
\begin{align*}
    \bar{\mathcal{L}}(\theta) 
    = & \frac{\|\theta\|^2}{2 \sigma^2} - \mathbb{E}_{X} \left[\log\left(\exp\left(-\frac{X^{\top} \theta}{\sigma^2}\right) + \exp\left(\frac{X^{\top}\theta}{\sigma^2}\right)\right)\right] + Z.\\
\end{align*}
It indicates that
\begin{align*}
    \bar{\mathcal{L}}(\theta) - \bar{\mathcal{L}}(\theta^{*}) = \frac{\|\theta\|^2}{2 \sigma^2} - \mathbb{E}_{X} \left[\log\left(\exp\left(-\frac{X^{\top} \theta}{\sigma^2}\right) + \exp\left(\frac{X^{\top}\theta}{\sigma^2}\right)\right)\right] + Z.
\end{align*}
To simplify the calculation, we perform a change of coordinates via an orthogonal matrix $R$ such that $R \theta = \|\theta\| e_{1}$ where $e_{1}$ denotes the first canonical basis in dimension $d$. By denoting $V = RX/ \sigma$, we have $V = (V_{1}, \ldots, V_{d}) \sim \mathcal{N}(0, I_{d})$. Therefore, we can rewrite the above equation as follows:
\begin{align*}
    \bar{\mathcal{L}}(\theta) - \bar{\mathcal{L}}(\theta^{*}) = \frac{\|\theta\|^2}{2 \sigma^2} - \mathbb{E}_{V_{1}} \left[\log\left(\exp\left(-\frac{V_{1}\|\theta\|}{\sigma}\right) + \exp\left(\frac{V_{1} \|\theta\|}{\sigma}\right)\right)\right] + Z,
\end{align*}
where the outer expectation is taken with respect to $V_{1} \sim \mathcal{N}(0, 1)$. By using the basic inequality $\exp( - x) + \exp(x) \geq 2 + x^2$ for all $x \in \mathbb{R}$, we find that
\begin{align*}
    \bar{\mathcal{L}}(\theta) - \bar{\mathcal{L}}(\theta^{*}) \leq \frac{\|\theta\|^2}{2 \sigma^2} - \mathbb{E}_{V_{1}} \left[\log\left(1 + \frac{V_{1}^2 \|\theta\|^2}{2 \sigma^2}\right)\right],
\end{align*}
Applying further the inequality $\log(1 + x) \geq x - \frac{x^2}{2}$ for all $x \geq 0$, we have
\begin{align}
    \bar{\mathcal{L}}(\theta) - \bar{\mathcal{L}}(\theta^{*}) \leq \frac{3 \|\theta\|^4}{8 \sigma^4}. \label{eq:key_inequality_population_first}
\end{align}
Now, we proceed to lower bound $\|\nabla \bar{\mathcal{L}}(\theta)\|$. Direct calculation leads to
\begin{align*}
    \nabla \bar{\mathcal{L}}(\theta) = \frac{1}{\sigma^2} \parenth{\theta - \Exs_{X} \parenth{X \tanh(\frac{X^{\top}\theta}{\sigma^2})}}.
\end{align*}
Direct application of the triangle inequality with $\|.\|$ norm indicates that
\begin{align*}
    \|\nabla \bar{\mathcal{L}}(\theta)\| \geq \frac{1}{\sigma^2} \parenth{\|\theta\| - \bigg\|\Exs_{X} \parenth{X \tanh(\frac{X^{\top}\theta}{\sigma^2})} \bigg\|}.
\end{align*}
Using the similar change of coordinates as we did earlier, we obtain that
\begin{align*}
    \bigg\|\Exs_{X} \parenth{X \tanh(\frac{X^{\top}\theta}{\sigma^2}} \bigg\| = \sigma \mathbb{E}_{V_{1}} \parenth{V_{1} \tanh(\frac{V_{1} \|\theta\|}{\sigma})},
\end{align*}
where the outer expectation is taken with respect to $V_{1} \sim \mathcal{N}(0, 1)$. An application of the inequality $x \tanh(x) \leq x^2 - \frac{x^4}{3} + \frac{2x^{6}}{15}$ for all $x \in \mathbb{R}$ leads to
\begin{align*}
    \sigma \mathbb{E}_{V_{1}} \parenth{V_{1} \tanh(\frac{V_{1} \|\theta\|}{\sigma})} & \leq \frac{\sigma^2}{\|\theta\|} \Exs_{V_{1}}\parenth{\frac{V_{1}^2\|\theta\|^2}{\sigma^2} - \frac{V_{1}^4 \|\theta\|^4}{3 \sigma^4} + \frac{2 V_{1}^6 \|\theta\|^6}{15 \sigma^6}} \\
    & = \|\theta\| - \frac{\|\theta\|^3}{\sigma^2} + \frac{2 \|\theta\|^5}{\sigma^4}.
\end{align*}
As long as $\|\theta\| \leq \frac{\sigma}{2}$, we have $2 \|\theta\|^5/\sigma^4 \leq \|\theta\|^3/(2\sigma^2)$. Putting the above inequalities together, we find that
\begin{align}
    \|\nabla \bar{\mathcal{L}}(\theta)\| \geq \frac{\|\theta\|^3}{2 \sigma^4} \label{eq:key_inequality_population_second}
\end{align}
when $\|\theta\| \leq \frac{\sigma}{2}$. Combining the results of equations~\eqref{eq:key_inequality_population_first} and~\eqref{eq:key_inequality_population_second}, we obtain
\begin{align*}
    \|\nabla \bar{\mathcal{L}}(\theta)\| \geq c_{2} \parenth{\bar{\mathcal{L}}(\theta) - \bar{\mathcal{L}}(\theta^{*})}^{\frac{3}{4}}
\end{align*}
when $\|\theta\| \leq \frac{\sigma}{2}$ where $c_{2}$ is some universal constant. Therefore, we obtain the conclusion of claim~\eqref{eq:geometry_mixture_model_low_signal}.
\subsubsection{Proof of claim~\eqref{eq:Lipschitz_mixture_model_low_signal}}
\label{subsec:proof:eq:Lipschitz_mixture_model_low_signal}
Direct calculation shows that
\begin{align*}
    \nabla^2 \bar{\mathcal{L}}(\theta) = \frac{1}{\sigma^2} \parenth{I_{d} - \frac{1}{\sigma^2} \Exs_{X} \parenth{X X^{\top} \text{sech}^2 \parenth{\frac{X^{\top}\theta}{\sigma^2}}}},
\end{align*}
where $\text{sech}^2(x) = \frac{4}{(\exp(-x) + \exp(x))^2}$ for all $x \in \mathbb{R}$. Via an application of the change of coordinates that we used earlier, we can write the above equation as:
\begin{align*}
    \nabla^2 \bar{\mathcal{L}}(\theta) = \frac{1}{\sigma^2} \parenth{I_{d} - \Exs_{V} \parenth{V V^{\top} \text{sech}^2 \parenth{\frac{V_{1} \|\theta\|}{\sigma}}}},
\end{align*}
where the outer expectation is taken with respect to $V = (V_{1}, V_{2}, \ldots, V_{d}) \sim \mathcal{N}(0, I_{d})$. The matrix $A = \Exs_{V} \parenth{V V^{\top} \text{sech}^2 \parenth{\frac{V_{1} \|\theta\|}{\sigma}}}$ is a diagonal matrix that $A_{11} = \Exs_{V_{1}} \brackets{V_{1}^2\text{sech}^2 \parenth{\frac{V_{1} \|\theta\|}{\sigma}}}$ and $V_{ii} = \Exs_{V_{1}} \brackets{\text{sech}^2 \parenth{\frac{V_{1} \|\theta\|}{\sigma}}}$ for all $2 \leq i \leq d$.

An application of the inequality $\text{sech}^2(x) \geq 1 - x^2$ for all $x \in \mathbb{R}$ leads to
\begin{align*}
    A_{11} & \geq \Exs_{V_{1}} \brackets{V_{1}^2 \parenth{1 - \frac{V_{1}^2 \|\theta\|^2}{\sigma^2}}} = 1 - \frac{3 \|\theta\|^2}{\sigma^2}, \\
    A_{ii} & \geq \Exs_{V_{1}} \brackets{1 - \frac{V_{1}^2 \|\theta\|^2}{\sigma^2}} = 1 - \frac{\|\theta\|^2}{\sigma^2},
\end{align*}
for all $i \neq 1$. These results indicate that
\begin{align*}
    \lambda_{\max}(\nabla^2 \bar{\mathcal{L}}(\theta)) \leq \frac{3 \|\theta\|^2}{\sigma^4}.
\end{align*}
As a consequence, we obtain the conclusion of claim~\eqref{eq:Lipschitz_mixture_model_low_signal}.
\subsection{Mixed linear regression model}
\label{sec:smoothness_PL_mixed_linear}
We first present a proof of claim~\eqref{eq:geometry_mixed_linear_low_signal} about the generalized PL property of the population log-likelihood function $\tilde{\mathcal{L}}$ of low-signal-to-noise symmetric two-component Gaussian mixed linear regression in Appendix~\ref{subsec:proof:eq:geometry_mixed_linear_low_signal}. Then, in Appendix~\ref{subsec:proof:eq:smooth_mixed_linear_low_signal}, we present a proof of claim~\eqref{eq:smooth_mixed_linear_low_signal} about the local smoothness of $\tilde{\mathcal{L}}$. The proof ideas of these claims are similar to those in the mixture model case. Here, we provide the proofs for the completeness.
\subsubsection{Proof of claim~\eqref{eq:geometry_mixed_linear_low_signal}}
\label{subsec:proof:eq:geometry_mixed_linear_low_signal}
When $\theta^* = 0$, we have that $Y\sim \mathcal{N}(0, \sigma^2)$. Furthermore, the population log-likelihood function $\tilde{\mathcal{L}}$ admits the following form:
\begin{align*}
    \tilde{\mathcal{L}}(\theta) 
    &= - \mathbb{E}_{X,Y}\left[\log\left(\frac{1}{2}\phi(Y|X^\top \theta, \sigma^2) + \frac{1}{2}\phi(Y|-X^\top\theta, \sigma^2)\right)\right].
\end{align*}
Using $Z$ to absorb the constant that is independent of $\theta$, when $X \sim \mathcal{N}(0, I_{d})$ and $Y|X \sim \mathcal{N}(Y|0, \sigma^2)$ we have
\begin{align*}
    \tilde{\mathcal{L}}(\theta) 
    = \frac{\|\theta\|^2}{2\sigma^2} - \mathbb{E}_{X,Y} \left[\log \left(\exp\left(\frac{Y\theta^{\top}X}{\sigma^2}\right) + \exp\left(\frac{-Y\theta^{\top}X}{\sigma^2}\right) \right)
    \right] + Z.
\end{align*}
Similar to the proof of claim~\eqref{eq:geometry_mixture_model_low_signal}, to bound the expectation in the above equation we can perform a change of coordinates using an orthonormal matrix $R$ such that $R\theta = \|\theta\|e_1$. Let $V = RX$, then $V = (V_{1}, V_{2}, \ldots, V_{d}) \sim \mathcal{N}(0,I_d)$. Moreover, since $\tilde{\mathcal{L}}(\theta^*)$ does not depend on $\theta$, we can write: 
\begin{align*}
    \tilde{\mathcal{L}}(\theta) - \tilde{\mathcal{L}}(\theta^*) 
    &= \frac{\|\theta\|^2}{2\sigma^2} - \mathbb{E}_{X,Y} \left[\log \left(\exp\left(\frac{Y\theta^{\top}X}{\sigma^2}\right) + \exp\left(\frac{-Y\theta^{\top}X}{\sigma^2}\right) \right)
    \right] + Z \\
    &= \frac{\|\theta\|^2}{2\sigma^2} - \mathbb{E}_{V_1,Y} \left[\log \left(\exp\left(\frac{Y\|\theta\|V_1}{\sigma^2}\right) + \exp\left(\frac{-Y\|\theta\|V_1}{\sigma^2}\right) \right)
    \right] + Z.
\end{align*}
Using the standard inequality $\exp(-x)+\exp(x)\geq 2+x^2$ for all $x \in \mathbb{R}$ we find that
\begin{align*}
    \tilde{\mathcal{L}}(\theta) - \tilde{\mathcal{L}}(\theta^*) 
    \leq \frac{\|\theta\|^2}{2\sigma^2} - \mathbb{E}_{V_1,Y} \left[\log \left(1 + \frac{Y^2\|\theta\|^2 V_1^2}{2\sigma^4} \right)
    \right].
\end{align*}
From here, the inequality $\log(1+x)\geq x - \frac{x^2}{2}$ for all $x \geq 0$ leads to
\begin{align}
    \tilde{\mathcal{L}}(\theta) - \tilde{\mathcal{L}}(\theta^*) 
    &\leq \frac{\|\theta\|^2}{2\sigma^2} - \mathbb{E}_{V_1,Y} \left[ \frac{Y^2\|\theta\|^2V_1^2}{2\sigma^4} - \frac{Y^4\|\theta\|^4V_1^4}{8\sigma^8}
    \right] \nonumber \\
    &= \frac{\|\theta\|^2}{2\sigma^2} -  \frac{\mathbb{E}_{Y} [Y^2]\|\theta\|^2\mathbb{E}_{V_1}[V_1^2]}{2\sigma^4} + \frac{\mathbb{E}_{Y}[Y^4]\|\theta\|^4 \mathbb{E}_{V_1}[V_1^4]}{8\sigma^8} \nonumber \\
    & = \frac{9}{8 \sigma^4} \|\theta\|^4. \label{eq:key_inequality_mixed_linear_first}
\end{align}
Now, we establish an lower bound for $\|\nabla \tilde{\mathcal{L}}(\theta)\|$. Indeed, direct calculation shows that
\begin{align*}
    \nabla \tilde{\mathcal{L}}(\theta) = \frac{1}{\sigma^2} \parenth{\theta - \Exs_{X, Y} \brackets{YX\tanh(\frac{Y\theta^{\top}X}{\sigma^2})}}
\end{align*}
Therefore, we find that $\|\nabla \tilde{\mathcal{L}}(\theta)\| \geq \frac{1}{\sigma^2} \parenth{\|\theta\| - \bigg\|\Exs_{X, Y} \brackets{YX\tanh(\frac{Y\theta^{\top}X}{\sigma^2})}\bigg\|}$. Using the earlier change of coordinates, we have
\begin{align*}
    \bigg\|\Exs_{X, Y} \brackets{YX\tanh(\frac{Y\theta^{\top}X}{\sigma^2})}\bigg\| = \Exs_{V_{1}, Y} \brackets{Y V_{1} \tanh(\frac{YV_{1}\|\theta\|}{\sigma^2})}.
\end{align*}
As we have the inequality $x \tanh(x) \leq x^2 - \frac{x^4}{3} + \frac{2x^6}{15}$ for all $x \in \mathbb{R}$, we obtain
\begin{align*}
    \bigg\|\Exs_{X, Y} \brackets{YX\tanh(\frac{Y\theta^{\top}X}{\sigma^2})}\bigg\| & \leq \frac{\sigma^2}{\|\theta\|} \Exs_{V_{1}, Y} \brackets{\frac{Y^2V_{1}^2\|\theta\|^2}{\sigma^4} - \frac{Y^4V_{1}^4\|\theta\|^4}{3 \sigma^{8}} + \frac{2 Y^6V_{1}^6\|\theta\|^6}{15 \sigma^{12}}} \\
    & \leq \|\theta\| - \frac{3}{\sigma^2}\|\theta\|^3 + \frac{30}{\sigma^4} \|\theta\|^5 \leq \|\theta\| - \frac{3}{2\sigma^2}\|\theta\|^3,
\end{align*}
as long as $\|\theta\| \leq \frac{\sigma}{\sqrt{20}}$. Putting the above results together, we find that
\begin{align}
    \|\nabla \tilde{\mathcal{L}}(\theta)\| \geq \frac{3}{2\sigma^4}\|\theta\|^3. \label{eq:key_inequality_mixed_linear_second}
\end{align}
A combination of the results from equation~\eqref{eq:key_inequality_mixed_linear_first} and~\eqref{eq:key_inequality_mixed_linear_second} indicate that
\begin{align*}
     \|\nabla \tilde{\mathcal{L}}(\theta)\| \geq c_{2} \parenth{\tilde{\mathcal{L}}(\theta) - \tilde{\mathcal{L}}(\theta^*)}^{3/4},
\end{align*}
for all $\|\theta\| \leq \frac{\sigma}{\sqrt{20}}$ where $c_{2}$ is some universal constant. As a consequence, we obtain the conclusion of claim~\eqref{eq:geometry_mixed_linear_low_signal}.
\subsubsection{Proof of claim~\eqref{eq:smooth_mixed_linear_low_signal}}
\label{subsec:proof:eq:smooth_mixed_linear_low_signal}
Similar to the proof of claim~\eqref{eq:Lipschitz_mixture_model_low_signal}, we have
\begin{align*}
    \nabla^2 \tilde{\mathcal{L}}(\theta) & = \frac{1}{\sigma^2} \parenth{I_{d} - \frac{1}{\sigma^2} \Exs_{X, Y} \brackets{Y^2XX^{\top} \sech^2 (\frac{Y\theta^{\top}X}{\sigma^2})}} \\
    & = \frac{1}{\sigma^2} \parenth{I_{d} - \frac{1}{\sigma^2} \Exs_{Y, V} \brackets{Y^2 V V^{\top} \sech^2(\frac{Y V_{1}\|\theta\|}{\sigma^2})}},
\end{align*}
where the second equality is from the change of coordinates $R = V X$ and $R$ is an orthogonal matrix such that $R \theta = \|\theta\| e_{1}$. Here, the outer expectation is taken with respect to $Y \sim \mathcal{N}(0, \sigma^2)$ and $V = (V_{1}, \ldots, V_{d}) \sim \mathcal{N}(0, I_{d})$. 

The matrix $B = \frac{1}{\sigma^2} \Exs_{Y, V} \brackets{Y^2 V V^{\top} \sech^2(\frac{Y V_{1}\|\theta\|}{\sigma^2})}$ is a diagonal matrix such that $B_{11} = \Exs_{Y, V_{1}} \brackets{Y^2 V_{1}^2 \sech^2(\frac{Y V_{1}\|\theta\|}{\sigma^2})}$ and $B_{ii} = \Exs_{Y, V_{1}} \brackets{Y^2 \sech^2(\frac{Y V_{1}\|\theta\|}{\sigma^2})}$ for all $i \neq 1$. Using the standard inequality $\sech^2(x) \geq 1 - x^2$ for all $x \in \mathbb{R}$ yields
\begin{align*}
    B_{11} & \geq \Exs_{Y, V_{1}} \brackets{Y^2V_{1}^2 - \frac{Y^4 V_{1}^4 \|\theta\|^2}{\sigma^4}} = \sigma^2 - 9 \|\theta\|^2, \\
    B_{ii} & \geq \Exs_{Y, V_{1}} \brackets{Y^2 - \frac{Y^4 V_{1}^2 \|\theta\|^2}{\sigma^4}} = \sigma^2 - 3 \|\theta\|^2,
\end{align*}
for all $i \neq 1$. Collecting the above results, we obtain
\begin{align*}
    \lambda_{\max}(\nabla^2 \tilde{\mathcal{L}}(\theta)) \leq \frac{9}{\sigma^2} \|\theta\|^2.
\end{align*}
Hence, we obtain the conclusion of claim~\eqref{eq:smooth_mixed_linear_low_signal}.
\section{Auxiliary results}
\label{sec:auxiliary_result}
\begin{lemma}\label{lem:obj_to_params}
If Assumption \ref{assump:nonPL} holds, then for all $\theta\in\mathbb{B}(\theta^*, \rho)$, we have that
\begin{align*}
    \|\theta - \theta^*\| \leq \frac{\alpha + 2}{c_2}(f(\theta) - f(\theta^*))^{\frac{1}{\alpha + 2}}.
\end{align*}
Furthermore, we have
\begin{align*}
    \|\nabla f(\theta)\| \geq c_2 \left(\frac{c_2}{\alpha + 2} \|\theta - \theta^*\|\right)^{\alpha + 1}.
\end{align*}
\end{lemma}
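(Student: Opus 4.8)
The plan is to first observe that the second inequality is an easy consequence of the first, so the real content lies in the distance bound $\|\theta - \theta^*\| \le \frac{\alpha+2}{c_2}(f(\theta) - f(\theta^*))^{1/(\alpha+2)}$. Indeed, once this is established, raising it to the power $\alpha+2$ gives $f(\theta) - f(\theta^*) \ge \left(\frac{c_2}{\alpha+2}\|\theta-\theta^*\|\right)^{\alpha+2}$, and substituting this into Assumption~\ref{assump:nonPL}, which reads $\|\nabla f(\theta)\| \ge c_2(f(\theta)-f(\theta^*))^{\frac{\alpha+1}{\alpha+2}}$, yields $\|\nabla f(\theta)\| \ge c_2\left(\frac{c_2}{\alpha+2}\|\theta-\theta^*\|\right)^{\alpha+1}$ exactly as claimed. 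Thus I would reduce everything to the distance bound.

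For the distance bound, I would write $\beta := \frac{\alpha+1}{\alpha+2}$ so that the Łojasiewicz inequality reads $\|\nabla f(\theta)\| \ge c_2(f(\theta)-f(\theta^*))^\beta$ with $1-\beta = \frac{1}{\alpha+2}$, and then run the standard Kurdyka--Łojasiewicz desingularization argument: bound the length of the gradient-flow trajectory from $\theta$ to $\theta^*$ by composing $f$ with the desingularizing function $t \mapsto t^{1-\beta}$. Concretely, let $x(\cdot)$ solve $\dot x(s) = -\nabla f(x(s))$ with $x(0) = \theta$, and set $u(s) := f(x(s)) - f(\theta^*)$, so that $u'(s) = -\|\nabla f(x(s))\|^2 \le 0$. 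A direct computation then gives $-\frac{d}{ds}u(s)^{1/(\alpha+2)} = \frac{1}{\alpha+2}u(s)^{-\beta}\|\nabla f(x(s))\|^2 \ge \frac{c_2}{\alpha+2}\|\nabla f(x(s))\|$, where the last inequality is precisely Assumption~\ref{assump:nonPL}. Integrating in $s$ bounds the trajectory length $\int_0^\infty \|\dot x(s)\|\,ds = \int_0^\infty \|\nabla f(x(s))\|\,ds$ by $\frac{\alpha+2}{c_2}u(0)^{1/(\alpha+2)} = \frac{\alpha+2}{c_2}(f(\theta)-f(\theta^*))^{1/(\alpha+2)}$, since $u(s)\to 0$; and because $\|\theta - \theta^*\|$ is at most the length of any curve joining $\theta$ to $\theta^*$, the bound follows.

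The main obstacle is the analytic bookkeeping needed to make the gradient-flow argument rigorous: one must verify that the trajectory stays inside $\mathbb{B}(\theta^*,\rho)$ so that Assumption~\ref{assump:nonPL} is in force along the whole flow, that it has finite length and therefore converges to a limit $\bar\theta$, and that this limit is $\theta^*$ rather than some other stationary point. The finite-length estimate derived above is itself what confines the trajectory to the ball (a bootstrap/continuity argument standard in KŁ theory), while at the limit $\nabla f(\bar\theta)=0$ forces $f(\bar\theta)=f(\theta^*)$ through the Łojasiewicz inequality; identifying $\bar\theta$ with $\theta^*$ then uses that $\theta^*$ is the minimizer of $f$ in the relevant neighborhood. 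An alternative that sidesteps the flow entirely is to run the one-dimensional reduction along the flow lines, i.e. to integrate $\left|\frac{d}{ds}(f-f^*)^{1-\beta}\right| \ge (1-\beta)c_2$, which is the cleanest way to see where the constant $\frac{\alpha+2}{c_2}$ originates.
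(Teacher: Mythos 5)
Your proposal is correct and follows essentially the same route as the paper: the paper also proves the distance bound by integrating the desingularized quantity $(f(\theta(t)) - f(\theta^*))^{1/(\alpha+2)}$ along the gradient flow $\dot\theta(t) = -\nabla f(\theta(t))$, lower-bounding its decay rate by $\frac{c_2}{\alpha+2}\|\nabla f(\theta(t))\|$ via Assumption~\ref{assump:nonPL}, and bounding $\|\theta(0)-\theta^*\|$ by the trajectory length, with the second inequality then read off directly from Assumption~\ref{assump:nonPL} as you do. The only difference is in the confinement step you flag as the main obstacle: rather than a KŁ bootstrap, the paper uses convexity of $f$ to get $\frac{d}{dt}\|\theta(t)-\theta^*\|^2 = -2\langle \theta(t)-\theta^*, \nabla f(\theta(t))\rangle \leq 0$, so the flow stays in $\mathbb{B}(\theta^*,\rho)$ and converges to $\theta^*$ immediately.
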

\begin{proof}
The proof idea originates from the proof of Theorem 27 in \citep{bolte2017error}. We start from the gradient flow:
\begin{align*}
    \frac{d\theta(t)}{dt} = - \nabla f(\theta(t)).
\end{align*}
By the convexity, we have that
\begin{align*}
    \frac{d\|\theta(t) - \theta^*\|_2^2}{dt} = 2 \left\langle \theta(t) - \theta^*, \frac{d\theta(t)}{dt}\right\rangle = -2\langle \theta(t) - \theta^*, \nabla f(\theta(t))\rangle\leq 0,
\end{align*}
which means if $\theta(0) \in \mathbb{B}(\theta^*, \rho)$, $\theta(t) \in \mathbb{B}(\theta^*, \rho), \forall t \geq 0$. Meanwhile, $\theta(t) \to \theta^*$ when $t\to\infty$. We then conclude the proof by
\begin{align*}
    \left(f(\theta(0)) - f(\theta^*)\right)^{\frac{1}{\alpha + 2}} = & \int_{\infty}^{0}d(f(\theta(t)) - f(\theta^*))^{\frac{1}{\alpha + 2}} \\
    = & \int_{0}^{\infty} \frac{f(\theta(t)) - f(\theta^*))^{\frac{1}{\alpha + 2} - 1}}{\alpha + 2} \|\nabla f(\theta(t))\|^2 dt\\
    \geq & \int_{0}^{\infty} \frac{c_2}{\alpha + 2}\|\nabla f(\theta(t))\| dt\\
    = & \int_{0}^\infty \frac{c_2}{\alpha + 2} \left\|\frac{d\theta(t)}{dt}\right\| dt\\
    = & \frac{c_2}{\alpha + 2}\|\theta(0) - \theta^*\|.
\end{align*}
The second argument can be directly obtained via Assumption \ref{assump:nonPL}, which concludes our proof.
\end{proof}
\begin{lemma}
\label{lem:convergence_gd}
Under Assumptions~\ref{assump:smoothness} and~\ref{assump:nonPL}, there exists a universal constant $c_0 > 0$ depending on the constants of these assumptions such that
\begin{align*}
    \|\theta_{\text{GD}}^t - \theta^*\|\leq \frac{c_0}{(\eta t)^{1/\alpha}},
\end{align*}
where $\theta_{\text{GD}}^{t + 1} = \theta_{\text{GD}}^{t} - \eta \nabla f(\theta_{\text{GD}}^{t})$ are the fixed-step size gradient descent iterates for minimizing the population loss function $f$. Furthermore, this bound is tight, means there are population loss functions $f$ satisfying Assumptions~\ref{assump:smoothness} and~\ref{assump:nonPL} and 
\begin{align*}
    \|\theta_{\text{GD}}^t - \theta^*\| \geq \frac{c_0}{(\eta t)^{1/\alpha}}.
\end{align*}
\end{lemma}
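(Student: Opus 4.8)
The plan is to prove the two assertions separately: the upper bound by a descent-plus-Łojasiewicz argument adapted to the degenerate (growing-Lipschitz) regime, and the tightness by an explicit one-dimensional example whose iterates can be tracked exactly.

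For the upper bound, first I would confine the iterates to the local ball by induction. Within $\mathbb{B}(\theta^*,\rho)$, Assumption~\ref{assump:smoothness} yields the uniform smoothness constant $L := c_1 \rho^\alpha$, so with step size $\eta \le 1/L$ the descent lemma (Lemma~\ref{lem:smooth_update}, i.e.\ Lemma~3.5 of~\citep{bubeck2015convex}) gives, writing $\Delta_t := f(\theta_{\text{GD}}^t) - f(\theta^*)$,
\[
    \Delta_{t+1} \le \Delta_t - \frac{\eta}{2}\|\nabla f(\theta_{\text{GD}}^t)\|^2 .
\]
Since $\{\Delta_t\}$ is then nonincreasing, Lemma~\ref{lem:obj_to_params} bounds $\|\theta_{\text{GD}}^t-\theta^*\|$ by a constant multiple of $\Delta_t^{1/(\alpha+2)} \le \Delta_0^{1/(\alpha+2)}$, so a sufficiently local initialization keeps every iterate inside $\mathbb{B}(\theta^*,\rho)$ and closes the induction.

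Next I would insert the generalized Łojasiewicz bound from Assumption~\ref{assump:nonPL}, namely $\|\nabla f(\theta_{\text{GD}}^t)\|^2 \ge c_2^2 \Delta_t^{\beta}$ with $\beta := \frac{2(\alpha+1)}{\alpha+2} \in (1,2)$, to obtain the scalar recursion $\Delta_{t+1} \le \Delta_t - \frac{\eta c_2^2}{2}\Delta_t^{\beta}$. The workhorse is the elementary fact that for $\beta>1$ this forces polynomial decay: setting $b_t := \Delta_t^{-(\beta-1)}$ and using $(1-x)^{-(\beta-1)} \ge 1+(\beta-1)x$ gives $b_{t+1} \ge b_t + (\beta-1)\frac{\eta c_2^2}{2}$, hence $\Delta_t \le \bigl(\tfrac{(\beta-1)\eta c_2^2}{2}\,t\bigr)^{-1/(\beta-1)}$. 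Because $\beta-1 = \frac{\alpha}{\alpha+2}$, this reads $\Delta_t = \mathcal{O}\bigl((\eta t)^{-(\alpha+2)/\alpha}\bigr)$, and feeding it back through Lemma~\ref{lem:obj_to_params} yields $\|\theta_{\text{GD}}^t-\theta^*\| \le \frac{\alpha+2}{c_2}\Delta_t^{1/(\alpha+2)} = \mathcal{O}\bigl((\eta t)^{-1/\alpha}\bigr)$, the claimed bound with a constant $c_0$ depending only on $c_1,c_2,\alpha$.

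For tightness I would take the explicit one-dimensional loss $f(\theta) = \frac{1}{\alpha+2}|\theta|^{\alpha+2}$ with $\theta^*=0$, which satisfies both assumptions with matching constant $\alpha$ since $|f'(\theta)| = |\theta|^{\alpha+1}$, $f''(\theta) = (\alpha+1)|\theta|^{\alpha}$, and $f(\theta)-f(0) = \frac{|\theta|^{\alpha+2}}{\alpha+2}$. The gradient-descent recursion becomes $\theta_{t+1} = \theta_t\bigl(1 - \eta\,\theta_t^{\alpha}\bigr)$, which for small $\theta_0>0$ with $\eta\theta_0^{\alpha}\le \tfrac12$ stays positive and decreasing. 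Setting $u_t := \theta_t^{-\alpha}$ and using $(1-x)^{-\alpha} \le 1 + C_\alpha x$ on $[0,\tfrac12]$ gives $u_{t+1} \le u_t + C_\alpha \eta$, so $\theta_t^{-\alpha} \le u_0 + C_\alpha \eta t$ and therefore $\|\theta_{\text{GD}}^t-\theta^*\| = \theta_t \ge c_0 (\eta t)^{-1/\alpha}$, matching the upper bound.

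The main obstacle is the degenerate descent analysis rather than either endpoint calculation: unlike the strongly convex or standard PL case there is no linear contraction, so I must carefully justify replacing the growing local Lipschitz constant $c_1\|\theta-\theta^*\|^\alpha$ by the uniform trajectory bound $c_1\rho^\alpha$ (via the induction tying monotone value decrease to ball-confinement through Lemma~\ref{lem:obj_to_params}), and then extract exactly the exponent $1/\alpha$ with the correct $\eta$-dependence from the sublinear recursion so that the upper and lower bounds agree. It is precisely this unavoidable $t^{-1/\alpha}$ rate, in sharp contrast with the linear contraction of the Polyak operator in Lemma~\ref{lincon}, that drives the computational comparison at the heart of the paper.
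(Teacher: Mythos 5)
Your core argument coincides with the paper's: both derive the one-step value recursion $\Delta_{t+1} \le \Delta_t - c\,\eta\,\Delta_t^{2-\frac{2}{\alpha+2}}$ from the descent lemma (Lemma~\ref{lem:smooth_update}) combined with Assumption~\ref{assump:nonPL}, solve it to obtain $\Delta_t = \mathcal{O}\bigl((\eta t)^{-(\alpha+2)/\alpha}\bigr)$ — your Bernoulli trick on $b_t = \Delta_t^{-\alpha/(\alpha+2)}$ is the same computation as the paper's telescoping of $1/\delta(\theta^t)^{\alpha/(\alpha+2)}$ with its elementary lemma $\frac{1}{\alpha}(1-x^{\alpha}) \ge x^{\alpha}(1-x)$ — and convert back to parameter distance through Lemma~\ref{lem:obj_to_params}; you also use the identical extremal example $f(\theta) = |\theta|^{\alpha+2}/(\alpha+2)$. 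You genuinely diverge in two places, one to your credit and one to your debit. For tightness, the paper passes to the continuous-time limit and solves the ODE $\dot\theta = -|\theta|^{\alpha+1}$, asserting that $t$ in the ODE corresponds to $\eta t$ in the discrete dynamics; your direct analysis of $u_t = \theta_t^{-\alpha}$ via $(1-x)^{-\alpha} \le 1 + C_\alpha x$ on $[0,\frac12]$ proves the discrete lower bound outright and is strictly more rigorous than what is printed (with the shared caveat that a bound of the form $\theta_t \ge c_0(\eta t)^{-1/\alpha}$ can only hold once $\eta t$ is bounded below, which both treatments gloss over).

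On ball confinement, however, your route has a circularity that the paper's avoids. You keep iterates in $\mathbb{B}(\theta^*,\rho)$ by arguing $\Delta_{t+1} \le \Delta_t \le \Delta_0$ and then invoking Lemma~\ref{lem:obj_to_params} at $\theta_{\text{GD}}^{t+1}$ — but that lemma's bound $\|\theta - \theta^*\| \le \frac{\alpha+2}{c_2}(f(\theta)-f(\theta^*))^{1/(\alpha+2)}$ is stated only for $\theta$ already in $\mathbb{B}(\theta^*,\rho)$, and the descent lemma at step $t$ itself requires smoothness along the segment $[\theta_{\text{GD}}^{t},\theta_{\text{GD}}^{t+1}]$, i.e.\ it requires that $\theta_{\text{GD}}^{t+1}$ has not escaped; as written, your induction assumes what it is trying to prove. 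This is fixable with a two-radius patch: maintain the invariant $\theta_{\text{GD}}^{t} \in \mathbb{B}(\theta^*,\rho/2)$, bound the step length by $\eta\|\nabla f(\theta_{\text{GD}}^{t})\| \le \eta c_1 (\rho/2)^{\alpha+1} \le \rho/2$ for $\eta$ small enough, so that $\theta_{\text{GD}}^{t+1} \in \mathbb{B}(\theta^*,\rho)$, and only then apply the descent lemma and Lemma~\ref{lem:obj_to_params} to pull $\theta_{\text{GD}}^{t+1}$ back into $\mathbb{B}(\theta^*,\rho/2)$ provided $\Delta_0$ is small. The paper sidesteps all of this by proving that the distance $\|\theta_{\text{GD}}^{t} - \theta^*\|$ is itself monotone, via cocoercivity (its Corollary~\ref{cor:gradient_norm_bound}) applied at the pair $(\theta_{\text{GD}}^{t},\theta^*)$, both of which lie in the ball by the induction hypothesis — at the price of invoking convexity of $f$, which the paper assumes implicitly throughout (including in the proof of Lemma~\ref{lem:obj_to_params} that you also rely on).
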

\begin{proof}
Our proof idea originates from \citep{mei2021leveraging} and we include it for completeness. We start from the following lemma.
\begin{lemma}[Lemma 3.5 in \citep{bubeck2015convex}]
\label{lem:smooth_update}
If $f$ is $\beta$-smooth, then $\forall \theta_1, \theta_2\in\mathbb{R}^d$, we have that
\begin{align*}
    f(\theta_1) - f(\theta_2) \leq \langle\nabla f(\theta_1), \theta_1-\theta_2\rangle - \frac{1}{2\beta}\|\nabla f(\theta_1) - \nabla f(\theta_2)\|^2.
\end{align*}
\end{lemma}
\begin{corollary} 
\label{cor:gradient_norm_bound}
If $f$ is $\beta$-smooth, then $\forall \theta_1, \theta_2\in\mathbb{R}^d$, we have that
\begin{align*}
    \frac{1}{\beta}\|\nabla f(\theta_1) - \nabla f(\theta_2)\|^2 \leq \langle \nabla f(\theta_1) - \nabla f(\theta_2), x-y\rangle.
\end{align*}
\end{corollary}
Notice that, if $\theta_1, \theta_2 \in \mathbb{B}(\theta^*, r)$, then $\|\nabla f(\theta_1) - \nabla f(\theta_2)\| \leq c_1 r^{\alpha}\|\theta_1 - \theta_2\|$, which means $f$ is $c_1 r^\alpha$-smooth in $\mathbb{B}(\theta^*, r)$. We assume the step-size satisfies $0 < \eta < \frac{2}{c_1 r^{\alpha}}$, and define the "effective step-size" $\frac{1}{\beta} := \eta(2 - c_1 r^\alpha \eta) > 0$ where $\beta > c_1 r^\alpha$. If $\theta_{\text{GD}}^t \in \mathbb{B}(\theta^*, r)$, we have that
\begin{align*}
    \|\theta^{t+1} - \theta^*\|^2 - \|\theta^t - \theta^*\|^2
    = & \eta^2 \|\nabla f(\theta^t)\|^2 - 2\eta \langle \nabla f(\theta^t), \theta^t - \theta^*\rangle \\
    \leq & -\frac{1}{\beta}\langle \nabla f(\theta^t), \theta^t - \theta^*\rangle \leq 0,
\end{align*}
where the last inequality is due to Corollary \ref{cor:gradient_norm_bound}. Hence, $\theta_{\text{GD}}^{t+1}\in \mathbb{B}(\theta^*, r)$. Furthermore, from the generalized smoothness property of the function $f$ in Assumption~\ref{assump:smoothness} we have
\begin{align*}
    f(\theta^{t+1}) - f(\theta^t) \leq & \nabla f(\theta^t)^\top (\theta^{t+1} - \theta^t) + \frac{c_1 r^{\alpha}}{2}\|\theta^{t+1} - \theta^t\|^2\\
    = & -\frac{1}{2\beta}\|\nabla f(\theta^t)\|^2\\
    \leq & -\frac{c_2^2}{2\beta} (f(\theta^t) - f(\theta^*))^{2 - \frac{2}{\alpha + 2}} \leq 0.
\end{align*}
\begin{lemma}
Given $\alpha > 0$, $\forall x\in [0, 1]$,
\begin{align*}
    \frac{1}{\alpha}(1-x^{\alpha}) \geq x^{\alpha}(1-x).
\end{align*}
\end{lemma}
\begin{proof}
Consider the mapping $g:x\mapsto \frac{1}{\alpha}(x^{\alpha} - 1) - x^{\alpha}(1-x). $. We can see $g(0) = \frac{1}{\alpha}$ and $g(1) = 0$. Moreover,
\begin{align*}
    \nabla g(x) = - (\alpha + 1)(x^{\alpha-1} - x^{\alpha}) \leq 0,
\end{align*}
which concludes the proof.
\end{proof}
Define $\delta(\theta^t):= f(\theta^t) - f(\theta^*)$, we have that
\begin{align*}
    \frac{1}{\delta(\theta^t)^{\frac{\alpha}{\alpha + 2}}} = & \frac{1}{\delta(\theta^1)^{\frac{\alpha}{\alpha + 2}}} + \sum_{s=1}^{t-1} \left(\frac{1}{\delta(\theta^s)^{\frac{\alpha}{\alpha + 2}}} - \frac{1}{\delta(\theta^{s+1})^{\frac{\alpha}{\alpha + 2}}}\right)\\
    = & \frac{1}{\delta(\theta^1)^{\frac{\alpha}{\alpha + 2}}} + \sum_{s=1}^{t-1} \frac{\frac{\alpha}{\alpha + 2} }{\delta(\theta^{s+1})^{\frac{\alpha}{\alpha + 2}}} \cdot \frac{\alpha + 2}{\alpha} \cdot \left(1-\left(\frac{\delta(\theta^{s+1})}{\delta(\theta^s)}\right)^{\frac{\alpha}{\alpha + 2}}\right)\\
    \geq & \frac{1}{\delta(\theta^1)^{\frac{\alpha}{\alpha + 2}}} + \sum_{s=1}^{t-1} \frac{\frac{\alpha}{\alpha + 2} }{\delta(\theta^{s+1})^{\frac{\alpha}{\alpha + 2}}} \cdot \left(\frac{\delta(\theta^{s+1})}{\delta(\theta^s)}\right)^{\frac{\alpha}{\alpha + 2}} \left( 1 - \frac{\delta(\theta^{s+1})}{\delta(\theta^s)}\right)\\
    = & \frac{1}{\delta(\theta^1)^{\frac{\alpha}{\alpha + 2}}} + \sum_{s=1}^{t-1} \frac{\frac{\alpha}{\alpha + 2} }{\delta(\theta^{s})^{2 - \frac{2}{\alpha + 2}}} \cdot (\delta(\theta^{s}) - \delta(\theta^{s+1}))\\
    \geq & \frac{1}{\delta(\theta^1)^{\frac{\alpha}{\alpha + 2}}} + \sum_{s=1}^{t-1} \frac{\frac{\alpha}{\alpha + 2} }{\delta(\theta^{s})^{2 - \frac{2}{\alpha + 2}}} \cdot \frac{c_2^2}{2\beta} (\delta(\theta^t))^{2 - \frac{2}{\alpha + 2}}\\
    = & \frac{1}{\delta(\theta^1)^{\frac{\alpha}{\alpha + 2}}} + \sum_{s=1}^{t-1} \frac{c_2^2\left(\frac{\alpha}{\alpha + 2}\right)}{2\beta}\\
    = & \frac{1}{\delta(\theta^1)^{\frac{\alpha}{\alpha + 2}}} + \frac{c_2^2\left(\frac{\alpha}{\alpha + 2}\right)}{2\beta}\cdot(t-1).
\end{align*}
We can conclude that
\begin{align*}
    f(\theta^t) - f(\theta^*) \leq \left[\frac{1}{\left(f(\theta^1) - f(\theta^*)\right)^{\frac{\alpha}{\alpha + 2}}} + \frac{c_2^2\left(\frac{\alpha}{\alpha + 2}\right)}{2\beta}\cdot(t-1)\right]^{- \frac{\alpha + 2}{\alpha}}\leq C (\eta \cdot t)^{-\frac{\alpha + 2}{\alpha}},
\end{align*}
where $C$ is some universal constant. Combined Lemma~\ref{lem:obj_to_params} with the upper bound of $f(\theta^t) - f(\theta^*)$, we obtain that
$\|\theta^t - \theta^*\| \leq c_{0} (\eta \cdot t)^{-1/\alpha}$ where $c_{0}$ is some universal constant. As a consequence, we reach the upper bound stated in Lemma~\ref{lem:convergence_gd}.

For the tightness, consider the function $f:\mathbb{R}\to \mathbb{R}$, $f(\theta) = \frac{|\theta|^{\alpha + 2}}{\alpha + 2}$, which satisfies Assumptions~\ref{assump:smoothness} and~\ref{assump:nonPL}. Consider the continuous limit of the fixed-step size gradient descent (i.e., the limit $\eta \to 0$) starting from $\theta^0 = 1$, which corresponds to the following ODE:
\begin{align*}
    \frac{d \theta}{dt} = - |\theta|^{\alpha + 1}, \quad \theta(0) = 1.
\end{align*}
The solution of the ODE can be written as:
\begin{align*}
    \theta(t) = (t + 1)^{-1/\alpha}.
\end{align*}
Notice that the $t$ in the solution of ODE is equivalent to $\eta t$ in the gradient descent dynamics, which concludes the proof.
\end{proof}

\section{Beyond homogeneous assumptions}
\label{sec:inhomogenous}
In this Appendix, we provide a brief discussion on the behaviors of the Polyak step size gradient descent iterates when the constants in Assumptions~\ref{assump:smoothness} and~\ref{assump:nonPL} are different. In particular, we consider the following two-dimensional population loss function $f(\theta) = \theta_{1}^2 + \theta_{2}^4$ for all $\theta = (\theta_{1}, \theta_{2}) \in \mathbb{R}^2$. Under this case, the optima is $(0, 0)$, and the updates of the Polyak step size gradient descent algorithm are given by:
\begin{align*}
    \theta_{1}^{t + 1} & = \theta_{1}^{t} - \frac{(\theta_{1}^{t})^3 + \theta_{1}^{t} (\theta_{2}^{t})^4}{2(\theta_{1}^{t})^2 + 8 (\theta_{2}^{t})^6} = \frac{(\theta_1^t)^3 + \theta_1^t (\theta_2^t)^4(8(\theta_2^t)^2 - 1)}{2(\theta_{1}^{t})^2 + 8 (\theta_{2}^{t})^6}, \\
    \theta_{2}^{t + 1} & = \theta_{2}^{t} - \frac{ (\theta_{1}^{t})^2 (\theta_{2}^{t})^3 +  (\theta_{2}^{t})^7}{(\theta_{1}^{t})^2 + 4 (\theta_{2}^{t})^6} = \frac{3(\theta_2^t)^7 + (\theta_1^t)^2 \theta_{2}^{t} (1 - (\theta_2^t)^2)}{(\theta_{1}^{t})^2 + 4 (\theta_{2}^{t})^6}.
\end{align*}
Consider the local convergence in $\mathbb{B}(0, \rho)$ for some sufficiently small radius $\rho$, such that $(\theta_2^t)^2 \ll 1/8$, which corresponds to the approximate update:
\begin{align*}
    \theta_1^{t+1} \approx & ~\theta_1^t \cdot \frac{(\theta_1^{t})^2 - (\theta_2^{t})^4}{2(\theta_{1}^{t})^2 + 8 (\theta_{2}^{t})^6},\\
    \theta_2^{t+1} \approx & ~\theta_2^t \cdot \frac{(\theta_1^t)^2 + 3(\theta_2^t)^6}{(\theta_{1}^{t})^2 + 4 (\theta_{2}^{t})^6}.
\end{align*}
For $\theta_1$, the update is only stable when $\theta_1^t \geq C (\theta_2^t)^2$ where $C$ is some universal constant. However, in this regime, $\theta_2$ can converge slowly, as
\begin{align*}
    \frac{(\theta_1^t)^2 + 3(\theta_2^t)^6}{(\theta_{1}^{t})^2 + 4 (\theta_{2}^{t})^6} = 1 - \mathcal{O}( (\theta_2^t)^2) \to 1\quad (\mathrm{as} \quad \theta_2^t \to 0).
\end{align*}
On the other hand, if we want $\theta_2$ to converge linearly, we need $\theta_1^t = \mathcal{O}((\theta_2^t)^3)$. In this regime, the update of $\theta_1$ can be unstable, as
\begin{align*}
    \frac{(\theta_1^{t})^2 - (\theta_2^{t})^4}{2(\theta_{1}^{t})^2 + 8 (\theta_{2}^{t})^6} \geq C_{1} (\theta_2^t)^{-2}
\end{align*}
where $C_{1}$ is some constant. Hence, it's pretty hard to characterize the behaviour of Polyak step-size gradient descent iterates when the constants in Assumption~\ref{assump:smoothness} and~\ref{assump:nonPL} are different. We leave the understanding of this case as an interesting future direction.
\bibliographystyle{abbrv}
\bibliography{Nhat}
\end{document}